\pdfoutput=1
\documentclass{article}

\PassOptionsToPackage{numbers, compress}{natbib}

\usepackage[final]{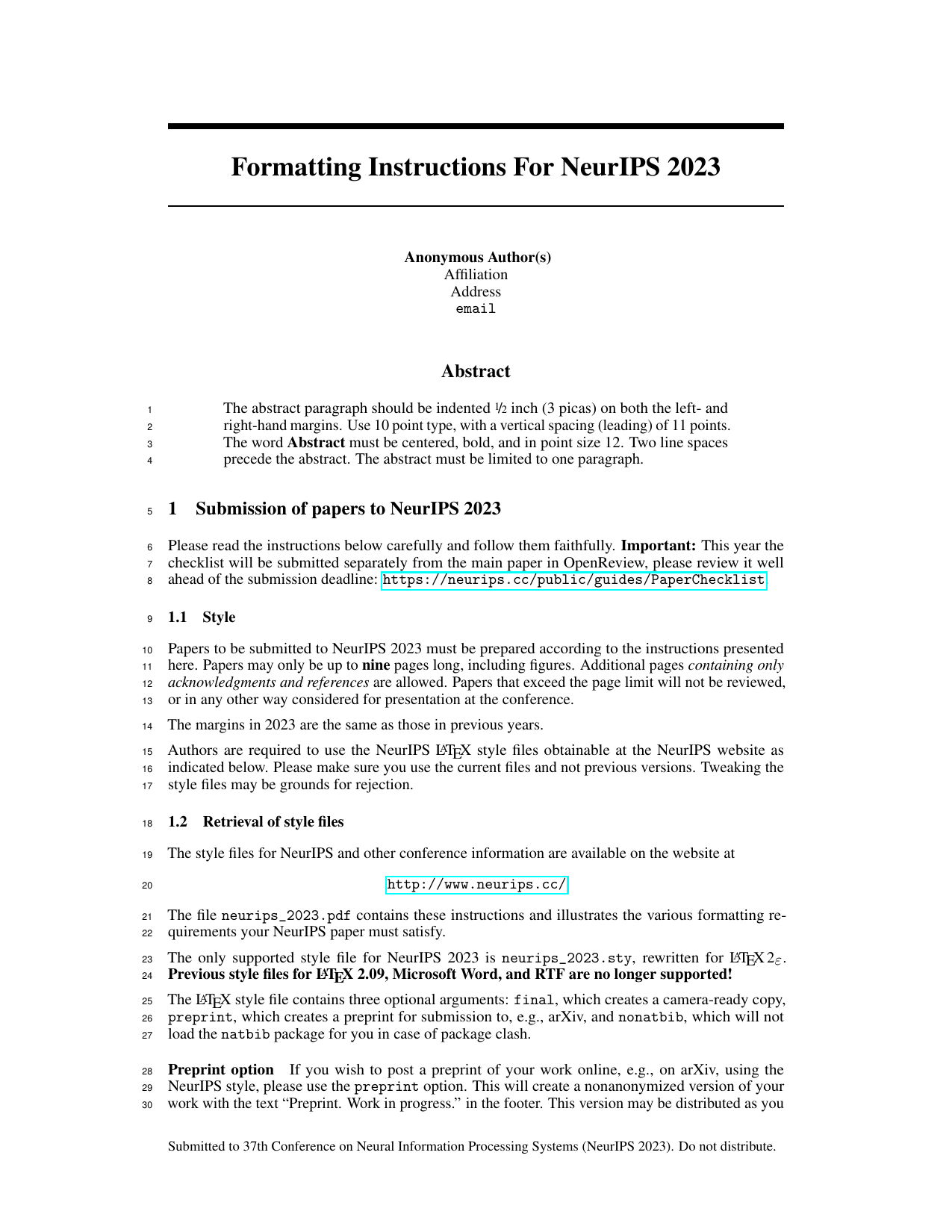}




\usepackage{annotate-equations}
\usepackage[utf8]{inputenc} 
\usepackage[T1]{fontenc}    
\usepackage{hyperref}       
\usepackage{url}            
\usepackage{booktabs}       
\usepackage{amsfonts}       
\usepackage{nicefrac}       
\usepackage{microtype}      
\usepackage{xcolor}         
\usepackage{enumitem}
\usepackage{amsthm}
\usepackage{amsmath}
\usepackage{bm}
\usepackage{amssymb}
\usepackage{mathtools}
\usepackage[ruled, linesnumbered]{algorithm2e}
\usepackage{makecell}
\usepackage{multirow}
\usepackage{subfigure} 
\usepackage{booktabs}
\usepackage{wrapfig}
\usepackage{extarrows}
\usepackage{colortbl}
\usepackage{tikz}

\newcommand{\modelname}{E\scalebox{0.8}{AGLE}}
\newcommand{\ie}{\textit{i.e.}}
\newcommand{\eg}{\textit{e.g.}}
\newcommand{\st}{\textit{s.t.}}

\newcommand{\etc}{\textit{etc.}}
\newcommand*{\dif}{\mathop{}\!\mathrm{d}}

\definecolor{mygray}{RGB}{230,230,230}

\theoremstyle{definition}
\newtheorem{asm}{Assumption}

\newtheorem{prop}{Proposition}

\newtheorem{Prop}{Proposition}
\newtheorem{Lemma}{Lemma}



\title{Environment-Aware Dynamic Graph Learning for Out-of-Distribution Generalization}

%


\author{%
  {Haonan Yuan$^{1,2}$, Qingyun Sun$^{1,2}$\thanks{Corresponding author}, Xingcheng Fu$^{3}$, Ziwei Zhang$^{4}$, Cheng Ji$^{1,2}$,}\\
  \textbf{Hao Peng$^{1}$, Jianxin Li$^{1,2}$} \\
  $^{1}$Beijing Advanced Innovation Center for Big Data and Brain Computing, Beihang University\\
  $^{2}$School of Computer Science and Engineering, Beihang University\\
  $^{3}$Key Lab of Education Blockchain and Intelligent Technology, Guangxi Normal University\\  
  $^{4}$Department of Computer Science and Technology, Tsinghua University\\
  \texttt{\{yuanhn,sunqy,jicheng,penghao,lijx\}@buaa.edu.cn} \\
  \texttt{fuxc@gxnu.edu.cn, zwzhang@tsinghua.edu.cn}\\
}

\begin{document}

\maketitle

\begin{abstract}

Dynamic graph neural networks (DGNNs) are increasingly pervasive in exploiting spatio-temporal patterns on dynamic graphs. However, existing works fail to generalize under distribution shifts, which are common in real-world scenarios. As the generation of dynamic graphs is heavily influenced by latent environments, investigating their impacts on the out-of-distribution (OOD) generalization is critical. However, it remains unexplored with the following two major challenges: \textbf{(1)} How to properly model and infer the complex environments on dynamic graphs with distribution shifts? \textbf{(2)} How to discover invariant patterns given inferred spatio-temporal environments? To solve these challenges, we propose a novel \textbf{E}nvironment-\textbf{A}ware dynamic \textbf{G}raph \textbf{LE}arning (\textbf{\modelname}) framework for OOD generalization by modeling complex coupled environments and exploiting spatio-temporal invariant patterns. Specifically, we first design the environment-aware EA-DGNN to model environments by multi-channel environments disentangling. Then, we propose an environment instantiation mechanism for environment diversification with inferred distributions. Finally, we discriminate spatio-temporal invariant patterns for out-of-distribution prediction by the invariant pattern recognition mechanism and perform fine-grained causal interventions node-wisely with a mixture of instantiated environment samples. Experiments on real-world and synthetic dynamic graph datasets demonstrate the superiority of our method against state-of-the-art baselines under distribution shifts. To the best of our knowledge, we are the first to study OOD generalization on dynamic graphs from the environment learning perspective.

\end{abstract}

\section{Introduction}

Dynamic graphs are ubiquitous in the real world, like social networks~\cite{berger2006framework, greene2010tracking}, financial transaction networks~\cite{nascimento2021dynamic, zhang2021dyngraphtrans}, traffic networks~\cite{li2023dynamic,zhou2020foresee,zhou2020riskoracle}, \etc, where the graph topology evolves as time passes. Due to their complexity in both spatial and temporal correlation patterns, wide-ranging applications such as relation prediction~\cite{kazemi2020representation,proskurnikov2017tutorial}, anomaly detection~\cite{cai2021structural, ranshous2015anomaly}, \etc~are rather challenging. With excellent expressive power, dynamic graph neural networks (DGNNs) achieve outstanding performance in dynamic graph representation learning by combining the merits of graph neural network (GNN) based models and sequential-based models.

Despite the popularity, most existing works are based on the widely accepted I.I.D. hypothesis, \ie, the training and testing data both follow the independent and identical distribution, which fails to generalize well~\cite{shen2021towards} under the out-of-distribution (OOD) shifts in dynamic scenarios. 
Figure~\ref{fig:example} illustrates a toy example demonstrating the OOD shifts on a dynamic graph. The failure lies in: (1) the exploited predictive patterns have changed for in-the-wild extrapolations, causing spurious correlations to be captured~\cite{zhang2022dynamic} and even strengthened with spatio-temporal convolutions. (2) performing causal intervention and inference without considering the impact of environments, introducing external bias of label shifting~\cite{yu2023mind}. These can be explained by SCM model~\cite{pearl2009causal, pearl2010causal, pearl2018book} in Figure~\ref{fig:scm}.


\begin{figure}[t]
\centering
\subfigure[A toy example of distribution shifts on dynamic graphs.]{
\includegraphics[width=0.53\linewidth]{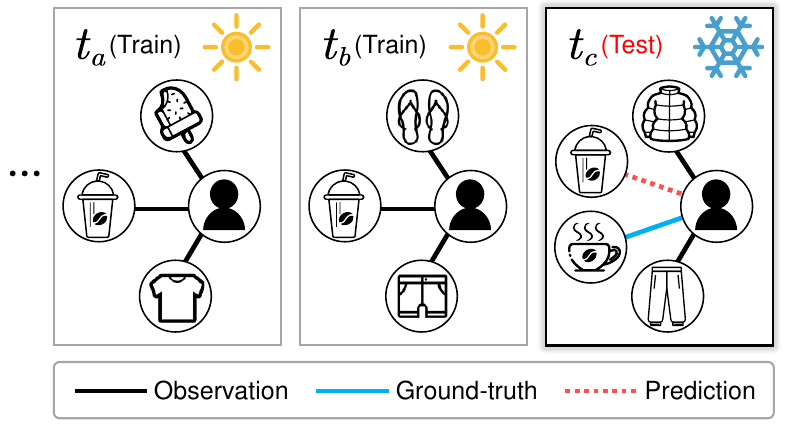}
\label{fig:example}
}
\subfigure[The SCM model on dynamic graphs.]{
\includegraphics[width=0.43\linewidth]{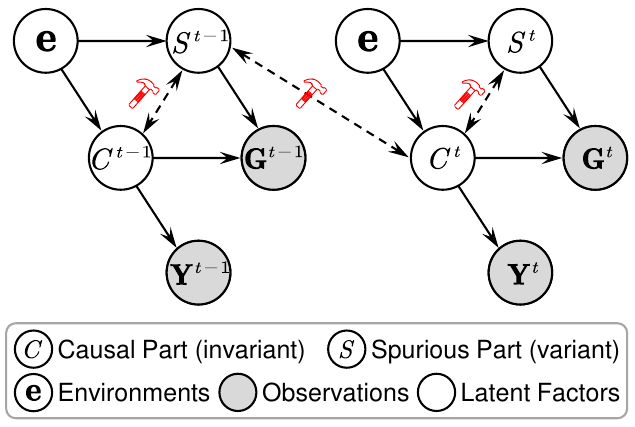}
\label{fig:scm}
}
\centering
\caption{(a) shows the ``user-item'' interactions with heavy distribution shifts when the underlying environments change  (\eg, seasons). A model could mistakenly predict that users would purchase an Iced Americano instead of a Hot Latte if it fails to capture spatio-temporal invariant patterns among spurious correlations. (b) analyzes the underlying cause of distribution shifts on dynamic graphs. It is broadly convinced~\cite{pearl2009causal, peters2017elements} that the spurious correlations of $C$ (coffee) $\leftrightarrow S$ (cold drink) within and between graph snapshots lead to poor OOD generalization under distribution shifts, which need to be filtered out via carefully investigating the impact of latent $\mathbf{e}$. (Detailed discussions in Appendix~\ref{sec:scm})}
\label{fig:egandscm}
\end{figure}

As the generation of dynamic graphs is influenced by complicated interactions of multiple varying latent environment factors~\cite{wu2022handling, rojas2018invariant, buhlmann2018invariance, gong2016domain, arjovsky2019invariant,sun2021sugar,sun2022graph,li2023adaptive}, like heterogeneous neighbor relations, multi-community affiliations, \etc, investigating the roles environments play in OOD generalization is of paramount importance. To tackle the issues above, we study the problem of generalized representation learning under OOD shifts on dynamic graphs in an environment view by carefully exploiting spatio-temporal invariant patterns. However, this problem is highly challenging in:
\begin{itemize}[leftmargin=1.5em]
    \item How to understand and model the way latent environments behave on dynamic graphs over time?
    \vspace{-0.35em}
    \item How to infer the distribution of underlying environments with the observed instances?
    \vspace{-0.35em}
    \item How to recognize the spatio-temporal invariant patterns over changing surrounding environments?
\end{itemize}
To address these challenges, we propose a novel \textbf{E}nvironment-\textbf{A}ware dynamic \textbf{G}raph \textbf{LE}arning (\textbf{\modelname}) framework for OOD generalization, which handles OOD generalization problem by exploiting spatio-temporal invariant patterns, thus filtering out spurious correlations via causal interventions over time. 
Our \modelname~solves the aforementioned challenges and achieves the OOD generalization target as follows. 
\textbf{Firstly}, to shed light on environments, we design the environment-aware EA-DGNN to model environments by multi-channel environments disentangling. It can be considered as learning the disentangled representations under multiple spatio-temporal correlated environments, which makes it possible to infer environment distributions and exploit invariant patterns. 
\textbf{Then}, we propose an environment instantiation mechanism for environment diversification with inferred environment distributions by applying the multi-label variational inference. 
This endows our \modelname~with higher generalizing power of potential environments. 
\textbf{Lastly}, we propose a novel invariant pattern recognition mechanism for out-of-distribution prediction that satisfies the Invariance Property and Sufficient Condition with theoretical guarantees. 
We then perform fine-grained causal interventions with a mixture of observed and generated environment instances to minimize the empirical and invariant risks. 
We optimize the whole framework by training with the adapted min-max strategy, encouraging \modelname~to generalize to diverse environments while learning on the spatio-temporal invariant patterns. 
Contributions of this paper are summarized as follows:
\begin{itemize}[leftmargin=1.5em]
    \item We propose a novel framework named \modelname~for OOD generalization by exploiting spatio-temporal invariant patterns with respect to environments. To the best of our knowledge, this is the first trial to explore the impact of environments on dynamic graphs under OOD shifts.
    \item We design the Environment ``Modeling-Inferring-Discriminating-Generalizing'' paradigm to endow \modelname~with higher extrapolation power for future potential environments. 
    \modelname~can carefully model and infer underlying environments, thus recognizing the spatio-temporal invariant patterns and performing fine-grained causal intervention node-wisely with a mixture of observed and generated environment instances, which generalize well to diverse OOD environments. 
    \item Extensive experiments on both real-world and synthetic datasets demonstrate the superiority of our method against state-of-the-art baselines for the future link prediction task.
\end{itemize}

\begin{figure}[t]
    \centering
    \includegraphics[width=1\linewidth]{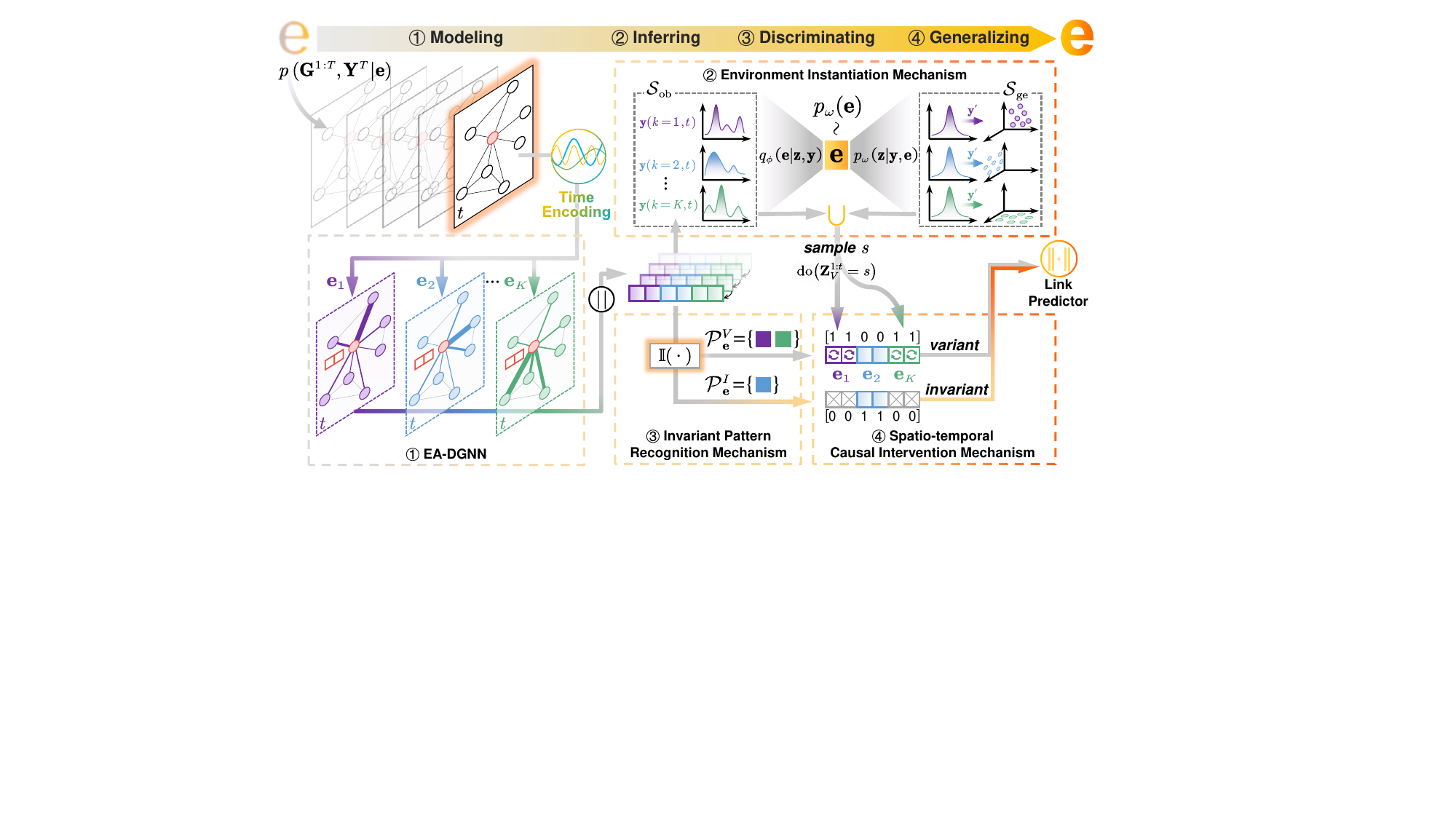}
    \caption{The framework of \modelname. Our proposed method jointly optimizes the following modules: (1) For a given dynamic graph generated under latent environments, the EA-DGNN first models environment by multi-channel environments disentangling. (2) The environment instantiation mechanism then infers the latent environments, followed by environment diversification with inferred distributions. (3) The invariant pattern recognition mechanism discriminates the spatio-temporal invariant patterns for OOD prediction. (4) Finally, \modelname~generalizes to diverse environments by performing fine-grained causal interventions node-wisely with a mixture of environment instances. The Environment ``Modeling-Inferring-Discriminating-Generalizing'' paradigm endows \modelname~with higher generalizing power for future potential environments. }
    \label{fig:model}
\end{figure}

\section{Problem Formulation}\label{sec:problem}

In this section, we formulate the OOD generalization problem on dynamic graphs. 
Random variables are denoted as $\mathbf{bold}$ symbols while their realizations are denoted as $italic$ letters. Notation with its description can be found in Appendix~\ref{sec:notation}.

\textbf{Dynamic Graph Learning.} Denote a dynamic graph as a set of discrete graphs snapshots $\mathbf{DG} = \{ \mathcal{G} \}_{t=1}^{T}$, where $T$ is the time length~\cite{zhang2022dynamic, manessi2020dynamic}, $\mathcal{G}^{t} = (\mathcal{V}^{t}, \mathcal{E}^{t})$ is the graph snapshot at time $t$, $\mathcal{V}^{t}$ is the node set and $\mathcal{E}^{t} $ is the edge set. 
Let $\mathbf{A}^{t} \in \{0,1\}^{N \times N}$ be the adjacency matrix and $\mathbf{X}^{t} \in \mathbb{R}^{N \times d}$ be the matrix of node features, where $N = | \mathcal{V}^t |$ denotes the number of nodes and $d$ denotes the dimensionality. 
As the most challenging task on dynamic graphs, future link prediction task aims to train a predictor $f_{{\boldsymbol{\theta}}}: \mathcal{V} \times \mathcal{V} \mapsto \{0,1\}^{N \times N}$ that predicts the existence of edges at time $T+1$ given historical graphs $\mathcal{G}^{1:T}$. 
Concretely, $f_{{\boldsymbol{\theta}}} = w\circ g$ is compound of a DGNN $w(\cdot)$ for learning node representations and a link predictor $g(\cdot)$ for predicting links, \ie, $\mathbf{Z}^{1:T} = w(\mathcal{G}^{1:T})$ and $\hat{{Y}}^{T} = g(\mathbf{Z}^{1:T})$.
The widely adopted Empirical Risk Minimization (ERM)~\cite{vapnik1999overview} learns the optimal
$f_{{\boldsymbol{\theta}}}^\star$ as follows:
\begin{equation}\label{eq:linkpred}
    \min_{{\boldsymbol{\theta}}} \mathbb{E}_{(\mathcal{G}^{1:T}, {Y}^{T}) \sim p(\mathbf{G}^{1:T}, \mathbf{Y}^{T})} \left [ \ell \left (f_{{\boldsymbol{\theta}}}\left (\mathcal{G}^{1:T}\right), {Y}^{T}\right )\right ].
\end{equation}

\textbf{OOD Generalization with Environments.} The predictor $f_{{\boldsymbol{\theta}}}^\star$ learned in Eq. \eqref{eq:linkpred} may not generalize well to the testing set when there exist distribution shifts, \ie, $p_\mathrm{test}(\mathbf{G}^{1:T}, \mathbf{Y}^{T}) \ne p_\mathrm{train}(\mathbf{G}^{1:T}, \mathbf{Y}^{T}) $.  
A broadly accepted assumption is that the distribution shifts on graphs are caused by multiple latent environments $\mathbf{e}$~\cite{wu2022handling, rojas2018invariant, buhlmann2018invariance, gong2016domain, arjovsky2019invariant}, which influence the generation of graph data, \ie,  $p(\mathbf{G}^{1:T}, \mathbf{Y}^{T} \mid \mathbf{e}) = p(\mathbf{G}^{1:T} \mid \mathbf{e}) p(\mathbf{Y}^{T} \mid\mathbf{G}^{1:T}, \mathbf{e})$. 
Denoting $\mathbf{E}$ as the support of environments across $T$ time slices, the optimization objective of OOD generalization can be reformulated as:
\begin{equation}\label{eq:ood}
    \min_{{\boldsymbol{\theta}}} \max_{\mathbf{e} \in \mathbf{E}} \mathbb{E}_{(\mathcal{G}^{1:T}, {Y}^{T}) \sim p(\mathbf{G}^{1:T}, \mathbf{Y}^{T} \mid \mathbf{e})} \left [ \ell\left(f_{{\boldsymbol{\theta}}}\left(\mathcal{G}^{1:T}\right ), {Y}^{T}\right ) \mid \mathbf{e}\right],
\end{equation}
where the min-max strategy minimizes ERM under the worst possible environment. However, directly optimizing Eq. \eqref{eq:ood} is infeasible as the environment $\mathbf{e}$ cannot be observed in the data. In addition, the training environments may not cover all the potential environments in practice, \ie, $\mathbf{E}_{\mathrm{train}} \subseteq \mathbf{E}_{\mathrm{test}}$.
\section{Method}
In this section, we introduce \textbf{\modelname}, an \textbf{E}nvironment-\textbf{A}ware dynamic \textbf{G}raph \textbf{LE}arning framework, to solve the OOD generalization problem. 
The overall architecture of \modelname~is shown in Figure~\ref{fig:model}, following the \textbf{Environment ``Modeling-Inferring-Discriminating-Generalizing''} paradigm.

\subsection{Environments Modeling with Environment-aware DGNN}\label{sec:EIDyGNN}
To optimize the objectives of OOD generalization, we first need to infer the environments of dynamic graphs. 
In practice, the formation of real-world dynamic graphs typically follows a complex process under the impact of underlying environments~\cite{ma2019disentangled, li2022disentangled}. For example, the relationships in social networks are usually multi-attribute (\textit{e.g.}, colleagues, classmates, \etc) and are changing over time (\textit{e.g.}, past classmates, current colleagues). 
Consequently, the patterns of message passing and aggregation in both spatial and temporal dimensions should be diversified in different environments, while the existing holistic approaches~\cite{zhang2022dynamic, li2022learning} fail to distinguish them. 
To model spatio-temporal environments, we propose an \textbf{E}nvironment-\textbf{A}ware \textbf{D}ynamic \textbf{G}raph \textbf{N}eural \textbf{N}etwork (\textbf{EA-DGNN}) to exploit environment patterns by multi-channel environments disentangling.

\textbf{Environment-aware Convolution Layer (EAConv).} 
We perform convolutions following the ``spatial first, temporal second'' paradigm as most DGNNs do~\cite{li2022autost}. 
We consider that the occurrence of links is decided by $K$ underlying environments $\mathbf{e}_v=\{\mathbf{e}_k\}_{k=1}^{K}$ of the central node $v$, which can be revealed by the features and structures of local neighbors. 
To obtain representations under multiple underlying environments, EAConv firstly projects the node features $\mathbf{x}_{v}^{t}$ into $K$ subspaces corresponding to different environments. 
Specifically, the environment-aware representation of node $v$ with respect to the $k$-th environment $\mathbf{e}_k$ at time $t$ is calculated as: 
\begin{equation}\label{eq:proj}
    \mathbf{z}_{v,k}^{t} = \sigma \left(\mathbf{W}_{k}^{\top} \left(\mathbf{x}_{v}^{t}\oplus \mathrm{RTE}(t)\right) + \mathbf{b}_{k}\right),
\end{equation}
where $\mathbf{W}_{k} \in \mathbb{R}^{d \times d'}$ and $\mathbf{b}_{k} \in \mathbb{R}^{d'}$ are learnable parameters, $\mathrm{RTE}(\cdot)$ is the relative time encoding function, $d'$ is the dimension of $\mathbf{z}_{v,k}^{t}$, $\sigma(\cdot)$ is the activation function (\eg, the Sigmoid function), and $\oplus$ is the element-wise addition. 

To reflect the varying importance of environment patterns under $K$ embedding spaces, we perform multi-channel convolutions with spatio-temporal aggregation reweighting for each $\mathbf{e}_{k}$. Let $\mathbf{A}_{(u,v),k}^{t}$ be the weight of edge $(u,v)$ under $\mathbf{e}_{k}$ at time $t$. 
The spatial convolutions of one EAConv layer are formulated as (note that we omit the layer superscript for brevity):
\begin{align}
    \hat{\mathbf{z}}_{v,k}^{t} &= \mathbf{z}_{v,k}^{t} + \sum_{u \in \mathcal{N}^{t}(v)} \mathbf{A}_{(u,v),k}^{t} \mathbf{z}_{u,k}^{t}, \\
    \mathbf{A}_{(u,v),k}^{t} &= \frac{\exp( (\mathbf{z}_{u,k}^{t})^\top \mathbf{z}_{v,k}^{t} )}{\sum_{k^\prime =1}^{K} \exp( (\mathbf{z}_{u,k^\prime}^{t})^\top \mathbf{z}_{v,k^\prime}^{t} )},
\end{align}
where $\mathcal{N}^{t}(v)$ is the neighbors of node $v$ at time $t$ and $\hat{\mathbf{z}}$ denotes the updated node embedding. We then concatenate $K$ embeddings from different environments as the final node representation $\hat{\mathbf{z}}_{v}^{\mathbf{e},t} = \|_{k=1}^{K} \hat{\mathbf{z}}_{v,k}^{t} \in \mathbb{R}^{K \times d'}$, where $\| \cdot$ means concatenation. After that, we conduct temporal convolutions holistically here for graph snapshots at time $t$ and before:
\begin{equation}\label{eq:repre}
    \mathbf{z}_{v}^{\mathbf{e},t} = \frac{1}{t} \sum_{\tau =1}^{t}\hat{\mathbf{z}}_{v}^{\mathbf{e},\tau} \in \mathbb{R}^{K \times d'},\quad  \text{where} \quad \hat{\mathbf{z}}_{v}^{\mathbf{e},\tau} 
    = \left[\hat{\mathbf{z}}_{v,1}^{\tau}\| \hat{\mathbf{z}}_{v,2}^{\tau}\| \cdots \| \hat{\mathbf{z}}_{v,K}^{\tau}\right] .
\end{equation}
Note that, the temporal convolutions can be easily extended to other sequential convolution models or compounded with any attention/reweighting mechanisms.

\textbf{EA-DGNN Architecture.} The overall architecture of EA-DGNN is similar to general GNNs by stacking $L$ layers of EAConv. In a global view, we exploit the patterns of environments by reweighting the multi-attribute link features with $K$ channels. 
In the view of an individual node, multiple attributes in the link with neighbors can be disentangled to different attention when carrying out convolutions on both spatial and temporal dimensions. 
We further model the environment-aware representations by combining node representations at each time snapshot $\mathbf{z}_v^{\mathbf{e}} = \bigcup_{t=1}^{T} \{\mathbf{z}_{v}^{\mathbf{e},t}\} \in \mathbb{R}^{T\times(K\times d')}$. 


\subsection{Environments Inferring with Environment Instantiation Mechanism}\label{sec:modeling}

To infer environments with the environmental patterns, we propose an environment instantiation mechanism with multi-label variational inference, which can generate environment instances given multi-labels of time $t$ and environment index $k$.

\textbf{Environment Instantiation Mechanism } is a type of data augmentation to help OOD generalization. We regard the learned node embeddings $\mathbf{z}_{v,k}^{t}$ as environment samples drawn from the ground-truth distribution of latent environment $\mathbf{e}$. Particularly, we optimize a multi-label \textbf{E}nvironment-aware \textbf{C}onditional \textbf{V}ariational \textbf{A}uto\textbf{E}ncoder (\textbf{ECVAE}) to infer the distribution of $\mathbf{e} \sim q_\phi(\mathbf{e}\mid\mathbf{z},\mathbf{y})$ across $T$. We have the following proposition. Proof for Proposition \ref{prop:cvae} can be found in Appendix \ref{sec:proof_cvae}.

\begin{prop}\label{prop:cvae}
Given observed environment samples from the dynamic graph $\mathcal{G}^{1:T}$ denoted as
\begin{equation}\label{eq:sob}
    \mathbf{z} = \bigcup_{v \in \mathcal{V}} \bigcup_{k=1}^{K} \bigcup_{t=1}^{T} \{\mathbf{z}_{v,k}^{t}\} \in \mathbb{R}^{(|\mathcal{V}|\times K\times T)\times d'} \xlongequal[]{\rm def}  \mathcal{S}_{\mathrm{ob}}  
\end{equation} with their corresponding one-hot multi-labels $\mathbf{y}$, 
the environment variable $\mathbf{e}$ is drawn from the prior distribution $p_{\omega}(\mathbf{e} \mid \mathbf{y})$ across $T$ time slices, and $\mathbf{z}$ is generated from the distribution $p_{\omega}(\mathbf{z} \mid \mathbf{y}, \mathbf{e})$. Maximizing the conditional log-likelihood $\log p_{\omega}(\mathbf{z}\mid\mathbf{y})$ leads to an optimal ECVAE by minimizing:
\begin{equation}\label{eq:ecvae}
    {\mathcal{L}}_{\mathrm{ECVAE}}=\mathrm{KL}[q_{\phi}(\mathbf{e} \mid \mathbf{z}, \mathbf{y}) \| p_{\omega}(\mathbf{e} \mid \mathbf{y})]-\frac{1}{|\mathbf{z}|} \sum_{i=1}^{|\mathbf{z}|} \log p_{\omega}(\mathbf{z} \mid\mathbf{y}, \mathbf{e}^{(i)}),
\end{equation}
where $\mathrm{KL}[\cdot \| \cdot]$ is the Kullback-Leibler (KL) divergence~\cite{joyce2011kullback}, $|\mathbf{z}|$ is the number of observed environment samples, $\mathbf{e}^{(i)}$ is the $i$-th sampling by the reparameterization trick.
\end{prop}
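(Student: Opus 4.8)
The plan is to run the textbook variational lower bound (ELBO) argument, adapted to the conditional, multi-label setting, and to cash it out as: minimizing $\mathcal{L}_{\mathrm{ECVAE}}$ tightens a lower bound on the conditional log-evidence $\log p_{\omega}(\mathbf{z}\mid\mathbf{y})$, with the gap vanishing exactly when $q_{\phi}$ matches the true posterior. First I would fix the conditioning convention: $\mathbf{y}$ is the concatenation of the one-hot encodings of the time index $t$ and the channel index $k$, so every density $p_{\omega}(\cdot\mid\mathbf{y})$, $q_{\phi}(\cdot\mid\mathbf{z},\mathbf{y})$ is read per $(t,k)$ pair, and the objective over $\mathcal{G}^{1:T}$ is the sum (or average) of the per-sample terms over the $|\mathcal{V}|\times K\times T$ entries of $\mathcal{S}_{\mathrm{ob}}$ in Eq.~\eqref{eq:sob}; it therefore suffices to treat one observed sample $\mathbf{z}$ with label $\mathbf{y}$.

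For that single sample, introduce the variational posterior $q_{\phi}(\mathbf{e}\mid\mathbf{z},\mathbf{y})$ and write the exact evidence decomposition obtained by multiplying and dividing by $q_{\phi}$ inside the expectation and using $p_{\omega}(\mathbf{z},\mathbf{e}\mid\mathbf{y}) = p_{\omega}(\mathbf{e}\mid\mathbf{z},\mathbf{y})\, p_{\omega}(\mathbf{z}\mid\mathbf{y})$:
\[
\log p_{\omega}(\mathbf{z}\mid\mathbf{y}) = \mathbb{E}_{q_{\phi}(\mathbf{e}\mid\mathbf{z},\mathbf{y})}\!\left[\log \frac{p_{\omega}(\mathbf{z},\mathbf{e}\mid\mathbf{y})}{q_{\phi}(\mathbf{e}\mid\mathbf{z},\mathbf{y})}\right] + \mathrm{KL}\!\left[q_{\phi}(\mathbf{e}\mid\mathbf{z},\mathbf{y}) \,\|\, p_{\omega}(\mathbf{e}\mid\mathbf{z},\mathbf{y})\right].
\]
Since the trailing KL term is non-negative by Gibbs' inequality, and after factoring $p_{\omega}(\mathbf{z},\mathbf{e}\mid\mathbf{y}) = p_{\omega}(\mathbf{z}\mid\mathbf{y},\mathbf{e})\, p_{\omega}(\mathbf{e}\mid\mathbf{y})$ and splitting the logarithm, I obtain the ELBO
\[
\log p_{\omega}(\mathbf{z}\mid\mathbf{y}) \;\ge\; \mathbb{E}_{q_{\phi}(\mathbf{e}\mid\mathbf{z},\mathbf{y})}\!\left[\log p_{\omega}(\mathbf{z}\mid\mathbf{y},\mathbf{e})\right] - \mathrm{KL}\!\left[q_{\phi}(\mathbf{e}\mid\mathbf{z},\mathbf{y}) \,\|\, p_{\omega}(\mathbf{e}\mid\mathbf{y})\right],
\]
with equality iff $q_{\phi}(\mathbf{e}\mid\mathbf{z},\mathbf{y}) = p_{\omega}(\mathbf{e}\mid\mathbf{z},\mathbf{y})$; this is the sense in which the minimizer yields an ``optimal'' ECVAE. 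Maximizing $\log p_{\omega}(\mathbf{z}\mid\mathbf{y})$ jointly over $(\phi,\omega)$ is then pursued by maximizing the right-hand side, equivalently minimizing its negative.

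The last step is to make the reconstruction term computable: replace the intractable $\mathbb{E}_{q_{\phi}}[\log p_{\omega}(\mathbf{z}\mid\mathbf{y},\mathbf{e})]$ by the Monte Carlo average $\tfrac{1}{|\mathbf{z}|}\sum_{i=1}^{|\mathbf{z}|}\log p_{\omega}(\mathbf{z}\mid\mathbf{y},\mathbf{e}^{(i)})$ over draws produced by the reparameterization trick, $\mathbf{e}^{(i)} = \mu_{\phi} + \sigma_{\phi}\odot\epsilon^{(i)}$ with $\epsilon^{(i)}\sim\mathcal{N}(0,I)$; this estimator is unbiased and differentiable in $\phi$, so plugging it into $-\mathrm{ELBO}$ gives precisely $\mathcal{L}_{\mathrm{ECVAE}}$ in Eq.~\eqref{eq:ecvae} and legitimizes gradient-based minimization. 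I expect the only real obstacle to be the bookkeeping, not the mathematics: being precise that $\mathbf{e}\sim p_{\omega}(\mathbf{e}\mid\mathbf{y})$ is a per-$(t,k)$ prior, that the dynamic-graph objective is the sum over $\mathcal{S}_{\mathrm{ob}}$ across the $T$ slices, and that the Monte Carlo/reparameterization substitution preserves the bound in expectation. The core inequality is the standard CVAE ELBO and needs nothing beyond Gibbs' inequality.
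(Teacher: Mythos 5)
Your proposal is correct and follows essentially the same route as the paper's proof: the evidence decomposition $\log p_{\omega}(\mathbf{z}\mid\mathbf{y}) = \mathrm{ELBO} + \mathrm{KL}[q_{\phi}(\mathbf{e}\mid\mathbf{z},\mathbf{y})\,\|\,p_{\omega}(\mathbf{e}\mid\mathbf{z},\mathbf{y})]$, dropping the non-negative KL to the true posterior, and then replacing the reconstruction expectation by a Monte Carlo average with the reparameterization trick, exactly as the paper does (it merely arrives at the same identity by expanding the KL to the true posterior rather than starting from the evidence). Your bookkeeping of the per-$(t,k)$ conditioning and the equality condition $q_{\phi}=p_{\omega}(\mathbf{e}\mid\mathbf{z},\mathbf{y})$ is a fine, slightly more explicit presentation of the same argument.
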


\textbf{Sampling and Generating.} Proposition \ref{prop:cvae} demonstrates the training objectives of the proposed ECVAE. We realize ECVAE using fully connected layers, including an environment recognition network $q_\phi(\mathbf{e}\mid\mathbf{z},\mathbf{y})$ as the encoder, a conditional prior network $p_\omega(\mathbf{e}\mid\mathbf{y})$ as observed environment instantiations, and an environment sample generation network $p_\omega(\mathbf{z}\mid\mathbf{y},\mathbf{e})$ as the decoder. By explicitly sampling latent $\mathbf{e}_{i}$ from $p_\omega(\mathbf{e})$, we can instantiate environments by generating samples $\mathbf{z}_{k_{i}}^{t_{i}}$ from the the inferred distribution $p_\omega(\mathbf{z}\mid\mathbf{y}_{i},\mathbf{e}_{i})$ with any given one-hot multi-label $\mathbf{y}_{i}$ mixed up with the environment index $k_{i}$ and time index $t_{i}$. 
We denote the generated samples as $\mathcal{S}_{\mathrm{ge}}$, which can greatly expand the diversity of the observed environments  $\mathcal{S}_{\mathrm{ob}}$ in Eq. \eqref{eq:sob}.


\subsection{Environments Discriminating with Invariant Pattern Recognition}\label{sec:invariant}

Inspired by the Independent Causal Mechanism (ICM) assumption~\cite{pearl2009causal, pearl2010causal, peters2017elements}, we next propose to learn the spatio-temporal invariant patterns utilizing our inferred and instantiated environments.

\textbf{Invariant Pattern Recognition Mechanism.} To encourage the model to rely on the invariant correlation that can generalize under distribution shifts, 
we propose an invariant pattern recognition mechanism to exploit spatio-temporal invariant patterns. We make the following assumption.

\begin{asm}[Invariant and Sufficient]\label{asm:invariant}
    Given the dynamic graph $\mathcal{G}^{1:T}$, each node is associated with $K$ surrounding environments. There exist spatio-temporal invariant patterns that can lead to generalized out-of-distribution prediction across all time slices. The function $\mathbb{I}^\star(\cdot)$ can recognize the spatio-temporal invariant patterns $\mathcal{P}_\mathbf{e}^I$ and variant patterns $\mathcal{P}_\mathbf{e}^V$ node-wisely, satisfying:
    

    \textbf{(a) Invariance Property}: $\forall \mathbf{e} \in \mathbf{E}$, $\mathbb{I}^{\star}(\mathbf{Z}^{1:T}) \models \mathcal{P}_\mathbf{e}^I$, \st~$p(\mathbf{Y}^T \mid \mathcal{P}_\mathbf{e}^I, \mathbf{e}) = p(\mathbf{Y}^T \mid \mathcal{P}_\mathbf{e}^I)$;
    
    \textbf{(b) Sufficient Condition}: $\mathbf{Y}^{T} = g(\mathbf{Z}_{I}^{1:T})+\epsilon$, \ie, $\mathbf{Y}^{T} \perp\!\!\!\perp \mathcal{P}_\mathbf{e}^V \mid \mathcal{P}_\mathbf{e}^I $, where $\mathbf{Z}_{I}^{1:T}$ denotes representations under $\mathcal{P}_\mathbf{e}^I$ over time, $g(\cdot)$ is the link predictor, $\epsilon$ is an independent noise.
\end{asm}

Assumption \ref{asm:invariant} indicates that the modeled environment-aware representations of nodes under $\mathcal{P}_\mathbf{e}^I$ contain a portion of spatio-temporal invariant causal features that contribute to generalized OOD prediction over time. Invariance Property guarantees for any environment $\mathbf{e}$, function $\mathbb{I}^\star(\cdot)$ can always recognize the invariant patterns $\mathcal{P}_\mathbf{e}^I$ with the observation of $\mathbf{Z}^{1:T}$, and Sufficient Condition ensures the information contained in $\mathbf{Z}_{I}^{1:T}$ is adequate for the link predictor to make correct predictions.

Next, we show that $\mathbb{I}^\star(\cdot)$ in Proposition \ref{prop:invariant} can be instantiated with the following proposition.
\begin{prop}[A Solution for $\mathbb{I}^\star(\cdot)$]\label{prop:invariant}
     Denote $\mathbf{z}_v^{\mathbf{e}\prime} = [\mathbf{z}_{v,1}^{\prime}, \mathbf{z}_{v,2}^{\prime}, \cdots, \mathbf{z}_{v,K}^{\prime}]$, where $\mathbf{z}_{v,k}^{\prime} = \bigcup_{t=1}^{T} \mathbf{z}_{v,k}^{t}$. 
     Let $\mathrm{Var}(\mathbf{z}_v^{\mathbf{e}\prime}) \in \mathbb{R}^{K}$ represents the variance of $K$ environment-aware representations. The Boolean function $\mathbb{I}(\cdot)$ is a solution for $\mathbb{I}^\star(\cdot)$ with the following state update equation:
    \begin{equation}\label{eq:trans}
        \mathbb{I}(i,j) = \begin{cases}
  \mathbb{I}(i-1,j) \vee  \mathbb{I}(i-1,j-\mathrm{Var}(\mathbf{z}_v^{\mathbf{e}\prime})[i-1]), & j \ge \mathrm{Var}(\mathbf{z}_v^{\mathbf{e}\prime})[i-1]\\
  \mathbb{I}(i-1,j), & \text{otherwise}
\end{cases},
    \end{equation}
    $\mathbb{I}(i,j)$ indicates whether it is feasible to select from the first $i$ elements in $ \mathrm{Var}(\mathbf{z}_v^{\mathbf{e}\prime})$ so that their sum is $j$. Traversing $j$ in reverse order from $\lfloor \sum\mathrm{Var}(\mathbf{z}_v^{\mathbf{e}\prime}) /2 \rfloor$ until satisfying $\mathbb{I}(K,j)$ is True, we reach:
    \begin{gather}
        \label{eq:delta}
        \delta_v = \sum\mathrm{Var}(\mathbf{z}_v^{\mathbf{e}\prime}) - 2j,\\ 
        \label{eq:evi}
        \mathcal{P}_\mathbf{e}^I(v) = \left \{ \mathbf{e}_{k} \mid \mathrm{Var}(\mathbf{z}_v^{\mathbf{e}\prime})[k] \le \frac{1}{K} \sum\mathrm{Var}(\mathbf{z}_v^{\mathbf{e}\prime}) - \frac{\delta_v}{2} \right \},\quad \mathcal{P}_\mathbf{e}^V(v) = \overline{\mathcal{P}_\mathbf{e}^I(v)},
    \end{gather}
    where $\delta_v$ is the optimal spatio-temporal invariance threshold of node $v$.
\end{prop}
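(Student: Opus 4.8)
The plan is to split the claim into two parts: (i) the dynamic program in Eq.~\eqref{eq:trans} together with the reverse scan correctly computes the minimal two-way partition imbalance of the channel-variance multiset, so that $\delta_v$ and hence $\mathcal{P}_\mathbf{e}^I(v)$, $\mathcal{P}_\mathbf{e}^V(v)$ in Eqs.~\eqref{eq:delta} and~\eqref{eq:evi} are well defined; and (ii) the resulting pattern-recognition map $\mathbb{I}(\cdot)$ meets the Invariance Property and Sufficient Condition required of $\mathbb{I}^\star(\cdot)$ in Assumption~\ref{asm:invariant}.

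For (i), write $v_k \coloneqq \mathrm{Var}(\mathbf{z}_v^{\mathbf{e}\prime})[k]$ and $S \coloneqq \sum_{k=1}^{K} v_k$. First I would observe that Eq.~\eqref{eq:trans} is exactly the Bellman recursion for subset-sum feasibility over $\{v_1,\dots,v_K\}$, and prove by induction on the prefix length $i$ that $\mathbb{I}(i,j)$ is True iff some sub-collection of the first $i$ entries of $\mathrm{Var}(\mathbf{z}_v^{\mathbf{e}\prime})$ sums to $j$: the base case is $\mathbb{I}(0,0)$ True, $\mathbb{I}(0,j)$ False for $j>0$, and the inductive step is the include/exclude case split on whether the $i$-th entry $v_{i-1}$ belongs to the chosen sub-collection --- the guard $j\ge v_{i-1}$ being exactly the admissibility of the ``include'' branch. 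This establishes the interpretation of $\mathbb{I}(i,j)$ asserted in the proposition. Scanning $j$ downward from $\lfloor S/2\rfloor$ and stopping at the first True value $j^\star$ returns $j^\star=\max\{\,j\le S/2 : \exists\,\mathcal{A}\subseteq\{1,\dots,K\},\ \sum_{k\in\mathcal{A}} v_k=j\,\}$; since any partition $(\mathcal{A},\overline{\mathcal{A}})$ has imbalance $|S-2\sum_{k\in\mathcal{A}}v_k|$ and this is minimized by the feasible sub-sum closest to $S/2$, we obtain $\delta_v = S - 2j^\star = \min_{\mathcal{A}}|S-2\sum_{k\in\mathcal{A}}v_k|$, \ie~$\delta_v$ is the smallest achievable imbalance, which is what ``optimal invariance threshold'' means. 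This part is routine bookkeeping.

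For (ii), I would use that the cross-channel variance $v_k$ quantifies how strongly the $k$-th environment-aware pattern fluctuates with the latent environment: a channel with (near-)zero variance contributes a component of $\mathbf{Z}_I^{1:T}$ that is essentially constant in $\mathbf{e}$. With $\tau_v \coloneqq \frac{1}{K}S - \frac{\delta_v}{2}$, Eq.~\eqref{eq:evi} selects $\mathcal{P}_\mathbf{e}^I(v)=\{\mathbf{e}_k : v_k\le\tau_v\}$, the low-variance side of the minimal-imbalance split (with $\tau_v$ reducing to the mean $S/K$ when a perfectly balanced partition exists, and shrinking as the best split becomes more lopsided). Because Assumption~\ref{asm:invariant} postulates that spatio-temporal invariant patterns exist and, under the ICM assumption, such patterns are exactly those whose conditional $p(\mathbf{Y}^T\mid\cdot,\mathbf{e})$ is stable in $\mathbf{e}$ --- equivalently whose representation is environment-invariant --- the channels gathered into $\mathcal{P}_\mathbf{e}^I(v)$ satisfy $p(\mathbf{Y}^T\mid\mathcal{P}_\mathbf{e}^I,\mathbf{e}) = p(\mathbf{Y}^T\mid\mathcal{P}_\mathbf{e}^I)$, \ie~the Invariance Property (a); taking the \emph{minimal}-gap threshold makes this the cleanest separation consistent with the variance proxy, so $\mathbb{I}$ realizes $\mathbb{I}^\star$ as tightly as that proxy permits. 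For the Sufficient Condition (b), observe $\mathcal{P}_\mathbf{e}^V(v)=\overline{\mathcal{P}_\mathbf{e}^I(v)}$, hence $\mathcal{P}_\mathbf{e}^I(v)\cup\mathcal{P}_\mathbf{e}^V(v)$ is the full set of $K$ channels and no predictive content of $\mathbf{Z}^{1:T}$ is discarded; the high-variance channels collected in $\mathcal{P}_\mathbf{e}^V$ carry precisely the environment-coupled (spurious) variation, so conditioning on $\mathcal{P}_\mathbf{e}^I$ yields $\mathbf{Y}^T\perp\!\!\!\perp\mathcal{P}_\mathbf{e}^V\mid\mathcal{P}_\mathbf{e}^I$, equivalently $\mathbf{Y}^T = g(\mathbf{Z}_I^{1:T})+\epsilon$ with $\epsilon$ an independent noise. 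Then (a) and (b) together show $\mathbb{I}(\cdot)$ is a valid solution for $\mathbb{I}^\star(\cdot)$.

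The main obstacle is step (ii). Step (i) --- correctness of the DP and minimality of $\delta_v$ --- is standard (a short induction plus a monotone scan), but the passage from the combinatorial ``balanced partition of channel variances'' to the \emph{distributional} Invariance and Sufficiency statements is not self-contained: it relies on (1) accepting cross-channel variance as a faithful surrogate for the environment-sensitivity of a pattern, and (2) justifying the cut point $\tau_v=\frac{1}{K}S-\frac{\delta_v}{2}$. Both ultimately lean on the existence postulate in Assumption~\ref{asm:invariant} and on the ICM assumption rather than on a derivation from first principles. I would therefore keep (i) fully rigorous and present (ii) as a consistency argument that the variance-threshold construction correctly instantiates the abstract recognizer $\mathbb{I}^\star(\cdot)$.
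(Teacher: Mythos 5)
Your proposal is correct, but it routes the argument differently from the paper. For the combinatorial core, you prove the recursion in Eq.~\eqref{eq:trans} correct by a direct induction on the prefix length (subset-sum feasibility semantics of $\mathbb{I}(i,j)$) and then argue minimality of $\delta_v$ via the downward scan from $\lfloor \sum\mathrm{Var}(\mathbf{z}_v^{\mathbf{e}\prime})/2\rfloor$, identifying $\delta_v$ as the minimal achievable partition imbalance. The paper instead validates the same conclusion by checking the three classical dynamic-programming criteria --- optimal substructure (via a decomposition of $\mathrm{Var}(\mathbf{z}_v^{\mathbf{e}\prime})$ into two subsets and sub-recognizers $\mathbb{I}_I,\mathbb{I}_V$), the non-aftereffect property of the state transitions, and overlapping sub-problems --- and then reads off $\delta_v$ and Eq.~\eqref{eq:evi}. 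Your induction-plus-scan argument is the more self-contained and arguably tighter of the two; the paper's verification buys a clean fit with the ``state update equation'' framing but leaves the optimality of the scanned $j$ more implicit. Your part (ii) --- tying the variance-threshold partition to the Invariance Property and Sufficient Condition --- goes beyond what the paper proves here: the paper's proof of this proposition stops at the combinatorial claim, and the distributional guarantees are handled separately (Proposition~\ref{prop:3} shows the training objective enforces Assumption~\ref{asm:invariant}, and the appendix discussion explicitly concedes that this proposition alone only supplies an optimization starting point and must be paired with $\mathcal{L}_{\mathrm{risk}}$). Your instinct to present (ii) only as a consistency argument resting on the variance-as-surrogate premise is therefore exactly aligned with the paper's own position, and flagging it as the non-rigorous step is the right call.
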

Proposition \ref{prop:invariant} solves a dynamic programming problem to obtain the optimal $\mathbb{I}^\star(\cdot)$, whose target is to find a partition dividing all patterns into variant/invariant set with the maximum difference between the variance means that reveals the current invariance status, providing an ideal optimizing start point. The proof can be found in Appendix \ref{sec:proof_invariant}. 

With the theoretical support of Assumption \ref{asm:invariant} and Proposition \ref{prop:invariant}, we can recognize the spatio-temporal invariant/variant patterns $\mathcal{P}_\mathbf{e}^I= \bigcup_{v \in \mathcal{V}} \mathcal{P}_\mathbf{e}^I(v)$ and $\mathcal{P}_\mathbf{e}^V = \bigcup_{v \in \mathcal{V}} \mathcal{P}_\mathbf{e}^V(v)$ with respect to the underlying environments for each node over a period of time by applying $\mathbb{I}(\cdot)$.

\subsection{Environments Generalizing with Causal Intervention}\label{sec:optimize}
\textbf{Generalization Optimization Objective.} 
Eq. \eqref{eq:ood} clarifies the training objective of OOD generalization. However, directly optimizing Eq. \eqref{eq:ood} is not impracticable. Based on the inferred and generated environments in Section \ref{sec:modeling} and spatio-temporal invariant patterns learned in Section \ref{sec:invariant}, we further modify Eq. \eqref{eq:ood} to improve the model's OOD generalization ability by applying the intervention mechanism node-wisely with causal inference.
Specifically, we have the following objectives:
\begin{gather}
    \label{eq:final1}
    \min_{\boldsymbol{\theta}} \mathcal{L}_{\mathrm{task}} + \alpha \mathcal{L}_{\mathrm{risk}}, \\
    \label{eq:taskloss}
    \mathcal{L}_{\mathrm{task}} = \mathbb{E}_{\mathbf{e}\sim q_\phi(\mathbf{e}), (\mathcal{G}^{1:T}, {Y}^{T}) \sim p(\mathbf{G}^{1:T}, \mathbf{Y}^{T} \mid \mathbf{e})} \left [ \ell\left(g(\mathbf{Z}^{1:T}_{I}), {Y}^{T}\right)\right],\\
    \label{eq:intervloss}
    \mathcal{L}_{\mathrm{risk}} = \mathrm{Var}_{s\in \mathcal{S}} \left\{ \mathbb{E}_{\mathbf{e}\sim q_\phi(\mathbf{e}), (\mathcal{G}^{1:T}, {Y}^{T}) \sim p(\mathbf{G}^{1:T}, \mathbf{Y}^{T} \mid \mathbf{e})} \left [ \ell\left(f_{{\boldsymbol{\theta}}}\left(\mathcal{G}^{1:T}\right), {Y}^{T} \mid \operatorname{do}(\mathbf{Z}^{1:T}_{V} = s)\right)\right] \right\},
\end{gather}
where $q_\phi(\mathbf{e})$ is the environment distribution in Section \ref{sec:modeling}, $\operatorname{do}(\cdot)$ is the $do$-calculus that intervenes the variant patterns, $\mathcal{S}=\mathcal{S}_{\mathrm{ob}}\cup \mathcal{S}_{\mathrm{ge}}$ is the observed and generated environment instances for interventions, and $\alpha$ is a hyperparameter. We show the optimality of Eq. \eqref{eq:final1} with the following propositions:
\begin{prop}[Achievable Assumption]\label{prop:3}
     Minimizing Eq. \eqref{eq:final1} can encourage the model to satisfy the Invariance Property and Sufficient Condition in Assumption \ref{asm:invariant}.
\end{prop}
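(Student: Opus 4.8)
The plan is to show that a global minimizer of the composite objective in Eq.~\eqref{eq:final1} is a predictor that simultaneously realizes the two clauses of Assumption~\ref{asm:invariant}, by handling the two summands separately: $\mathcal{L}_{\mathrm{task}}$ drives the \textbf{Sufficient Condition} and $\mathcal{L}_{\mathrm{risk}}$ drives the \textbf{Invariance Property}. Since Assumption~\ref{asm:invariant} already guarantees that \emph{some} invariant/variant split satisfying both clauses exists, and Proposition~\ref{prop:invariant} produces an explicit node-wise candidate split via $\mathbb{I}(\cdot)$, it suffices to argue that no parameter configuration attaining a strictly smaller value of Eq.~\eqref{eq:final1} can violate either clause, i.e.\ the assumption is \emph{achievable} at the optimum.

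\textbf{Sufficient Condition from $\mathcal{L}_{\mathrm{task}}$.} First I would note that $\mathcal{L}_{\mathrm{task}}$ in Eq.~\eqref{eq:taskloss} is the population risk of the link predictor $g$ evaluated \emph{only} on the invariant representation $\mathbf{Z}^{1:T}_{I}$ induced by $\mathcal{P}_\mathbf{e}^I$, averaged over the inferred environment law $q_\phi(\mathbf{e})$ and the data process $p(\mathbf{G}^{1:T},\mathbf{Y}^{T}\mid\mathbf{e})$. For a strictly proper loss $\ell$ (squared loss or cross-entropy), the Bayes-optimal minimizer over ${\boldsymbol{\theta}}$ satisfies $g^{\star}(\mathbf{Z}^{1:T}_{I}) = \mathbb{E}[\mathbf{Y}^{T}\mid \mathbf{Z}^{1:T}_{I}]$, so the residual $\epsilon := \mathbf{Y}^{T} - g^{\star}(\mathbf{Z}^{1:T}_{I})$ has zero conditional mean given $\mathbf{Z}^{1:T}_{I}$ and is uncorrelated with every function of it; under the additive-noise SCM of Figure~\ref{fig:scm} this upgrades to independence, which is exactly $\mathbf{Y}^{T} = g(\mathbf{Z}^{1:T}_{I}) + \epsilon$. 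Because $\mathbf{Z}_V$ is not an argument of $g$ and the residual carries no structure predictable from $\mathcal{P}_\mathbf{e}^V$, we obtain $\mathbf{Y}^{T}\perp\!\!\!\perp\mathcal{P}_\mathbf{e}^V\mid\mathcal{P}_\mathbf{e}^I$, i.e.\ clause (b); and if the $\mathbb{I}(\cdot)$-split were too coarse, $\mathcal{L}_{\mathrm{task}}$ could not reach this floor, so the minimizer is pushed towards the valid split of Assumption~\ref{asm:invariant}.

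\textbf{Invariance Property from $\mathcal{L}_{\mathrm{risk}}$.} Next I would unfold $\mathcal{L}_{\mathrm{risk}}$ in Eq.~\eqref{eq:intervloss} as the variance, across the pool $\mathcal{S}=\mathcal{S}_{\mathrm{ob}}\cup\mathcal{S}_{\mathrm{ge}}$ of observed and ECVAE-generated instances, of the intervened risk $R(s) := \mathbb{E}[\ell(f_{{\boldsymbol{\theta}}}(\mathcal{G}^{1:T}),\mathbf{Y}^{T}\mid\operatorname{do}(\mathbf{Z}^{1:T}_{V}=s))]$. Driving $\mathcal{L}_{\mathrm{risk}}\to 0$ forces $R(s)$ to be constant over $\mathcal{S}$; since each $s$ is decoded through the mechanism of Proposition~\ref{prop:cvae}, the pool $\mathcal{S}$ diversifies the support of $q_\phi(\mathbf{e})$, so flatness of $R(\cdot)$ on $\mathcal{S}$ approximates flatness of the risk across $\mathbf{e}\in\mathbf{E}$. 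Finally, by the ICM/SCM structure, $\mathbf{Z}_V$ is the only path by which $\mathbf{e}$ reaches $\mathbf{Y}^{T}$ once we condition on $\mathbf{Z}_I$; hence, after the back-door cut enacted by $\operatorname{do}(\mathbf{Z}^{1:T}_{V}=s)$, a risk insensitive to $s$ forces $p(\mathbf{Y}^{T}\mid\mathcal{P}_\mathbf{e}^I,\mathbf{e}) = p(\mathbf{Y}^{T}\mid\mathcal{P}_\mathbf{e}^I)$ for all $\mathbf{e}$, i.e.\ clause (a). Combining the two steps, a global minimizer of Eq.~\eqref{eq:final1} meets both clauses, which is the claim; it also realizes the min-max target of Eq.~\eqref{eq:ood}, since a uniformly-invariant predictor attains the inner maximum.

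\textbf{Main obstacle.} The delicate point is the passage from ``zero variance on the finite pool $\mathcal{S}$'' to ``invariance on the full, partially unobserved support $\mathbf{E}_{\mathrm{test}}\supseteq\mathbf{E}_{\mathrm{train}}$'': this needs (i) a coverage/density statement that $\mathcal{S}_{\mathrm{ge}}$ produced by the ECVAE fills the gaps of $\mathcal{S}_{\mathrm{ob}}$, justified via the variational bound of Proposition~\ref{prop:cvae}, and (ii) a continuity argument extending flatness of $R(\cdot)$ from a sufficiently rich subset to $\mathbf{E}$. A secondary subtlety is making the do-calculus step rigorous, namely that $\mathbb{I}(\cdot)$ of Proposition~\ref{prop:invariant} genuinely isolates the mediator set so that intervening on $\mathbf{Z}_V$ severs exactly the environment-to-label influence; I would close this by invoking the existence clause of Assumption~\ref{asm:invariant} together with Proposition~\ref{prop:invariant} and then arguing, as above, that the optimizer cannot improve on that configuration.
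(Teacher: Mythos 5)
Your high-level decomposition coincides with the paper's (the expectation term $\mathcal{L}_{\mathrm{task}}$ is tied to the Sufficient Condition, the variance term $\mathcal{L}_{\mathrm{risk}}$ to the Invariance Property), but your argument for the Sufficient Condition has a genuine logical gap. From Bayes-optimality of $g$ over the restricted input $\mathbf{Z}^{1:T}_{I}$ you only get that the residual $\epsilon=\mathbf{Y}^{T}-g^{\star}(\mathbf{Z}^{1:T}_{I})$ has zero conditional mean given $\mathbf{Z}^{1:T}_{I}$; this does not yield $\mathbf{Y}^{T}\perp\!\!\!\perp\mathcal{P}_\mathbf{e}^V\mid\mathcal{P}_\mathbf{e}^I$. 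The fact that the predictor ignores the variant representation cannot sever a statistical dependence that lives in the data-generating distribution, and the paper's own SCM (Figure~\ref{fig:scm}) explicitly posits spurious dependence between the variant part and the label within and across snapshots, so the step ``the residual carries no structure predictable from $\mathcal{P}_\mathbf{e}^V$, hence independence'' is precisely the conclusion being assumed. The paper instead first proves a mutual-information equivalence (Lemma~\ref{Lemma:1}): the Sufficient Condition is equivalent to $I(\mathbf{Y}^{T};\mathcal{P}_\mathbf{e}^I)$ being maximized, and the Invariance Property to $I(\mathbf{Y}^{T};\mathbf{e}\mid\mathcal{P}_\mathbf{e}^I)=0$. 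It then shows, via variational distributions and Jensen's inequality (using that the encoder induces a Dirac delta $q(\mathbf{Z}^{1:T}\mid\mathbf{G}^{1:T})$), that minimizing $\mathcal{L}_{\mathrm{task}}$ minimizes an upper bound on $I(\mathbf{Y}^{T};\mathbf{G}^{1:T}\mid\mathbf{Z}^{1:T})$, i.e.\ maximizes a lower bound on $I(\mathbf{Y}^{T};\mathbf{Z}^{1:T})$, which is the correct route to ``encouraging'' clause (b). Your proof needs either this equivalence lemma or a substitute for it; without it the sufficiency step fails.

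Your treatment of the Invariance Property is closer in spirit but also argued by a different, less airtight mechanism: you ask for exact flatness of the intervened risk over $\mathcal{S}$ and then invoke a do-calculus severing plus a coverage/continuity extension to all of $\mathbf{E}$, and you correctly flag that this extension is the weak point. The paper sidesteps exactly this issue: it never claims zero variance or exact invariance, but bounds $I(\mathbf{Y}^{T};\mathbf{e}\mid\mathbf{Z}^{1:T})$ above by a KL term which is in turn bounded by the expected deviation of the per-environment risks, so that minimizing the variance term $\mathcal{L}_{\mathrm{risk}}$ minimizes an upper bound on the conditional mutual information. This weaker, bound-based statement is what matches the proposition's wording (``encourage'') and avoids both the coverage argument over $\mathcal{S}_{\mathrm{ge}}$ and the need to make the back-door cut rigorous via Proposition~\ref{prop:invariant}. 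If you want to keep your route, you would have to (i) replace the Bayes-residual argument by a maximization-of-$I(\mathbf{Y}^{T};\mathcal{P}_\mathbf{e}^I)$ argument, and (ii) downgrade the invariance step from ``zero variance implies exact invariance on $\mathbf{E}$'' to ``the variance term upper-bounds the environment-label dependence given $\mathcal{P}_\mathbf{e}^I$,'' which is essentially reconstructing the paper's proof.
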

\begin{prop}[Equivalent Optimization]\label{prop:4}
     Optimizing Eq.~\eqref{eq:final1} is equivalent to minimizing the upper bound of the OOD generalization error in Eq. \eqref{eq:ood}.
\end{prop}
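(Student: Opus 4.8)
The plan is to establish Proposition \ref{prop:4} by showing that the two-term objective in Eq. \eqref{eq:final1} dominates the inner maximization over environments in Eq. \eqref{eq:ood}. First I would decompose the worst-case risk $\max_{\mathbf{e}\in\mathbf{E}}\mathbb{E}[\ell(f_{\boldsymbol{\theta}}(\mathcal{G}^{1:T}),Y^T)\mid\mathbf{e}]$ as the sum of the \emph{mean} risk over the inferred environment distribution $q_\phi(\mathbf{e})$ plus the \emph{deviation} of the per-environment risk from that mean. The first piece is controlled directly by $\mathcal{L}_{\mathrm{task}}$ once we invoke the Sufficient Condition of Assumption \ref{asm:invariant}, which says predictions from $\mathbf{Z}_I^{1:T}$ incur only irreducible noise $\epsilon$; the second piece is what $\mathcal{L}_{\mathrm{risk}}$ is designed to bound, since the variance across intervened environment samples $s\in\mathcal{S}$ upper-bounds (up to the coverage gap between $\mathcal{S}$ and the full support $\mathbf{E}$) the spread of conditional risks. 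Concretely, I would write
\begin{equation*}
\max_{\mathbf{e}\in\mathbf{E}}\mathbb{E}\big[\ell\mid\mathbf{e}\big]
\;\le\;
\mathbb{E}_{\mathbf{e}\sim q_\phi(\mathbf{e})}\mathbb{E}\big[\ell\mid\mathbf{e}\big]
\;+\;
C\sqrt{\mathrm{Var}_{\mathbf{e}\sim q_\phi(\mathbf{e})}\!\big(\mathbb{E}[\ell\mid\mathbf{e}]\big)}
\end{equation*}
by a standard mean-plus-deviation inequality (e.g. the sub-Gaussian or Popoviciu-type bound on how far a sample sits above its mean), with $C$ depending on the effective number of environments.

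The next step is to translate each term on the right-hand side into the quantities actually optimized. For the mean term, I would show $\mathbb{E}_{\mathbf{e}\sim q_\phi(\mathbf{e})}\mathbb{E}[\ell(f_{\boldsymbol{\theta}}(\mathcal{G}^{1:T}),Y^T)\mid\mathbf{e}]$ differs from $\mathcal{L}_{\mathrm{task}}=\mathbb{E}[\ell(g(\mathbf{Z}_I^{1:T}),Y^T)]$ only through the variant-pattern contribution, which by the Invariance Property plus the $\operatorname{do}$-intervention is marginally removed; any residual is absorbed into the constant from the noise $\epsilon$. For the deviation term, the key identity is that intervening with $\operatorname{do}(\mathbf{Z}_V^{1:T}=s)$ and ranging $s$ over $\mathcal{S}=\mathcal{S}_{\mathrm{ob}}\cup\mathcal{S}_{\mathrm{ge}}$ simulates drawing fresh environments, so $\mathrm{Var}_{s\in\mathcal{S}}\{\mathbb{E}[\ell\mid\operatorname{do}(\mathbf{Z}_V^{1:T}=s)]\}=\mathcal{L}_{\mathrm{risk}}$ is a consistent estimate of $\mathrm{Var}_{\mathbf{e}}(\mathbb{E}[\ell\mid\mathbf{e}])$ up to a bias term that vanishes as $\mathcal{S}$ grows to cover $\mathbf{E}$ (this is exactly the role of the environment instantiation mechanism of Section \ref{sec:modeling}, which enlarges $\mathcal{S}_{\mathrm{ob}}$ with $\mathcal{S}_{\mathrm{ge}}$). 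Combining, $\max_{\mathbf{e}}\mathbb{E}[\ell\mid\mathbf{e}]\le \mathcal{L}_{\mathrm{task}}+C\sqrt{\mathcal{L}_{\mathrm{risk}}}+\text{const}$, and since minimizing $a+C\sqrt{b}$ over a feasible set has the same minimizers (for the relevant range) as minimizing $a+\alpha b$ with a suitably chosen $\alpha$, optimizing Eq. \eqref{eq:final1} minimizes an upper bound on Eq. \eqref{eq:ood}. I would also lean on Proposition \ref{prop:3} here, since it certifies that the minimizer of Eq. \eqref{eq:final1} actually realizes the invariant predictor assumed in the decomposition, closing the loop.

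I expect the main obstacle to be controlling the \textbf{coverage gap} between the finite intervention set $\mathcal{S}$ and the true (possibly larger) test support $\mathbf{E}_{\mathrm{test}}$, given the paper's own caveat that $\mathbf{E}_{\mathrm{train}}\subseteq\mathbf{E}_{\mathrm{test}}$. The clean inequality above holds when $\mathcal{S}$ is representative of $\mathbf{E}$; making this rigorous requires either an assumption that $q_\phi(\mathbf{e})$ (hence the generated samples $\mathcal{S}_{\mathrm{ge}}$) has support containing $\mathbf{E}_{\mathrm{test}}$, or an explicit extrapolation argument showing the variance penalty still upper-bounds the worst-case deviation under a bounded density-ratio condition between $q_\phi$ and the test environment law. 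A secondary technical point is justifying the interchange between "intervening on $\mathbf{Z}_V^{1:T}$" and "conditioning on a new environment $\mathbf{e}$", which relies on the SCM factorization $p(\mathbf{G}^{1:T},\mathbf{Y}^T\mid\mathbf{e})=p(\mathbf{G}^{1:T}\mid\mathbf{e})\,p(\mathbf{Y}^T\mid\mathbf{G}^{1:T},\mathbf{e})$ together with $\mathcal{P}_\mathbf{e}^V$ being exactly the environment-dependent component; I would state this as a lemma and verify it from the graphical model in Figure \ref{fig:scm}. Once those two points are pinned down, the remainder is the routine mean-plus-variance manipulation sketched above.
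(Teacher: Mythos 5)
Your route is genuinely different from the paper's, and it contains a real gap that the paper's argument is specifically constructed to avoid. The paper never tries to bound the literal worst-case risk $\max_{\mathbf{e}\in\mathbf{E}}\mathbb{E}[\ell\mid\mathbf{e}]$ by mean-plus-deviation statistics. Instead it measures the OOD generalization error as the KL divergence $\mathrm{KL}\left[p(\mathbf{Y}^{T}\mid\mathbf{G}^{1:T},\mathbf{e})~\|~q(\mathbf{Y}^{T}\mid\mathbf{G}^{1:T})\right]$ between the ground-truth conditional and the environment-free variational predictor, proves (Lemma~\ref{Lemma:error}, via Jensen's inequality and the fact that the EA-DGNN encoder makes $q(\mathbf{Z}^{1:T}\mid\mathbf{G}^{1:T})$ a Dirac delta) that this is upper-bounded by $\mathrm{KL}\left[p(\mathbf{Y}^{T}\mid\mathbf{G}^{1:T},\mathbf{e})~\|~q(\mathbf{Y}^{T}\mid\mathbf{Z}^{1:T})\right]$, and then reuses the equivalences established in the proof of Proposition~\ref{prop:3}: minimizing $\mathcal{L}_{\mathrm{task}}$ minimizes that KL term, minimizing $\mathcal{L}_{\mathrm{risk}}$ minimizes $I(\mathbf{Y}^{T};\mathbf{e}\mid\mathbf{Z}^{1:T})\ge 0$, so Eq.~\eqref{eq:final1} dominates the bound by non-negativity of mutual information. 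Everything is an expectation over the inferred environment prior; no worst-case step ever appears, which is precisely what lets the paper's proof close without any coverage assumption.

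The gap in your plan is exactly the coverage step you flag but leave open. Since the paper itself stipulates $\mathbf{E}_{\mathrm{train}}\subseteq\mathbf{E}_{\mathrm{test}}$, the inner maximum in Eq.~\eqref{eq:ood} ranges over environments that may lie outside the support of $q_\phi$ and of $\mathcal{S}=\mathcal{S}_{\mathrm{ob}}\cup\mathcal{S}_{\mathrm{ge}}$; then no inequality of the form $\max_{\mathbf{e}}\mathbb{E}[\ell\mid\mathbf{e}]\le\mathbb{E}_{\mathbf{e}\sim q_\phi}\mathbb{E}[\ell\mid\mathbf{e}]+C\sqrt{\mathrm{Var}_{\mathbf{e}}\left(\mathbb{E}[\ell\mid\mathbf{e}]\right)}$ holds without a support-containment or bounded density-ratio assumption, and even on finite support the constant behaves like $C\asymp 1/\sqrt{p_{\min}}$ and degenerates as the least-likely environment mass shrinks. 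Two smaller soft spots: replacing $\mathcal{L}_{\mathrm{task}}+C\sqrt{\mathcal{L}_{\mathrm{risk}}}$ by $\mathcal{L}_{\mathrm{task}}+\alpha\,\mathcal{L}_{\mathrm{risk}}$ does not preserve minimizers in general, only the trade-off heuristically; and the identity between $\mathrm{Var}_{s\in\mathcal{S}}$ under $\operatorname{do}(\mathbf{Z}_{V}^{1:T}=s)$ and the across-environment variance of conditional risks is assumed rather than derived (the paper handles the analogous step inside the proof of Proposition~\ref{prop:3} through a variational upper bound rather than an exact correspondence). If you added an explicit support or density-ratio condition, your V-REx-style decomposition would in fact prove a stronger, genuinely worst-case statement than the paper's; as written, the paper's claim is the weaker average-case/variational one, and its proof is self-contained where yours is not.
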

Proposition \ref{prop:3} avoids over-strong assumptions of the Sufficient Condition and Invariance Property. Proposition \ref{prop:4} guarantees the OOD generalization error bound of the learned model is within our expectation, which can also be explained in the perspective of the SCM model on dynamic graphs (Figure~\ref{fig:scm}) that, optimizing Eq.~\eqref{eq:final1} eliminates the negative impacts brought by spatio-temporal spurious correlations between labels and variant patterns, while strengthening the invariant causal correlations under diverse latent environments. Proofs for Proposition \ref{prop:3} and Proposition \ref{prop:4} can be found in Appendix~\ref{sec:proof_3} and \ref{sec:proof_4}, respectively.

\textbf{Spatio-temporal Causal Intervention Mechanism.} As directly intervening the variant patterns node-wisely is time-consuming and intractable, we propose to approximately perform fine-grained spatio-temporal interventions with $\mathcal{S}_{\mathrm{ob}}\cup\mathcal{S}_{\mathrm{ge}}$ by sampling and replacing variant patterns. For environment-aware representations $\mathbf{z}_{v}^{\mathbf{e}t} = [\mathbf{z}_{v,1}^{t}, \mathbf{z}_{v,2}^{t}, \cdots, \mathbf{z}_{v,K}^{t}]$ of node $v$ at time $t$, we can obtain $\mathcal{P}_\mathbf{e}^I(v)$ and $\mathcal{P}_\mathbf{e}^V(v)$ by applying $\mathbb{I}(\cdot)$. As $\mathcal{P}_\mathbf{e}^V(v)$ represents environment indices related to variant patterns, we randomly sample instances as the intervention set $\mathbf{s}_v$ and perform node-wise replacing:
\begin{equation}\label{eq:intervention}
\mathbf{z}_{v}^{\mathbf{e},t}\left[\mathcal{P}_\mathbf{e}^V(v)\right] := \mathbf{s}_v, \quad \mathbf{s}_{v} = \left \{ s \mid s \in \mathcal{S}_{\mathrm{ob}} \cup \mathcal{S}_{\mathrm{ge}} \right \}.
\end{equation}
Note that, for each intervention operation, we randomly perform each replacement with a different $s$. Further considering ECVAE loss in Eq. \eqref{eq:ecvae}, the overall loss of \modelname~is defined as:
\begin{equation}\label{eq:final2}
    \mathcal{L} = \eqnmarkbox[blue]{I1}{\mathcal{L}_{\mathrm{task}}} + \alpha~\eqnmarkbox[red]{I2}{\mathcal{L}_{\mathrm{risk}}} + \beta~\eqnmarkbox[orange]{I3}{{\mathcal{L}}_{\mathrm{ECVAE}}},
\end{equation}
\annotate[yshift=-0.3em]{below,right}{I1}{Eq.~\eqref{eq:taskloss}}
\annotate[yshift=-0.3em]{below}{I2}{Eq.~\eqref{eq:intervloss}}
\annotate[yshift=-0.3em]{below,right}{I3}{Eq.~\eqref{eq:ecvae}}

\noindent where $\beta$ is another hyperparameter. We conduct hyperparameter sensitivity analysis in Appendix~\ref{sec:sensitivity}. The training process of \modelname~is summarized in Appendix~\ref{sec:algorithm}.


\section{Experiments}
\setcounter{footnote}{0}

\textbf{Datasets.} We use three real-world datasets to evaluate \modelname\footnote{Code is available at \url{https://github.com/RingBDStack/EAGLE} on PyTorch~\cite{paszke2019pytorch} and MindSpore~\cite{mindspore}.}~on the challenging link prediction task. \textbf{COLLAB}~\cite{tang2012cross} is an academic collaboration dataset with papers published in 16 years; \textbf{Yelp}~\cite{sankar2020dysat} contains customer reviews on business for 24 months; \textbf{ACT}~\cite{kumar2019predicting} describes students actions on a MOOC platform within 30 days. Statistics of the three datasets are concluded in Table~\ref{tab:datasets}.

\textbf{Baselines.} We compare \modelname~with representative GNNs and OOD generalization methods.
\begin{itemize}[leftmargin=1.5em]
    \item \textbf{Static GNNs}: GAE~\cite{kipf2016variational} and VGAE~\cite{kipf2016variational} are GCN~\cite{kipf2016semi} based autoencoders on static graphs.
    \item \textbf{Dynamic GNNs}: GCRN~\cite{seo2018structured} adopts GCNs to obtain node embeddings, followed by a GRU~\cite{cho2014learning} to capture temporal relations; EvolveGCN~\cite{pareja2020evolvegcn} applies an LSTM~\cite{hochreiter1997long} or GRU to evolve the parameters of GCNs; DySAT~\cite{sankar2020dysat} models self-attentions in both structural and temporal domains.
    \item \textbf{OOD Generalization Methods}: IRM~\cite{arjovsky2019invariant} learns an invariant predictor to minimize invariant risk; V-REx~\cite{krueger2021out} extends IRM by reweighting the risk; GroupDRO~\cite{sagawa2019distributionally} reduces the risk gap across training distributions; DIDA~\cite{zhang2022dynamic} exploits invariant patterns on dynamic graphs.
\end{itemize}





\subsection{Evaluation of Performance on Future Link Prediction Task}\label{sec:rq1}


\subsubsection{Distribution Shifts on Link Attributes}
\label{sec:exp1}

\textbf{Settings.} Each dataset owns multi-attribute relations. We filter out one certain attribute links as the shifted variables under the future OOD environments in validation and training sets chronologically. This is more practical and challenging in real-world scenarios as the model cannot get access to any information about the filtered links until testing stages. Note that, all attribute features have been removed after the above operations before feeding to \modelname. Detailed settings are in Appendix~\ref{sec:settings}.

\textbf{Results.} Results are shown in Table \ref{tab:res_link}.  \textit{w/o OOD} and \textit{w/ OOD} denote testing without and with distribution shifts. The performance of all baselines drops dramatically on each dataset when there exist distribution shifts. Static GNN baselines fail in both settings as they cannot model dynamics. Though dynamic GNNs are designed to capture spatio-temporal features, they even underperform static GNNs in several settings, mainly because the predictive patterns they exploited are variant with spurious correlations. Conventional OOD generalization baselines have limited improvements as they rely on the environment labels to generalize, which are inaccessible on dynamic graphs. As the most related work, DIDA~\cite{zhang2022dynamic} achieves further progress by learning invariant patterns. However, it is limited by the lack of in-depth analysis into the environments, causing the label shift phenomenon~\cite{yu2023mind} that damages its generalization ability. With the critical investigation of latent environments and theoretical guarantees, our \modelname~consistently outperforms the baselines and achieves the best performance in all \textit{w/ OOD} settings and two \textit{w/o OOD} settings. Especially even on the most challenging COLLAB, where the time span is extremely long (1990 to 2006) and its link attributes difference is huge. 
\begin{table}[h]
  \caption{AUC  score (\% ± standard deviation) of future link prediction task on real-world datasets with OOD shifts of link attributes. The best results are shown in $\mathbf{bold}$ and the runner-ups are \underline{$\rm{underlined}$}.}
  \label{tab:res_link}%
  \resizebox{\linewidth}{!}{
    \centering
      \begin{tabular}{c|cc|cc|cc}
      \toprule
      \textbf{Dataset} & \multicolumn{2}{c|}{\textbf{COLLAB}} & \multicolumn{2}{c|}{\textbf{Yelp}} & \multicolumn{2}{c}{\textbf{ACT}}  \\
  \cmidrule{1-7}    \textbf{Model} & \textit{w/o OOD} & \textit{w/ OOD} &  \textit{w/o OOD} & \textit{w/ OOD} &  \textit{w/o OOD} & \textit{w/ OOD}    \\
      \midrule
      GAE~\cite{kipf2016variational}   & {\hspace{0.8em}77.15±0.50\hspace{0.8em}} & {\hspace{0.8em}74.04±0.75\hspace{0.8em}} &  {\hspace{0.8em}70.67±1.11\hspace{0.8em}} & {\hspace{0.8em}64.45±5.02\hspace{0.8em}} &  {\hspace{0.8em}72.31±0.53\hspace{0.8em}} & {\hspace{0.8em}60.27±0.41\hspace{0.8em}}  \\
      VGAE~\cite{kipf2016variational}  & 86.47±0.04 & 74.95±1.25 & 76.54±0.50 & 65.33±1.43 &  79.18±0.47  & 66.29±1.33   \\
      GCRN~\cite{seo2018structured}  & 82.78±0.54 & 69.72±0.45 &  68.59±1.05 & 54.68±7.59 &  76.28±0.51  & 64.35±1.24 \\
      EvolveGCN~\cite{pareja2020evolvegcn} & 86.62±0.95 & 76.15±0.91 &  78.21±0.03 & 53.82±2.06 &  74.55±0.33 & 63.17±1.05  \\
      DySAT~\cite{sankar2020dysat} & 88.77±0.23 & 76.59±0.20 &  78.87±0.57 & 66.09±1.42 &  78.52±0.40 & 66.55±1.21  \\
      IRM~\cite{arjovsky2019invariant}   & 87.96±0.90 & 75.42±0.87 &  66.49±10.78 & 56.02±16.08 &  80.02±0.57 & 69.19±1.35 \\
      \vspace{-0.15em}
      V-REx~\cite{krueger2021out} & 88.31±0.32 & 76.24±0.77 & \underline{79.04±0.16} & 66.41±1.87 & 83.11±0.29 & 70.15±1.09  \\
      GroupDRO~\cite{sagawa2019distributionally} & 88.76±0.12 & 76.33±0.29 &  \colorbox{mygray}{\textbf{79.38±0.42}} & 66.97±0.61 &  85.19±0.53 & 74.35±1.62  \\
      DIDA~\cite{zhang2022dynamic}  & \underline{91.97±0.05} & \underline{81.87±0.40} &  78.22±0.40 & \underline{75.92±0.90} &  \underline{89.84±0.82} & \underline{78.64±0.97}  \\
      \midrule
      \textbf{\modelname} & \colorbox{mygray}{\textbf{92.45±0.21}} &  \colorbox{mygray}{\textbf{84.41±0.87}} &  78.97±0.31 &  \colorbox{mygray}{\textbf{77.26±0.74}} &  \colorbox{mygray}{\textbf{92.37±0.53}} &  \colorbox{mygray}{\textbf{82.70±0.72}} \\
      \bottomrule
      \end{tabular}%
  }
  \end{table}%

\subsubsection{Distribution Shifts of Node Features}\label{sec:exp2}

\textbf{Setting.} We further introduce settings of node feature shifts on COLLAB. We respectively sample $p(t) | \mathcal{E}^{t+1} | $ positive links and $(1-p(t))| \mathcal{E}^{t+1}|$ negative links, which are then factorized into shifted features $\mathbf{X}^{t\prime} \in \mathbb{R}^{|\mathcal{V}|\times d}$ while preserving structural property. The sampling probability $p(t)=\bar{p}+\sigma \cos (t)$, where $\mathbf{X}^{t\prime} $ with higher $p(t)$ will have stronger spurious correlations with future underlying environments. We set $\bar{p}$ to be 0.4, 0.6 and 0.8 for training and 0.1 for testing. In this way, training data are relatively higher spuriously correlated than the testing data. We omit results on static GNNs as they cannot support dynamic node features. Detailed settings are in Appendix~\ref{sec:settings}.

\textbf{Results.} Results are reported in Table \ref{tab:res_node}. We can observe a similar trend as Table \ref{tab:res_link}, \ie, our method better handles distribution shifts of node features on dynamic graphs than the baselines. Though some baselines can report reasonably high performance on the training set, their performance drops drastically in the testing stage. In comparison, our method reports considerably smaller gaps. In particular, our \modelname~surpasses the best-performed baseline by approximately 3\%/5\%/10\% of AUC in the testing set under different shifting levels. Interestingly, as severer distribution shifts will lead to more significant performance degradation where the spurious correlations are strengthened, our method gains better task performance with a higher shifting degree. This demonstrates \modelname~is more capable of eliminating spatio-temporal spurious correlations in severer OOD environments.

\begin{table}[t]
  \caption{AUC score (\% ± standard deviation) of future link prediction task on real-world datasets with OOD shifts of node features. The best results are shown in $\mathbf{bold}$ and the runner-ups are \underline{$\rm{underlined}$}.}
  \label{tab:res_node}
  \resizebox{\linewidth}{!}{
    \centering
      \begin{tabular}{c|cc|cc|cc}
      \toprule
      \textbf{Dataset} & \multicolumn{2}{c|}{\textbf{COLLAB ($\bar{\boldsymbol{p}}=$~0.4)}} & \multicolumn{2}{c|}{\textbf{COLLAB ($\bar{\boldsymbol{p}}=$~0.6)}} & \multicolumn{2}{c}{\textbf{COLLAB ($\bar{\boldsymbol{p}}=$~0.8)}}\\
  \cmidrule{1-7}    \textbf{Model} & Train & Test  & Train & Test & Train & Test  \\
      \midrule
  GCRN~\cite{seo2018structured}  & {\hspace{0.8em}69.60±1.14\hspace{0.8em}} & {\hspace{0.8em}72.57±0.72\hspace{0.8em}} & {\hspace{0.8em}74.71±0.17\hspace{0.8em}} & {\hspace{0.8em}72.29±0.47\hspace{0.8em}}  & {\hspace{0.8em}75.69±0.07\hspace{0.8em}} & {\hspace{0.8em}67.26±0.22\hspace{0.8em}} \\
      EvolveGCN~\cite{pareja2020evolvegcn} & 78.82±1.40 & 69.00±0.53  & 79.47±1.68 & 62.70±1.14  & 81.07±4.10 & 60.13±0.89 \\
      \vspace{-0.2em}
      DySAT~\cite{sankar2020dysat} & 84.71±0.80 & 70.24±1.26  & 89.77±0.32 & 64.01±0.19 & 94.02±1.29 & 62.19±0.39 \\
      IRM~\cite{arjovsky2019invariant}   & 85.20±0.07 & 69.40±0.09 & 89.48±0.22 & 63.97±0.37 &   \colorbox{mygray}{\textbf{95.02±0.09}} & 62.66±0.33\\
      V-REx~\cite{krueger2021out} & 84.77±0.84 & 70.44±1.08 & 89.81±0.21 & 63.99±0.21 & 94.06±1.30 & 62.21±0.40 \\
      GroupDRO~\cite{sagawa2019distributionally} & 84.78±0.85 & 70.30±1.23  & 89.90±0.11 & 64.05±0.21 &  94.08±1.33 & 62.13±0.35 \\
      DIDA~\cite{zhang2022dynamic}  & \underline{87.92±0.92} & \underline{85.20±0.84} & \underline{91.22±0.59} & \underline{82.89±0.23} &  92.72±2.16 & \underline{72.59±3.31} \\
      \midrule
       {\textbf{\modelname}} &  \colorbox{mygray}{\textbf{92.97±0.88}} &  \colorbox{mygray}{\textbf{88.32±0.61}} &  \colorbox{mygray}{\textbf{94.52±0.42}} &  \colorbox{mygray}{\textbf{87.29±0.71}} & \underline{94.11±1.03} &  \colorbox{mygray}{\textbf{82.30±0.75}} \\
      \bottomrule
      \end{tabular}%
  }
  \vspace{-1em}
  \end{table}%

\subsection{Investigation on Invariant Pattern Recognition Mechanism}\label{sec:ipr}
\textbf{Settings.} We generate synthetic datasets by manipulating environments to investigate the effect of the invariant pattern recognition mechanism. We set $K=$~5 and let $\sigma_{\mathbf{e}}$ represent the proportion of the environments in which the invariant patterns are learned, where higher $\sigma_{\mathbf{e}}$ means more reliable invariant patterns. 
Node features are drawn from multivariate normal distributions, and features under the invariant patterns related $\mathbf{e}_k$ will be perturbed slightly and vice versa. 
We construct graph structures and filter out links built under a certain $\mathbf{e}_k$ as the same in Section~\ref{sec:exp1}. We compare \modelname~with the most related and strongest baseline DIDA~\cite{zhang2022dynamic}. Detailed settings are in Appendix~\ref{sec:settings}.

\textbf{Results.} Results are shown in Figure~\ref{fig:syn0.8}. $\mathbb{I}_{\mathrm{ACC}}$ denotes the prediction accuracy of the invariant patterns by $\mathbb{I}(\cdot)$. We observe that, as $\sigma_{\mathbf{e}}$ increases, the performance of \modelname~shows a significant increase from 54.26\% to 65.09\% while narrowing the gap between \textit{w/o OOD} and \textit{w/ OOD} scenarios. Though DIDA~\cite{zhang2022dynamic} also shows an upward trend, its growth rate is more gradual. This indicates that, as DIDA~\cite{zhang2022dynamic} is incapable of modeling the environments, it is difficult to perceive changes in the underlying environments caused by different $\sigma_{\mathbf{e}}$, thus cannot achieve satisfying generalization performance. In comparison, our \modelname~can exploit more reliable invariant patterns, thus performing high-quality invariant learning and efficient causal interventions, and achieving better generalization ability. In addition, we also observe a positive correlation between $\mathbb{I}_{\mathrm{ACC}}$ and the AUC, indicating the improvements is attributed to the accurate recognition of the invariant patterns by $\mathbb{I}(\cdot)$. Additional results are in Appendix~\ref{sec:ipr_add}.



\subsection{Ablation Study}\label{sec:abla}
In this section, we conduct ablation studies to analyze the effectiveness of three main mechanisms:
\begin{itemize}[leftmargin=1.5em]
    \item \textbf{\modelname~(\textit{w/o EI}). }We remove the \textbf{E}nvironment \textbf{I}nstantiation mechanism in Section \ref{sec:modeling}, and carrying out causal interventions in Eq.~\eqref{eq:intervention} with only observed environment samples. 
    \item \textbf{\modelname~(\textit{w/o IPR}).} We remove the \textbf{I}nvariant \textbf{P}attern \textbf{R}ecognition mechanism in Section \ref{sec:invariant}, and determining the spatio-temporal invariant patterns $\mathcal{P}_\mathbf{e}^I$ by randomly generating $\delta_v$ for each node.
    \item \textbf{\modelname~(\textit{w/o Interv}). }We remove the spatio-temporal causal \textbf{Interv}ention mechanism in Section \ref{sec:optimize} and directly optimize the model by Eq. \eqref{eq:final2} without the second $\mathcal{L}_{\mathrm{risk}}$ term.
\end{itemize}

\textbf{Results. } Results are demonstrated in Figure~\ref{fig:abla}. Overall, \modelname~consistently outperforms the other three variants on all datasets. The ablation studies provide insights into the effectiveness of the proposed mechanisms and demonstrate their importance in achieving better performance for OOD generalization on dynamic graphs.

\begin{figure}[htbp]
  \centering
  \begin{minipage}[b]{0.503\textwidth}
    \centering
    \includegraphics[width=\textwidth]{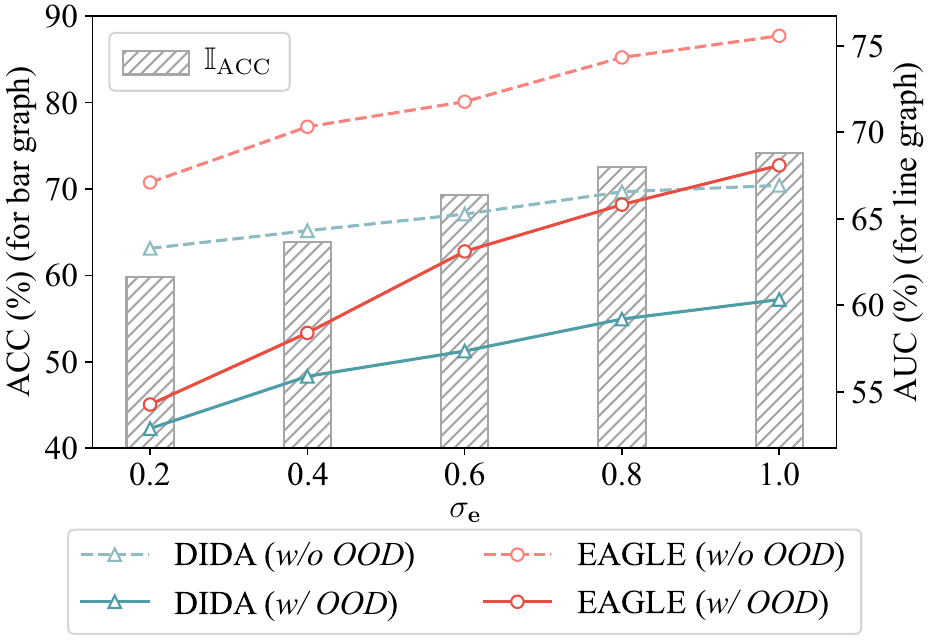}

    \caption{Effects of invariant pattern recognition.}
    \label{fig:syn0.8}
  \end{minipage}
  \hfill
  \begin{minipage}[b]{0.467\textwidth}
    \centering
    \includegraphics[width=\textwidth]{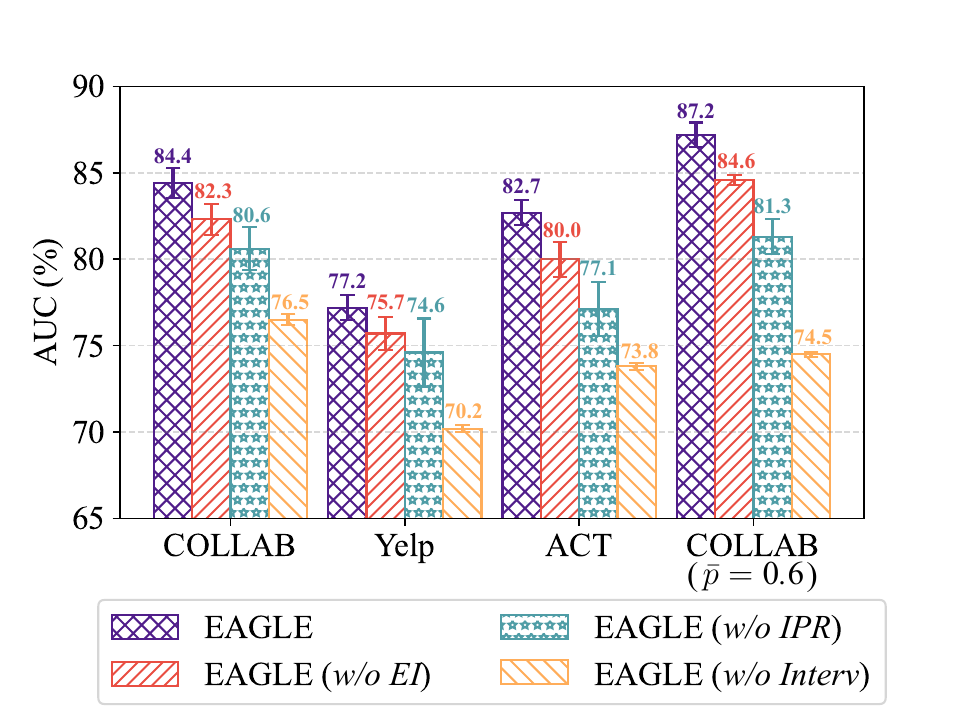}
    \caption{Results of ablation study.}
    \label{fig:abla}
  \end{minipage}
\end{figure}


\subsection{Visualization}
We visualize snapshots in COLLAB using NetworkX~\cite{hagberg2008exploring} as shown in Figure~\ref{fig:enter}, where colors reflect latent environments, numbers denote edge weights, solid lines implicate spatio-temporal invariant patterns dependencies, and dashed lines implicate variant patterns dependencies. \modelname~can gradually exploit the optimal invariant patterns and strengthen reliance, making generalized predictions.

\begin{figure}
    \centering
    \includegraphics[width=1\linewidth]{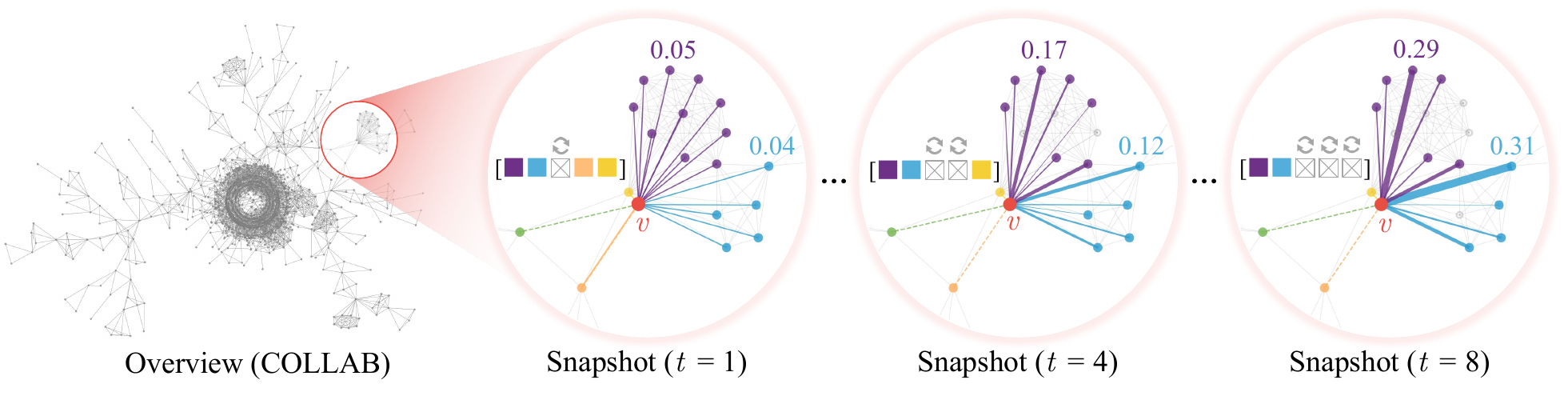}
    \vspace{-1.2em}
    \caption{Visualization of snapshots in COLLAB.}
    \label{fig:enter}
    \vspace{-1.15em}
\end{figure}

\section{Related Work}
\textbf{Dynamic Graph Learning.} Extensive researches~\cite{roddick1999bibliography, atluri2018spatio, kapoor2020examining, you2019hierarchical, wang2021tedic,ji2023higher} address the challenges of dynamic graph learning. Dynamic graph neural networks (DGNNs) are intrinsically utilized to model both spatial and temporal patterns~\cite{yang2021discrete, seo2018structured,wang2021inductive, rossi2020temporal,pareja2020evolvegcn,sankar2020dysat,sun2022position,fu2023hyperbolic} by combining vanilla GNNs and sequential-based models~\cite{medsker2001recurrent,hochreiter1997long,hajiramezanali2019variational}. However, most existing works fail to generalize under distribution shifts.
DIDA~\cite{zhang2022dynamic} is the sole prior work that tackles distribution shifts on dynamic graphs, but it neglects to model the complex environments, which is crucial in identifying invariant patterns. 

\textbf{Out-of-Distribution Generalization.} Most machine learning methods are built on the I.I.D. hypothesis, which can hardly be satisfied in real-world scenarios~\cite{shen2021towards, rojas2018invariant, arjovsky2019invariant}. Out-of-distribution (OOD) generalization has been extensively studied in both academia and industry areas~\cite{shen2021towards, yuan2022towards, hendrycks2021many} and we mainly focus on graphs. Most works concentrate on static graphs for node-level or graph-level tasks~\cite{zhu2021shift, fan2021generalizing, li2022ood, wu2022handling, li2022learning, chen2022learning, wu2022discovering}, supporting by invariant learning method~\cite{creager2021environment, li2021learning, zhao2019learning} with disentangled learning~\cite{bengio2013representation, locatello2019challenging} and causal inference theories~\cite{pearl2010causal, pearl2018book, pearl2009causal}. However, there lack of further research on dynamic graphs with more complicated shift patterns caused by spatio-temporal varying latent environments, which is our main concern.

\textbf{Invariant Learning.} Invariant learning aims to exploit the fewer variant patterns that lead to informative and discriminative representations for stable prediction~\cite{creager2021environment, li2021learning, zhao2019learning}. Supporting by disentangled learning theories and causal learning theories, invariant learning tackles the OOD generalization problem from a more theoretical perspective, revealing a promising power. Disentangle-based methods~\cite{bengio2013representation, locatello2019challenging} learn representations by separating semantic factors of variations in data, making it easier to distinguish invariant factors and establish reliable correlations. Causal-based methods~\cite{gagnon2022woods, arjovsky2019invariant, rosenfeld2020risks, wu2022discovering, chang2020invariant, ahuja2020invariant, mitrovic2020representation} utilize Structural Causal Model (SCM)~\cite{pearl2009causal} to filter out spurious correlations by intervention or counterfactual with $do$-calculus~\cite{pearl2010causal, pearl2018book} and strengthen the invariant causal patterns. However, the invariant learning method of node-level tasks on dynamic graphs is underexplored, mainly due to its complexity in analyzing both spatial and temporal invariant patterns.
\section{Conclusion}
In this paper, we propose a novel framework \textbf{\modelname} for out-of-distribution (OOD) generalization on dynamic graphs by modeling complex dynamic environments for the first time and further exploiting spatio-temporal invariant patterns. \modelname~first models environment by an environment-aware DGNN, and then diversifies the observed environment samples with the environment instantiation mechanism, and finally learns OOD generalized spatio-temporal invariant patterns by the invariant pattern recognition mechanism and performs fine-grained causal interventions node-wisely. Experiment results on both real-world and synthetic datasets demonstrate that \modelname~greatly outperforms existing methods under distribution shifts. One limitation is that we mainly consider the node-level tasks, and leave extending our method to the graph-level OOD generalization for future explorations.

\section*{Acknowledgements}
The corresponding author is Qingyun Sun. The authors of this paper are supported by the National Natural Science Foundation of China through grants No.62225202, No.62302023, and No.62206149. We owe sincere thanks to all authors for their valuable efforts and contributions. We also acknowledge the support of MindSpore, CANN (Compute Architecture for Neural Networks) and Ascend AI Processor used for this research.

{\small{
\bibliographystyle{plain}
\bibliography{reference.bib}
}}


\newpage
\appendix
\section{Notations}\label{sec:notation}
\setcounter{table}{0}
\setcounter{footnote}{0}
\setcounter{figure}{0}
\setcounter{equation}{0}
\renewcommand{\thetable}{\ref*{sec:notation}.\arabic{table}}
\renewcommand{\thefigure}{\ref*{sec:notation}.\arabic{figure}}
\renewcommand{\theequation}{\ref*{sec:notation}.\arabic{equation}}

\begin{table}[htbp]
\vspace{-0.5em}
\label{tab:notation}%
\resizebox{\linewidth}{!}{
  \centering
    \begin{tabular}{cl}
    \toprule
    \textbf{Notations} & \multicolumn{1}{c}{\textbf{Descriptions}} \\
    \midrule
    $\mathbf{DG} = \{ \mathcal{G} \}_{t=1}^{T}$    & Dynamic graph (a set of $T$ discrete graph snapshots) \\
    $\mathcal{G} = (\mathcal{V}, \mathcal{E})$    & Graph with the node set $\mathcal{V}$ and edge set $\mathcal{E}$ \\
    $\mathcal{G}^t = (\mathcal{V}^t, \mathcal{E}^t)$ & Graph snapshot at time $t$ \\
    $\mathbf{X}^{t}$, $\mathbf{A}^t$ & Node features matrix and adjacency matrix of a graph at time $t$ \\
    $\mathbf{x}_v^{t}$, $\mathbf{A}_{(u,v),k}^t$ &  Node features and edge weights of $(u,v)$ under $\mathbf{e}_k$\\
    $\mathcal{G}^{1:t}$, ${Y}^{t}$, $\mathbf{G}^{1:t}$, $\mathbf{Y}^{t}$ & Graph trajectory, labels and their corresponding random variables \\
    $\mathbf{e}$, $\mathbf{e}_i$, $\mathbf{E}$ & Latent environments and their support\\
    $\mathbf{z}_{v,k}^{t}$, $\mathbf{z}_{v}^{\mathbf{e},t}$, $\mathbf{z}_{v}^{\mathbf{e}}$ & Node representations under $\mathbf{e}_k$, at time $t$ and at overall time slices\\
    $\mathbf{z}$, $\mathbf{y}$ & Observed environment sample with its label\\
    $d$, $d^{\prime}$ & Dimension for $\mathbf{x}_v^t$ and $\mathbf{z}_{v,k}^t$, respectively\\
    $K$ & Number of underlying environments (number of convolution channel)\\
    $f(\cdot)$, $w(\cdot)$, $g(\cdot)$ & Model, encoder, and the link predictor\\
    $\ell(\cdot)$ & The loss function\\
    $q_\phi$, $p_\omega$ & The prior distribution and variational distribution of environments\\
    $\mathbb{I}^\star(\cdot)$, $\mathbb{I}(\cdot)$ & Invariant pattern recognition function and its implementation\\
    $\mathcal{P}_\mathbf{e}^I$, $\mathcal{P}_\mathbf{e}^V$, $\mathcal{P}_\mathbf{e}^I(v)$, $\mathcal{P}_\mathbf{e}^V(v)$ & Summary of spatio-temporal invariant/variant patterns for each node\\ 
    $\mathbf{Z}^{1:t}$, $\mathbf{Z}_{I}^{1:t}$, $\mathbf{Z}_{V}^{1:t}$ & Summary of node representations, and their variants under  $\mathcal{P}_\mathbf{e}^I$ and $\mathcal{P}_\mathbf{e}^V$\\
    $\mathcal{S}_{\mathrm{ob}}$, $\mathcal{S}_{\mathrm{ge}}$ & Observed and generated environments sample libraries \\
    $s$, $\mathbf{s}_v$ & Intervention samples from $\mathcal{S}_{\mathrm{ob}}\cup\mathcal{S}_{\mathrm{ge}}$ and their summary for node $v$\\
    $\mathrm{do}(\cdot)$ & $do$-caculus for causal interventions\\
    $\mathcal{L}_\mathrm{task}$, $\mathcal{L}_\mathrm{risk}$, $\mathcal{L}_\mathrm{ECVAE}$ & Task loss, the invariance loss and the ECVAE loss\\
    $\alpha$, $\beta$ & Hyperparameters for loss trade-off\\
    
    \bottomrule
    \end{tabular}%
}
\vspace{-1em}
\end{table}%

\section{Algorithm and Complexity Analysis}\label{sec:algorithm}
\setcounter{table}{0}
\setcounter{footnote}{0}
\setcounter{figure}{0}
\setcounter{equation}{0}
\renewcommand{\thetable}{\ref*{sec:algorithm}.\arabic{table}}
\renewcommand{\thefigure}{\ref*{sec:algorithm}.\arabic{figure}}
\renewcommand{\theequation}{\ref*{sec:algorithm}.\arabic{equation}}

\IncMargin{1.3em}
\begin{algorithm}[ht]
    \caption{Overall training process of \modelname.}
    \label{alg:alg}
    \KwIn{Dynamic graph $\mathbf{DG} = (\{ \mathcal{G} \}_{t=1}^{T})$ with labels $\mathbf{Y}^{1:T}$ of link occurrence; Number of training epochs $E$; Number of intervention times $S$; Hyperparameters $\alpha$ and $\beta$.}
    \KwOut{Optimized model $f_{\boldsymbol{\theta}}^{\star}$; Predicted label $Y^{T}$ of link occurrence at time $T+1$.}
    \BlankLine 
    Initialize parameters randomly\;
    \For{$i=1,2,\cdots,E$}{
         \vspace{0.5em}
        \tcp{Environments Modeling and Inferring} 
        Obtain representations for each node at each time with the support of $\mathbf{e}$, as $\mathbf{z}_{v}^{\mathbf{e},t} \gets$ Eq. \eqref{eq:repre}\;
        Establish the observed environment samples library $\mathcal{S}_{\mathrm{ob}} \gets$ Eq. \eqref{eq:sob}\;
        Infer the distribution $p_\omega(\mathbf{e})$ with $\mathcal{L}_{\mathrm{ECVAE}} \gets$ Eq. \eqref{eq:ecvae} and generate samples library $\mathcal{S}_{\mathrm{ge}}$\; \vspace{0.5em}
        \tcp{Environments Extrapolating} 
        Learn the invariance threshold $\delta_v \gets$ Eq. \eqref{eq:delta} by function $\mathbb{I}(\cdot) \gets$ Eq. \eqref{eq:trans}\;
        Recognize the invariant/variant patterns for each node, as $\mathcal{P}_\mathbf{e}^{I}(v)$, $\mathcal{P}_\mathbf{e}^{V}(v) \gets$ Eq. \eqref{eq:evi}\;
        Calculate task loss depending on the invariant patterns, as $\mathcal{L}_{\mathrm{task}} \gets$ Eq. \eqref{eq:taskloss}\;
         \vspace{0.5em}\tcp{Environments Generalizing}
        \For{$j=1,2,\cdots,S$}{
            Sample items from $\mathcal{S}_{\mathrm{ob}} \cup \mathcal{S}_{\mathrm{ge}}$ and perform intervention for each node, as Eq. \eqref{eq:intervention}\;
            Calculate intervention loss, as $\mathcal{L}_{\mathrm{risk}} \gets$ Eq. \eqref{eq:intervloss}\;
        }
         \vspace{0.5em}\tcp{Optimize}
        Calculate the overall loss, as $\mathcal{L} \gets$ Eq. \eqref{eq:final2}\;
        Update model parameters by minimizing $\mathcal{L}$.
    }
\end{algorithm}
\DecMargin{1.3em}
The overall training process of our \modelname~is shown in Algorithm~\ref{alg:alg}.

\textbf{Computational Complexity Analysis.} We analyze the computational complexity of each part in \modelname~as follows. Denote $|\mathcal{V}|$ and $|\mathcal{E}|$ as the total number of nodes and edges in each graph snapshot. 

In Section~\ref{sec:EIDyGNN}, operations of the EAConv layer in EA-DGNN can be parallelized across all nodes, which is highly efficient. Thus, the computation complexity of EA-DGNN is:
\begin{equation}\label{Eq:eadgnn}
    \mathcal{O} \left ( |\mathcal{E}| \sum_{l=0}^{L} d^{(l)} + \mathcal{V} \left ( \sum_{l=1}^{L} d^{(l-1)}d^{(l)} + (d^{(L)})^2 \right ) \right ),
\end{equation}
where $d^{(l)}$ denotes the dimension of the $l$-th layer. As $L$ is a small number, and $d^{(l)}$ is a constant, the Eq.~\eqref{Eq:eadgnn} can be rewritten as $\mathcal{O}(|\mathcal{E}|d+|\mathcal{V}|d^2)$, where $d$ is the universal notation of all $d^{(l)}$. 

In Section~\ref{sec:modeling}, the computation complexity of the ECVAE is a compound of the encoder and decoder, with the same computation complexity as $\mathcal{O}(|\mathbf{z}|L^\prime d)$, where $|\mathbf{z}|$ is the number of the observed environment samples, $L^\prime$ is the number of layers in the encoder and decoder. Also, as $L^\prime$ is a small number, we omit it for brevity. Thus, the computation complexity of ECVAE is $\mathcal{O}(|\mathbf{z}|d)$. 

In Section~\ref{sec:invariant}, we recognize the invariant/variant patterns for all nodes by the function $\mathbb{I}(\cdot)$ in parallel, with the computation complexity $\mathcal{O}(K\log|\mathcal{V}|)$.

In Section~\ref{sec:optimize}, we perform sampling and replacing as an implementation of causal interventions. Denote $|\mathcal{E}|_p$ as the number of edges to predict and $|\mathcal{S}|$ as the size of the intervention set, which is usually set as a small constant. The spatio-temporal causal intervention mechanism owns a computation complexity compounding of sampling and replacing as $\mathcal{O}(|\mathcal{S}|d) + \mathcal{O}(|\mathcal{E}_p| |\mathcal{S}|d)$ in training, and no extra computation complexity in the inference stage.

Therefore, the overall computation complexity of \modelname~is:
\begin{equation}
    \mathcal{O}(|\mathcal{E}|d+|\mathcal{V}|d^2) + \mathcal{O}(|\mathbf{z}|d) + \mathcal{O}(K\log|\mathcal{V}|) + \mathcal{O}( |\mathcal{S}|d) + \mathcal{O}(|\mathcal{E}_p| |\mathcal{S}|d).
\end{equation}

In summary, \modelname~has a linear computation complexity with respect to the number of nodes and edges, which is on par with DIDA~\cite{zhang2022dynamic} and other existing dynamic GNNs. We believe that the computational complexity bottleneck of \modelname~lies in the spatio-temporal causal intervention mechanism. We further analyze the intervention efficiency in Appendix~\ref{sec:efficiency}.

\textbf{Space Complexity Analysis.} We analyze the space complexity of each part in \modelname~as follows. Denote $|\mathcal{V}|$ and $|\mathcal{E}|$ as the number of nodes and edges, respectively, $K$ as the number of environments, $T$ as the number of time slices, $L$ as the number of layers in EA-DGNN, $L^\prime$ as the number of layers in ECVAE, $d$ as the dimension of input node features, $d^\prime=K d$ as the hidden dimension of EAConv layers in EA-DGNN, $d^{\prime \prime}$ as the hidden dimension of the encoder and decoder networks layer of ECVAE, $\sum_{v \in \mathcal{V}} \mathrm{Var}(\mathbf{z}_v^{\mathbf{e}\prime})$ as the variance of $K$ environment-aware representations. 

Here we provide a rough analysis of \modelname's space complexity. Note that, as the space complexity analysis of deep learning models is complicated, we omit some less important terms, such as intermediate activations, \etc
\begin{itemize}[leftmargin=1.5em]
    \item storing the dynamic graph: $\mathcal{O}(K T(|\mathcal{V}|+|\mathcal{E}|))$.
    \item storing the node input features: $\mathcal{O}(|\mathcal{V}|  K  T  d)$.
    \item the EAConv layer: $\mathcal{O}(L  d^{\prime 2})$.
    \item the encoder and decoder networks of ECVAE: $\mathcal{O}(L^\prime  d^{\prime \prime})$.
    \item storing generated environment samples (the number set to be the same with the observed ones): $\mathcal{O}(|\mathcal{V}| K  T  d^{\prime \prime})$.
    \item storing the states for function $\mathbb{I}(\cdot, \cdot)$: $\mathcal{O}(K  \sum_{v \in \mathcal{V}} \mathrm{Var}(\mathbf{z}_v^{\mathbf{e}\prime}))$.
\end{itemize}

Then the overall space complexity of \modelname~can be roughly calculated as:
\begin{equation}
    \mathcal{O}(K T(|\mathcal{V}|+|\mathcal{E}|)) + \mathcal{O}(|\mathcal{V}|  K  T  d) + \mathcal{O}(L  d^{\prime 2}) + \mathcal{O}(L^\prime  d^{\prime \prime}) + \mathcal{O}(|\mathcal{V}| K  T  d^{\prime \prime}) + \mathcal{O}(K  \sum_{v \in \mathcal{V}} \mathrm{Var}(\mathbf{z}_v^{\mathbf{e}\prime})).
\end{equation}

However, it is hard to intuitively draw conclusions about memory requirements from the space complexity analysis. Based on our experiments experience, \modelname~can be trained and tested under the hardware configurations (including memory requirements) listed in Appendix~\ref{sec:configs}, which is on par with the related works' requirements.
\section{Proofs} \label{sec:proofs}

\setcounter{table}{0}
\setcounter{footnote}{0}
\setcounter{figure}{0}
\setcounter{equation}{0}
\renewcommand{\theequation}{\ref*{sec:proofs}.\arabic{equation}}
\renewcommand{\thetable}{\ref*{sec:proofs}.\arabic{table}}
\renewcommand{\thefigure}{\ref*{sec:proofs}.\arabic{figure}}

\subsection{Proof of Proposition \ref{prop:cvae}} \label{sec:proof_cvae}

\begin{Prop}\label{Prop:cvae}
Given observed environment samples from the dynamic graph $\mathcal{G}^{1:T}$ denoted as
\begin{equation}\label{Eq:sob}
    \mathbf{z} = \bigcup_{v \in \mathcal{V}} \bigcup_{k=1}^{K} \bigcup_{t=1}^{T} \{\mathbf{z}_{v,k}^{t}\} \in \mathbb{R}^{(|\mathcal{V}|\times K\times T)\times d'} \xlongequal[]{\rm def}  \mathcal{S}_{\mathrm{ob}}  
\end{equation} with their corresponding one-hot multi-labels $\mathbf{y}$, 
the environment variable $\mathbf{e}$ is drawn from the prior distribution $p_{\omega}(\mathbf{e} \mid \mathbf{y})$ across $T$ time slices, and $\mathbf{z}$ is generated from the distribution $p_{\omega}(\mathbf{z} \mid \mathbf{y}, \mathbf{e})$. Maximizing the conditional log-likelihood $\log p_{\omega}(\mathbf{z}\mid\mathbf{y})$ leads to an optimal ECVAE by minimizing:
\begin{equation}\label{Eq:ecvae}
    {\mathcal{L}}_{\mathrm{ECVAE}}=\mathrm{KL}[q_{\phi}(\mathbf{e} \mid \mathbf{z}, \mathbf{y}) \| p_{\omega}(\mathbf{e} \mid \mathbf{y})]-\frac{1}{|\mathbf{z}|} \sum_{i=1}^{|\mathbf{z}|} \log p_{\omega}(\mathbf{z} \mid\mathbf{y}, \mathbf{e}^{(i)}),
\end{equation}
where $\mathrm{KL}[\cdot \| \cdot]$ is the Kullback-Leibler (KL) divergence~\cite{joyce2011kullback}, $|\mathbf{z}|$ is the number of observed environment samples, $\mathbf{e}^{(i)}$ is the $i$-th sampling by the reparameterization trick.
\end{Prop}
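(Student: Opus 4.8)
The plan is to derive the ECVAE loss as a negative evidence lower bound (ELBO) on the conditional log-likelihood $\log p_{\omega}(\mathbf{z}\mid\mathbf{y})$, mirroring the standard derivation of the conditional VAE objective but carried out with the environment variable $\mathbf{e}$ as the latent. First I would introduce the variational posterior $q_{\phi}(\mathbf{e}\mid\mathbf{z},\mathbf{y})$ and write
\begin{equation}
\log p_{\omega}(\mathbf{z}\mid\mathbf{y}) = \mathbb{E}_{q_{\phi}(\mathbf{e}\mid\mathbf{z},\mathbf{y})}\!\left[\log p_{\omega}(\mathbf{z}\mid\mathbf{y})\right],
\end{equation}
then insert $p_{\omega}(\mathbf{z}\mid\mathbf{y}) = \frac{p_{\omega}(\mathbf{z},\mathbf{e}\mid\mathbf{y})}{p_{\omega}(\mathbf{e}\mid\mathbf{z},\mathbf{y})}$ and multiply and divide inside the logarithm by $q_{\phi}(\mathbf{e}\mid\mathbf{z},\mathbf{y})$. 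Splitting the logarithm gives the identity
\begin{equation}
\log p_{\omega}(\mathbf{z}\mid\mathbf{y}) = \mathbb{E}_{q_{\phi}}\!\left[\log \frac{p_{\omega}(\mathbf{z},\mathbf{e}\mid\mathbf{y})}{q_{\phi}(\mathbf{e}\mid\mathbf{z},\mathbf{y})}\right] + \mathrm{KL}\!\left[q_{\phi}(\mathbf{e}\mid\mathbf{z},\mathbf{y}) \,\|\, p_{\omega}(\mathbf{e}\mid\mathbf{z},\mathbf{y})\right].
\end{equation}

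Next I would use nonnegativity of the KL term to obtain $\log p_{\omega}(\mathbf{z}\mid\mathbf{y}) \ge \mathrm{ELBO}$, where $\mathrm{ELBO} = \mathbb{E}_{q_{\phi}}[\log p_{\omega}(\mathbf{z},\mathbf{e}\mid\mathbf{y})] - \mathbb{E}_{q_{\phi}}[\log q_{\phi}(\mathbf{e}\mid\mathbf{z},\mathbf{y})]$. Factorizing the joint prior-conditional as $p_{\omega}(\mathbf{z},\mathbf{e}\mid\mathbf{y}) = p_{\omega}(\mathbf{z}\mid\mathbf{y},\mathbf{e})\,p_{\omega}(\mathbf{e}\mid\mathbf{y})$ and regrouping, the ELBO becomes
\begin{equation}
\mathrm{ELBO} = \mathbb{E}_{q_{\phi}(\mathbf{e}\mid\mathbf{z},\mathbf{y})}\!\left[\log p_{\omega}(\mathbf{z}\mid\mathbf{y},\mathbf{e})\right] - \mathrm{KL}\!\left[q_{\phi}(\mathbf{e}\mid\mathbf{z},\mathbf{y}) \,\|\, p_{\omega}(\mathbf{e}\mid\mathbf{y})\right].
\end{equation}
Maximizing $\log p_{\omega}(\mathbf{z}\mid\mathbf{y})$ is then relaxed to maximizing this lower bound, i.e.\ minimizing its negative; approximating the intractable expectation over $q_{\phi}$ by a Monte Carlo average over $|\mathbf{z}|$ reparameterized samples $\mathbf{e}^{(i)}$ (so gradients flow through $\phi$) yields exactly
\begin{equation}
{\mathcal{L}}_{\mathrm{ECVAE}}=\mathrm{KL}\!\left[q_{\phi}(\mathbf{e}\mid\mathbf{z},\mathbf{y}) \,\|\, p_{\omega}(\mathbf{e}\mid\mathbf{y})\right] - \frac{1}{|\mathbf{z}|}\sum_{i=1}^{|\mathbf{z}|}\log p_{\omega}(\mathbf{z}\mid\mathbf{y},\mathbf{e}^{(i)}),
\end{equation}
which is the claimed objective.

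The derivation itself is a routine adaptation of the CVAE ELBO, so the main subtlety is bookkeeping rather than a deep obstacle: one must be careful that all densities are \emph{conditioned on the multi-label} $\mathbf{y}$ throughout (the time index $t$ and environment index $k$ are encoded in $\mathbf{y}$), and that the factorization $p_{\omega}(\mathbf{z},\mathbf{e}\mid\mathbf{y}) = p_{\omega}(\mathbf{z}\mid\mathbf{y},\mathbf{e})\,p_{\omega}(\mathbf{e}\mid\mathbf{y})$ is consistent with the stated generative story ($\mathbf{e}\sim p_{\omega}(\mathbf{e}\mid\mathbf{y})$, then $\mathbf{z}\sim p_{\omega}(\mathbf{z}\mid\mathbf{y},\mathbf{e})$). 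I would also note explicitly that the Monte Carlo estimator is unbiased and that the reparameterization trick is what makes the first (reconstruction) term differentiable in $\phi$; the tightness of the bound (equality iff $q_{\phi}(\mathbf{e}\mid\mathbf{z},\mathbf{y}) = p_{\omega}(\mathbf{e}\mid\mathbf{z},\mathbf{y})$) justifies calling a minimizer an ``optimal ECVAE.'' No step requires more than elementary manipulation of expectations and the definition of KL divergence.
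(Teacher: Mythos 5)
Your derivation is correct and follows essentially the same route as the paper: both establish the identity $\log p_{\omega}(\mathbf{z}\mid\mathbf{y}) = \mathrm{ELBO} + \mathrm{KL}[q_{\phi}(\mathbf{e}\mid\mathbf{z},\mathbf{y})\,\|\,p_{\omega}(\mathbf{e}\mid\mathbf{z},\mathbf{y})]$, invoke nonnegativity of the KL term, factorize the joint as $p_{\omega}(\mathbf{z}\mid\mathbf{y},\mathbf{e})\,p_{\omega}(\mathbf{e}\mid\mathbf{y})$, and replace the intractable expectation by a reparameterized Monte Carlo average. The only difference is presentational (you start from the log-likelihood and insert the variational posterior, while the paper expands the KL divergence and isolates the log-likelihood), which yields the same objective.
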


\begin{proof}
The distribution distance between $q_\phi(\mathbf{e}\mid\mathbf{z},\mathbf{y})$ and $p_\omega(\mathbf{e}\mid\mathbf{z},\mathbf{y})$ can be calculated by the KL-divergence:
\begin{equation}
\begin{aligned}
\mathrm{KL}[q_\phi(\mathbf{e}\mid\mathbf{z},\mathbf{y})\| p_\omega(\mathbf{e}\mid\mathbf{z},\mathbf{y})] &=\int q_{\phi}(\mathbf{e} \mid \mathbf{z}, \mathbf{y}) \log \frac{q_{\phi}(\mathbf{e} \mid \mathbf{z}, \mathbf{y})}{p_{\omega}(\mathbf{e} \mid \mathbf{z}, \mathbf{y})} \dif \phi \\
    &=\int q_{\phi}(\mathbf{e}\mid\mathbf{z},\mathbf{y}) \log \frac{q_{\phi}(\mathbf{e}\mid\mathbf{z},\mathbf{y}) p_{\omega}(\mathbf{z}\mid\mathbf{y}) p_{\omega}(\mathbf{y})}{p_{\omega}(\mathbf{e} , \mathbf{z}, \mathbf{y})} \dif \phi \\
& =\int q_{\phi}(\mathbf{e} \mid \mathbf{z}, \mathbf{y}) \log q_{\phi}(\mathbf{e} \mid \mathbf{z}, \mathbf{y}) \dif \phi+\underbrace{\int q_{\phi}(\mathbf{e} \mid \mathbf{z}, \mathbf{y}) \log p_{\omega}(\mathbf{z} \mid \mathbf{y}) \dif \phi}_{\log p_\omega (\mathbf{z}\mid\mathbf{y}) }\\
&\;\;\;\;+\int q_{\phi}(\mathbf{e} \mid \mathbf{z}, \mathbf{y}) \log p_{\omega}(\mathbf{y}) \dif \phi-\int q_{\phi}(\mathbf{e} \mid \mathbf{z}, \mathbf{y}) \log p_{\omega}(\mathbf{e}, \mathbf{z}, \mathbf{y}) \dif \phi \\
&=\log p_\omega (\mathbf{z}\mid\mathbf{y}) + \int q_\phi(\mathbf{e}\mid\mathbf{z}, \mathbf{y}) \log \frac{q_\phi(\mathbf{e}\mid\mathbf{z},\mathbf{y})}{p_\omega (\mathbf{z}\mid\mathbf{y}, \mathbf{e})p_\omega(\mathbf{e}\mid \mathbf{y})} \dif \phi \\
&=\log p_\omega (\mathbf{z}\mid\mathbf{y}) + \mathbb{E}_{q_{\phi}(\mathbf{e} \mid \mathbf{z}, \mathbf{y})}[\log q_{\phi}(\mathbf{e} \mid \mathbf{z}, \mathbf{y})-\log p_{\omega}(\mathbf{e}, \mathbf{z} \mid \mathbf{y})] \\
&=\log p_\omega (\mathbf{z}\mid\mathbf{y}) - \mathbb{E}_{q_{\phi}(\mathbf{e} \mid \mathbf{z}, \mathbf{y})}[-\log q_{\phi}(\mathbf{e} \mid \mathbf{z}, \mathbf{y})+\log p_{\omega}(\mathbf{e}, \mathbf{z} \mid \mathbf{y})].
\end{aligned}
\end{equation}

Thus the conditional log-likelihood $\log p_{\omega}(\mathbf{z}\mid\mathbf{y})$ can be rewritten as:
\begin{equation}
    \log p_{\omega}(\mathbf{z} \mid \mathbf{y}) =\mathrm{KL}[q_{\phi}(\mathbf{e} \mid \mathbf{z}, \mathbf{y}) \| p_{\omega}(\mathbf{e} \mid \mathbf{z}, \mathbf{y})]+\mathbb{E}_{q_{\phi}(\mathbf{e} \mid \mathbf{z}, \mathbf{y})}[-\log q_{\phi}(\mathbf{e} \mid \mathbf{z}, \mathbf{y})+\log p_{\omega}(\mathbf{e}, \mathbf{z} \mid \mathbf{y})].
\end{equation}
Since this KL-divergence is non-negative, we then provide an Evidence Lower Bound (ELBO) for $\log p_{\omega}(\mathbf{y}\mid\mathbf{z})$:
\begin{equation}
    \begin{aligned}
          \log p_{\omega}(\mathbf{z} \mid \mathbf{y}) & \geq \mathbb{E}_{q_{\phi}(\mathbf{e} \mid \mathbf{z}, \mathbf{y})}[-\log q_{\phi}(\mathbf{e} \mid \mathbf{z}, \mathbf{y})+\log p_{\omega}(\mathbf{e}, \mathbf{z} \mid \mathbf{y})] \\
& =\mathbb{E}_{q_{\phi}(\mathbf{e} \mid \mathbf{z}, \mathbf{y})}[-\log q_{\phi}(\mathbf{e} \mid \mathbf{z}, \mathbf{y})+\log p_{\omega}(\mathbf{e} \mid \mathbf{y})]+\mathbb{E}_{q_{\phi}(\mathbf{e} \mid \mathbf{z}, \mathbf{y})}[\log p_{\omega}(\mathbf{z} \mid \mathbf{y}, \mathbf{e})] \\
& =-\mathrm{KL}[q_{\phi}(\mathbf{e} \mid \mathbf{z}, \mathbf{y}) \| p_{\omega}(\mathbf{e} \mid \mathbf{y})]+\mathbb{E}_{q_{\phi}(\mathbf{e} \mid \mathbf{z}, \mathbf{y})}[\log p_{\omega}(\mathbf{z} \mid \mathbf{y}, \mathbf{e})].
    \end{aligned}
\end{equation}
We can maximize $\log p_{\omega}(\mathbf{z} \mid \mathbf{y})$ by maximizing the ELBO, or minimizing:
\begin{equation}
    \mathcal{L}_{\mathrm{ECVAE}} = -\mathrm{ELBO} = \mathrm{KL}[q_{\phi}(\mathbf{e} \mid \mathbf{z}, \mathbf{y}) \| p_{\omega}(\mathbf{e} \mid \mathbf{y})]-\mathbb{E}_{q_{\phi}(\mathbf{e} \mid \mathbf{z}, \mathbf{y})}[\log p_{\omega}(\mathbf{z} \mid \mathbf{y}, \mathbf{e})].
\end{equation}
While the second term in $\mathcal{L}_{\mathrm{ECVAE}}$ is the maximum likelihood estimation, which is infeasible to calculate directly under the expectation of the latent environment variable $\mathbf{e} \sim p_\phi(\mathbf{e}\mid\mathbf{z}, \mathbf{y})$ across $T$ time slices. Inspired by Markov Chain Monte Carlo (MCMC) sampling~\cite{gilks1995markov}, it can be estimated as:
\begin{equation}
    \mathcal{L}_{\mathrm{ECVAE}} = \mathrm{KL}[q_{\phi}(\mathbf{e} \mid \mathbf{z}, \mathbf{y}) \| p_{\omega}(\mathbf{e} \mid \mathbf{y})]-\frac{1}{|\mathbf{z}|} \sum_{i=1}^{|\mathbf{z}|} \log p_{\omega}(\mathbf{z} \mid\mathbf{y}, \mathbf{e}^{(i)}).
\end{equation}
In implementations, we assume $q_{\phi}(\mathbf{e} \mid \mathbf{z}, \mathbf{y})$ and $p_{\omega}(\mathbf{e} \mid \mathbf{y})$ follow the multivariate normal distribution $\mathcal{N}(\boldsymbol{\mu};\boldsymbol{\sigma})$ parameterized by $\boldsymbol{\mu}$ and $\boldsymbol{\sigma}$, so that the KL-divergence can be easily calculated. In order to optimize the ECVAE by back-propagation, we utilize a reparameterization trick: $\mathbf{e}^{(i)} = e_{\omega}(\mathbf{z},\mathbf{y},\epsilon^{(i)})$, where $\epsilon^{(i)} \sim \mathcal{N(\mathbf{0};\mathbf{I})}$. Here $e_{\omega}(\cdot,\cdot,\cdot)$ is some vector-valued functions parameterized by $\phi$.
\end{proof}

\subsection{Proof of Proposition \ref{prop:invariant}} \label{sec:proof_invariant}

\begin{Prop}[A Solution for $\mathbb{I}^\star(\cdot)$]\label{Prop:invariant}
     Denote $\mathbf{z}_v^{\mathbf{e}\prime} = [\mathbf{z}_{v,1}^{\prime}, \mathbf{z}_{v,2}^{\prime}, \cdots, \mathbf{z}_{v,K}^{\prime}]$, where $\mathbf{z}_{v,k}^{\prime} = \bigcup_{t=1}^{T} \mathbf{z}_{v,k}^{t}$. 
     Let $\mathrm{Var}(\mathbf{z}_v^{\mathbf{e}\prime}) \in \mathbb{R}^{K}$ represents the variance of $K$ environment-aware representations. The Boolean function $\mathbb{I}(\cdot)$ is a solution for $\mathbb{I}^\star(\cdot)$ with the following state update equation:
    \begin{equation}\label{Eq:trans}
        \mathbb{I}(i,j) = \begin{cases}
  \mathbb{I}(i-1,j) \vee  \mathbb{I}(i-1,j-\mathrm{Var}(\mathbf{z}_v^{\mathbf{e}\prime})[i-1]), & j \ge \mathrm{Var}(\mathbf{z}_v^{\mathbf{e}\prime})[i-1]\\
  \mathbb{I}(i-1,j), & \text{otherwise}
\end{cases},
    \end{equation}
    $\mathbb{I}(i,j)$ indicates whether it is feasible to select from the first $i$ elements in $ \mathrm{Var}(\mathbf{z}_v^{\mathbf{e}\prime})$ so that their sum is $j$. Traversing $j$ in reverse order from $\lfloor \sum\mathrm{Var}(\mathbf{z}_v^{\mathbf{e}\prime}) /2 \rfloor$ until satisfying $\mathbb{I}(K,j)$ is True, we reach:
    \begin{gather}
        \label{Eq:delta}
        \delta_v = \sum\mathrm{Var}(\mathbf{z}_v^{\mathbf{e}\prime}) - 2j,\\ 
        \label{Eq:evi}
        \mathcal{P}_\mathbf{e}^I(v) = \left \{ \mathbf{e}_{k} \mid \mathrm{Var}(\mathbf{z}_v^{\mathbf{e}\prime})[k] \le \frac{1}{K} \sum\mathrm{Var}(\mathbf{z}_v^{\mathbf{e}\prime}) - \frac{\delta_v}{2} \right \},\quad \mathcal{P}_\mathbf{e}^V(v) = \overline{\mathcal{P}_\mathbf{e}^I(v)},
    \end{gather}
    where $\delta_v$ is the optimal spatio-temporal invariance threshold of node $v$.
\end{Prop}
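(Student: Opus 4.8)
The plan is to recognize that the state update equation \eqref{eq:trans} is exactly the classical subset-sum dynamic program, and that the traversal described after it extracts the partition of the $K$ variance values into two groups with the smallest possible gap between their sums---the balanced-partition problem. I would therefore structure the argument in two halves: first, establish that $\mathbb{I}(i,j)$ as defined by \eqref{eq:trans} correctly decides subset-sum reachability (this licenses interpreting the table), and second, show that the resulting threshold $\delta_v$ and the induced sets $\mathcal{P}_\mathbf{e}^I(v), \mathcal{P}_\mathbf{e}^V(v)$ satisfy the defining requirements of $\mathbb{I}^\star(\cdot)$ in Assumption \ref{asm:invariant}, namely the Invariance Property (a) and Sufficient Condition (b).

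First I would verify the dynamic program. By induction on $i$: $\mathbb{I}(0,0)$ is True and $\mathbb{I}(0,j)$ is False for $j>0$; for the inductive step, a subset of the first $i$ elements summing to $j$ either omits the $(i-1)$-th element (giving $\mathbb{I}(i-1,j)$) or includes it (requiring $j \ge \mathrm{Var}(\mathbf{z}_v^{\mathbf{e}\prime})[i-1]$ and $\mathbb{I}(i-1, j - \mathrm{Var}(\mathbf{z}_v^{\mathbf{e}\prime})[i-1])$), which is precisely the recurrence. Hence $\mathbb{I}(K,j)$ is True iff some subset $S \subseteq \{1,\dots,K\}$ has $\sum_{k \in S}\mathrm{Var}(\mathbf{z}_v^{\mathbf{e}\prime})[k] = j$. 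Scanning $j$ downward from $\lfloor \sum\mathrm{Var}(\mathbf{z}_v^{\mathbf{e}\prime})/2\rfloor$ and stopping at the first True therefore returns the largest achievable subset sum not exceeding half the total; call it $j^\star$. Then $\delta_v = \sum\mathrm{Var}(\mathbf{z}_v^{\mathbf{e}\prime}) - 2j^\star \ge 0$ is the minimum over all bipartitions of $\bigl|\sum_{k\in S}\mathrm{Var}[k] - \sum_{k\notin S}\mathrm{Var}[k]\bigr|$, i.e.\ the optimal balanced-partition gap; this is the ``maximum difference between the variance means that reveals the current invariance status'' referred to in the text, and it is what makes $\delta_v$ the optimal threshold and the optimization start point.

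Next I would show the cut $\mathrm{Var}(\mathbf{z}_v^{\mathbf{e}\prime})[k] \le \frac{1}{K}\sum\mathrm{Var}(\mathbf{z}_v^{\mathbf{e}\prime}) - \frac{\delta_v}{2}$ is exactly the one realizing this optimal partition, so that $\mathcal{P}_\mathbf{e}^I(v)$ collects the low-variance (stable-across-environments) channels and $\mathcal{P}_\mathbf{e}^V(v) = \overline{\mathcal{P}_\mathbf{e}^I(v)}$ the high-variance ones. Invoking Assumption \ref{asm:invariant}: low variance of $\mathbf{z}_{v,k}^{\prime}$ across the environment index $k$ (equivalently, across $\mathbf{e}$) is the empirical witness that $p(\mathbf{Y}^T \mid \mathcal{P}_\mathbf{e}^I, \mathbf{e})$ does not depend on $\mathbf{e}$, giving the Invariance Property (a); and because the partition is chosen to maximally separate the two variance regimes, no channel carrying residual invariant signal is left in $\mathcal{P}_\mathbf{e}^V(v)$, so conditioning on $\mathcal{P}_\mathbf{e}^I(v)$ renders $\mathbf{Y}^T$ independent of $\mathcal{P}_\mathbf{e}^V(v)$, which is the Sufficient Condition (b). Thus $\mathbb{I}(\cdot)$ meets every requirement placed on $\mathbb{I}^\star(\cdot)$, and it does so via a polynomial-time dynamic program rather than an exhaustive search over $2^K$ partitions.

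I expect the main obstacle to be the second half rather than the first: the dynamic-program correctness is routine, but arguing rigorously that the variance-based balanced partition \emph{coincides} with an $\mathbb{I}^\star(\cdot)$ satisfying (a) and (b) requires bridging the combinatorial optimum (minimum partition gap of the variance vector) and the probabilistic/causal statements in Assumption \ref{asm:invariant}. The cleanest route is to read Assumption \ref{asm:invariant} as asserting that the invariant channels are precisely those whose representations are (approximately) environment-invariant, quantify ``invariant'' by low variance across $k$, and then note that any partition separating ``low variance'' from ``high variance'' with the largest margin is the one the dynamic program finds; the subtlety is handling ties and the discretization implicit in the floor functions, which I would dispatch by remarking that the construction yields \emph{a} solution (as the proposition claims, ``$\mathbb{I}(\cdot)$ is a solution'') rather than a unique one, so it suffices to exhibit consistency with (a) and (b) rather than to derive them.
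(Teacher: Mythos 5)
Your first half is correct and, in substance, covers the same ground as the paper's proof, but by a different route: you prove the subset-sum recurrence correct by direct induction and then identify $\delta_v$ as the minimum balanced-partition gap obtained by scanning $j$ down from $\lfloor \sum\mathrm{Var}(\mathbf{z}_v^{\mathbf{e}\prime})/2\rfloor$, whereas the paper never argues correctness of the recurrence per se --- it instead verifies the three classical dynamic-programming prerequisites (optimal substructure, the non-aftereffect property, and overlapping sub-problems) for the partition problem and stops there. Your inductive argument is the more elementary and arguably tighter justification of what the table $\mathbb{I}(K,j)$ actually computes; the paper's version buys a reduction of the original threshold-finding problem into sub-problems of the same form, but both establish the same algorithmic fact.

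The divergence is in your second half, and there are two issues. First, the paper simply does not attempt the bridge you propose: its proof of this proposition ends once the DP structure is verified, and the connection to the Invariance Property and Sufficient Condition of Assumption~\ref{asm:invariant} is deferred to Proposition~\ref{prop:3} (minimizing $\mathcal{L}_{\mathrm{task}}+\alpha\mathcal{L}_{\mathrm{risk}}$), with the paper explicitly conceding elsewhere that this proposition alone cannot guarantee global identification of $\mathbb{I}^\star(\cdot)$ and must be optimized jointly with $\mathcal{L}_{\mathrm{risk}}$. So your instinct that this is the hard part is right, but trying to derive (a) and (b) from low empirical variance would require assumptions the paper never makes; matching the paper only requires the combinatorial statement. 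Second, your intermediate claim that the threshold cut $\mathrm{Var}(\mathbf{z}_v^{\mathbf{e}\prime})[k] \le \frac{1}{K}\sum\mathrm{Var}(\mathbf{z}_v^{\mathbf{e}\prime}) - \frac{\delta_v}{2}$ ``is exactly the one realizing this optimal partition'' is false in general: the optimal balanced partition need not be a threshold partition. For instance, with variances $[1,2,3,4]$ the DP gives $\delta_v=0$ (realized by $\{1,4\}$ versus $\{2,3\}$), while the threshold rule selects $\{1,2\}$, whose gap is $4$. The paper never asserts this coincidence --- Eq.~\eqref{Eq:evi} is taken as the \emph{definition} of $\mathcal{P}_\mathbf{e}^I(v)$ using the scalar $\delta_v$ produced by the DP --- so you should drop that claim rather than try to prove it.
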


\begin{proof}\label{}
In order to prove the boolean function $\mathbb{I}(\cdot)$ is a solution for $\mathbb{I}^\star(\cdot)$ in Assumption \ref{asm:invariant}, the problem $\mathbb{I}(\cdot)$ solves should satisfies: \textbf{(a) Optimal substructure}; \textbf{(b) Non-aftereffect property}; \textbf{(c) Overlapping sub-problems}.

Next, we prove the above three conditions are valid.

\textbf{{(a) Optimal substructure.}} It is said that the problem has the optimal substructure property when the optimal solution of the problem covers the optimal solutions of its subproblems. Now we prove the optimal solution of determining the spatio-temporal invariant patterns $\mathcal{P}_\mathbf{e}^I(v)$ for node $v$ is constructed from the optimal solutions of the sub-problems with the bottom-up approach by using the optimal substructure property of the problem. 
    
    As $\mathbb{I}(i,j)$ indicates whether it is feasible to select some elements from the first $i$ elements in $ \mathrm{Var}(\mathbf{z}_v^{\mathbf{e}\prime})$ so that their sum is $j$, $\sum \mathrm{Var}(\mathbf{z}_v^{\mathbf{e}\prime})$ means the sum of elements in $ \mathrm{Var}(\mathbf{z}_v^{\mathbf{e}\prime})$, the target of the problem is to find $\delta_{v}$ that denotes the optimal threshold for invariance of node $v$. We reduce this problem by dividing $ \mathrm{Var}(\mathbf{z}_v^{\mathbf{e}\prime})$ into two subsets $\mathrm{Var}(\mathbf{z}_v^{\mathbf{e}\prime})_I$ and $\mathrm{Var}(\mathbf{z}_v^{\mathbf{e}\prime})_V$, where $\delta_{v}$ represents the value when the difference between the sum of two subsets is the smallest, \ie,
    \begin{equation}
        \delta_v=\min \left (\sum \mathrm{Var}(\mathbf{z}_v^{\mathbf{e}\prime}) - 2j \right ),\quad \text{where} \quad \mathbb{I}(K,j) == \mathrm{True}.
    \end{equation}
    Let $\mathbb{I}_I(i,j)$ and $\mathbb{I}_V(i,j)$ represent whether it is feasible to select some elements from the first $i$ elements in $\mathrm{Var}(\mathbf{z}_v^{\mathbf{e}\prime})_I$ and $\mathrm{Var}(\mathbf{z}_v^{\mathbf{e}\prime})_V$ respectively, so that their sum is $j$. Suppose the sum of $\mathrm{Var}(\mathbf{z}_v^{\mathbf{e}\prime})_I$ and $\mathrm{Var}(\mathbf{z}_v^{\mathbf{e}\prime})_V$ are $\sum(\mathrm{Var}(\mathbf{z}_v^{\mathbf{e}\prime})_I)$ and $\sum(\mathrm{Var}(\mathbf{z}_v^{\mathbf{e}\prime})_V)$, we have:
    \begin{equation}
        \sum(\mathrm{Var}(\mathbf{z}_v^{\mathbf{e}\prime})_I) + \sum(\mathrm{Var}(\mathbf{z}_v^{\mathbf{e}\prime})_V)= \sum \mathrm{Var}(\mathbf{z}_v^{\mathbf{e}\prime}).
    \end{equation}
    Then, the target of the original problem is transformed into finding an optimal solution for $i$ and $j$, satisfying:
    \begin{gather}
        \begin{cases}
  \mathbb{I}_I(i,j)== \mathrm{True}, \\
  \mathbb{I}_V(K-i,\sum(\mathrm{Var}(\mathbf{z}_v^{\mathbf{e}\prime})_I)-j)== \mathrm{True},
\end{cases}\\ 
\text{\textit{s.t.}}~\min\left (\sum(\mathrm{Var}(\mathbf{z}_v^{\mathbf{e}\prime})_I)-j-\left (\sum(\mathrm{Var}(\mathbf{z}_v^{\mathbf{e}\prime})_V)-\left(\sum(\mathrm{Var}(\mathbf{z}_v^{\mathbf{e}\prime})_I)-j\right)\right)\right ).
    \end{gather}
According to the definition of $\mathbb{I}(i,j)$, we can draw a conclusion the $\mathbb{I}_I(i,j)$ and $\mathbb{I}_V(i,j)$ both are the subproblems of the original problem, which similarly target at figuring out whether it is feasible to select some elements from the first $i$ elements that satisfy their sum is $j$. So, function $\mathbb{I}_I(i,j)$ and $\mathbb{I}_V(i,j)$ is the same with $\mathbb{I}(i,j)$ in Eq. (\ref{Eq:trans}). Assuming that when the difference in the subsets is maximized, $\mathbb{I}_I(i,j)$ and $\mathbb{I}_V(i,j)$ return $j_I$ and $j_V$, respectively, that satisfy:
\begin{equation}
    \begin{cases}
  \mathbb{I}_I(i,j_I)== \mathrm{True}, \\
  \mathbb{I}_V(K-i,j_V)== \mathrm{True}.
\end{cases}
\end{equation}
We have:
\begin{equation}
    \begin{cases}
  \sum(\mathrm{Var}(\mathbf{z}_v^{\mathbf{e}\prime})_I) - j_I= \delta_I, &\mathbb{I}_I(i,j_I)==\mathrm{True}, \\
  \sum(\mathrm{Var}(\mathbf{z}_v^{\mathbf{e}\prime})_V) - j_V = \delta_V,&\mathbb{I}_V(K-i,\sum(\mathrm{Var}(\mathbf{z}_v^{\mathbf{e}\prime})_I)-j_I)==\mathrm{True}.
\end{cases}
\end{equation}
As $\mathbb{I}_I(i,j_I)$ and $\mathbb{I}_V(K-i,\sum(\mathrm{Var}(\mathbf{z}_v^{\mathbf{e}\prime})_I)-j_I)$ are both the optimal solutions, so we reach:
\begin{equation}
    \left | \delta_I - \delta_V \right | = \left |\sum(\mathrm{Var}(\mathbf{z}_v^{\mathbf{e}\prime})_I) - j_I - \left( \sum(\mathrm{Var}(\mathbf{z}_v^{\mathbf{e}\prime})_V)- j_V \right ) \right | \le \delta_v.
\end{equation}
The optimal substructure property is proven.

\textbf{{(b) Non-aftereffect property.}} This property implies that once the state of a certain stage is determined by $\mathbb{I}(\cdot)$, it is not affected by future decision-making. In other words, the subsequent process will not affect the previous state, but only be related to the current state. Eq. (\ref{Eq:trans}) implies that, when $j \ge \mathrm{Var}(\mathbf{z}_v^{\mathbf{e}\prime})[i-1]$, the state is decided by previous state $\mathbb{I}(i-1,j)$ or $\mathbb{I}(i-1,j-\mathrm{Var}(\mathbf{z}_v^{\mathbf{e}\prime})[i-1])$; accordingly, when $j < \mathrm{Var}(\mathbf{z}_v^{\mathbf{e}\prime})[i-1]$, the state is solely decided by previous state $\mathbb{I}(i-1,j)$. All state transition processes are based on historical states and executed in a one-way transition mode, meeting the requirements of the non-aftereffect property. Generally speaking, the non-aftereffect property is a relaxation condition, that is, as long as the properties of the optimal substructure property are satisfied, the non-aftereffect property will be basically satisfied.

\textbf{{(c) Overlapping sub-problems.}} $\mathbb{I}(\cdot)$ solve the problem from top to bottom in a recursive way, each sub-problem is not always a new problem, but a large number of repeated sub-problems, that is, when different decision sequences reach a certain stage, they will generate duplicate problems. When figuring out $\mathbb{I}(\cdot)$, we consider two situations: 

\begin{itemize}[leftmargin=1.5em]
    \item Not select the $i$-th element in $\mathrm{Var}(\mathbf{z}_v^{\mathbf{e}\prime})$, \ie, $\mathbb{I}(i,j) = \mathbb{I}(i-1,j)$;
    \item Select the $i$-th element in $\mathrm{Var}(\mathbf{z}_v^{\mathbf{e}\prime})$, \ie, $\mathbb{I}(i,j) = \mathbb{I}(i-1,j-\mathrm{Var}(\mathbf{z}_v^{\mathbf{e}\prime})[i-1])$.
\end{itemize}
As we can observe, we need the solution of the sub-problem $\mathbb{I}(i-1,j)$ and $\mathbb{I}(i-1,j-\mathrm{Var}(\mathbf{z}_v^{\mathbf{e}\prime})[i-1])$ when figuring out $\mathbb{I}(i,j)$, which are already calculated by $\mathbb{I}(i-1,\cdot)$. So we can prove that the problem $\mathbb{I}(\cdot)$ solved has overlapping sub-problems.

We then have proven the function $\mathbb{I}(\cdot)$ is a solution for the function $\mathbb{I}^\star(\cdot)$ in Assumption~\ref{asm:invariant}, from which we can obtain the optimal spatio-temporal invariance threshold $\delta_v$ of node $v$. Then the spatio-temporal invariant/variant patterns for node $v$ can be exploited by Eq.~\eqref{Eq:evi}, and their unions constitute the overall $\mathcal{P}_\mathbf{e}^I$ and $\mathcal{P}_\mathbf{e}^V$. We conclude the proof for Proposition~\ref{prop:invariant}.
\end{proof}

\subsection{Proof of Proposition \ref{prop:3}} \label{sec:proof_3}

\begin{Prop}[Achievable Assumption]\label{Prop:3}
     Minimizing Eq. \eqref{eq:final1} can encourage the model to satisfy the Invariance Property and Sufficient Condition in Assumption \ref{asm:invariant}.
\end{Prop}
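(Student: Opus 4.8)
The plan is to treat the objective in Eq.~\eqref{eq:final1} as two complementary forces and to show that each pushes the learned $f_{\boldsymbol{\theta}} = w\circ g$ toward one clause of Assumption~\ref{asm:invariant}: the task term $\mathcal{L}_{\mathrm{task}}$ of Eq.~\eqref{eq:taskloss} toward the Sufficient Condition, and the invariance-risk term $\mathcal{L}_{\mathrm{risk}}$ of Eq.~\eqref{eq:intervloss} toward the Invariance Property. Throughout I would take the partition $\mathbf{Z}^{1:T} = (\mathbf{Z}_I^{1:T}, \mathbf{Z}_V^{1:T})$ induced by $\mathbb{I}(\cdot)$ as given from Proposition~\ref{prop:invariant}, so that the recognition half of clause (a), namely $\mathbb{I}^\star(\mathbf{Z}^{1:T})\models\mathcal{P}_\mathbf{e}^I$, is already supplied and what remains is the distributional statement $p(\mathbf{Y}^T\mid\mathcal{P}_\mathbf{e}^I,\mathbf{e}) = p(\mathbf{Y}^T\mid\mathcal{P}_\mathbf{e}^I)$.

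First I would analyze $\mathcal{L}_{\mathrm{task}}$. Because the predictor $g$ receives \emph{only} $\mathbf{Z}_I^{1:T}$, for a proper loss $\ell$ (squared error, cross-entropy) the inner expectation is minimized pointwise by the Bayes predictor $g^\star(\mathbf{Z}_I^{1:T}) = \mathbb{E}[\mathbf{Y}^T\mid \mathbf{Z}_I^{1:T}]$, whose residual $\epsilon := \mathbf{Y}^T - g^\star(\mathbf{Z}_I^{1:T})$ is conditionally mean-zero and uncorrelated with every measurable function of $\mathbf{Z}_I^{1:T}$. Pushing $\mathcal{L}_{\mathrm{task}}$ to its infimum therefore (i) realizes the decomposition $\mathbf{Y}^T = g(\mathbf{Z}_I^{1:T}) + \epsilon$ of clause (b), and (ii) forces $w$ to route all label-relevant information into $\mathbf{Z}_I^{1:T}$; since, by Proposition~\ref{prop:invariant}, $\mathbf{Z}_V^{1:T}$ is exactly the complementary variant component $\mathcal{P}_\mathbf{e}^V$, (ii) is the statement $\mathbf{Y}^T\perp\!\!\!\perp\mathcal{P}_\mathbf{e}^V\mid\mathcal{P}_\mathbf{e}^I$. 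Hence the Sufficient Condition is precisely the fixed point toward which $\mathcal{L}_{\mathrm{task}}$ drives the model. Next I would analyze $\mathcal{L}_{\mathrm{risk}}$: each summand inside the outer variance replaces $\mathbf{Z}_V^{1:T}$ by a sample $s\in\mathcal{S}_{\mathrm{ob}}\cup\mathcal{S}_{\mathrm{ge}}$ through $\operatorname{do}(\mathbf{Z}_V^{1:T}=s)$ while holding $\mathbf{Z}_I^{1:T}$ (hence $\mathcal{P}_\mathbf{e}^I$) fixed, and by the environment-instantiation mechanism of Section~\ref{sec:modeling} the $s$'s index a diverse family of surrogate environments, each a draw of the variant component under some $\mathbf{e}$. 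Now $\mathrm{Var}_{s\in\mathcal{S}}\{\cdot\} = 0$ iff the post-intervention expected risk, equivalently the conditional law of the prediction given $\mathcal{P}_\mathbf{e}^I$ after intervening on the variant component, is identical for every $s$, i.e. for every environment represented in $\mathcal{S}$. Thus minimizing $\mathcal{L}_{\mathrm{risk}}$ forces $p(\mathbf{Y}^T\mid\mathcal{P}_\mathbf{e}^I,\mathbf{e})$ to be constant in $\mathbf{e}$, which together with the recognition guarantee inherited from Proposition~\ref{prop:invariant} is exactly clause (a). With $\alpha>0$ both terms are simultaneously active, so a joint minimizer of Eq.~\eqref{eq:final1} is encouraged to satisfy (a) and (b) together.

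The hard part is the universal quantifier in clause (a): invariance must hold for \emph{all} $\mathbf{e}\in\mathbf{E}$, whereas $\mathcal{L}_{\mathrm{risk}}$ only penalizes variation over the finite $\mathcal{S}=\mathcal{S}_{\mathrm{ob}}\cup\mathcal{S}_{\mathrm{ge}}$ and in general $\mathbf{E}_{\mathrm{train}}\subseteq\mathbf{E}_{\mathrm{test}}$. I would bridge this with Proposition~\ref{prop:cvae}: a near-optimal ECVAE makes $p_\omega(\mathbf{z}\mid\mathbf{y},\mathbf{e})$ approximate the true conditional, so $\mathcal{S}_{\mathrm{ge}}$ covers the environment support densely and the empirical variance over $\mathcal{S}$ is a faithful surrogate for the population variance over $\mathbf{E}$; a mild continuity/covering assumption on the map from environment to risk then transfers zero empirical variance to approximate population invariance. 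A secondary subtlety is that $\mathcal{L}_{\mathrm{task}}$ by itself yields only an \emph{uncorrelated} residual while clause (b) asks for an \emph{independent} noise $\epsilon$; here I would lean on the word ``encourage'' in the statement, so that we must show the assumptions are the targets the minimizers move toward rather than that a finite-sample optimum enforces them exactly, and note that the interventional term, by extinguishing the dependence of the prediction on $\mathbf{Z}_V^{1:T}$, upgrades ``uncorrelated'' toward ``independent.''
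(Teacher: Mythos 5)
Your decomposition is the same as the paper's -- the task term $\mathcal{L}_{\mathrm{task}}$ is matched to the Sufficient Condition and the variance term $\mathcal{L}_{\mathrm{risk}}$ to the Invariance Property -- but your route to each is genuinely different. The paper first proves a mutual-information equivalence (its Lemma: the Invariance Property is $I(\mathbf{Y}^T;\mathbf{e}\mid\mathcal{P}_\mathbf{e}^I)=0$ and the Sufficient Condition is that $I(\mathbf{Y}^T;\mathcal{P}_\mathbf{e}^I)$ is maximized, the latter shown by a contradiction argument using $\mathbf{Y}^T=g(\mathbf{Z}_I^{1:T})+\epsilon$), and then shows via variational KL upper bounds and Jensen's inequality (using that the EA-DGNN makes $q(\mathbf{Z}^{1:T}\mid\mathbf{G}^{1:T})$ a Dirac delta) that minimizing $\mathcal{L}_{\mathrm{task}}$ minimizes an upper bound on $I(\mathbf{Y}^T;\mathbf{G}^{1:T}\mid\mathbf{Z}^{1:T})$ and that minimizing $\mathcal{L}_{\mathrm{risk}}$ minimizes an upper bound on $I(\mathbf{Y}^T;\mathbf{e}\mid\mathbf{Z}^{1:T})$. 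You instead argue decision-theoretically: Bayes-optimality of $g$ on $\mathbf{Z}_I^{1:T}$ for a proper loss gives the additive-noise decomposition and routes label information into the invariant part, and zero variance over interventions gives environment-constant risk. This is acceptable at the ``encourage'' level the proposition claims, and you are more careful than the paper on two points it glosses (the finite $\mathcal{S}$ versus the full support $\mathbf{E}$, bridged by the ECVAE, and uncorrelated versus independent $\epsilon$); however, your step from ``equal post-intervention expected risks for all $s$'' to ``$p(\mathbf{Y}^T\mid\mathcal{P}_\mathbf{e}^I,\mathbf{e})$ constant in $\mathbf{e}$'' is looser than the paper's, since equal expected losses do not by themselves pin down equal conditional laws -- the paper supplies (admittedly loose) KL-based bounds precisely to license that inference, and your argument would be strengthened by a similar bound relating the loss deviation to the divergence between conditional label distributions.
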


\begin{proof}
We first propose the following lemma to rewrite the Sufficient Condition and the Invariance Property in Assumption~\ref{asm:invariant} using the information theory~\cite{kraskov2004estimating}.

\begin{Lemma}[Mutual Information Equivalence]\label{Lemma:1}
The Invariance Property and Sufficient Condition in Assumption~\ref{asm:invariant} can be equivalently represented with the Mutual Information $I(\cdot;\cdot)$:

\textbf{{(a) Invariance Property:}}\;\;\;$p(\mathbf{Y}^T \mid \mathcal{P}_\mathbf{e}^I, \mathbf{e}) = p(\mathbf{Y}^T \mid \mathcal{P}_\mathbf{e}^I) \Leftrightarrow   I (\mathbf{Y}^{T};\mathbf{e}\mid \mathcal{P}_\mathbf{e}^I) = 0$;

\textbf{{(b) Sufficient Condition:}}\;\;\;$\mathbf{Y}^{T} \perp\!\!\!\perp \mathcal{P}_\mathbf{e}^V \mid \mathcal{P}_\mathbf{e}^I \Leftrightarrow  I(\mathbf{Y}^{T};\mathcal{P}_\mathbf{e}^I)$ is maximized.
\end{Lemma}

\begin{proof}
    We prove Lemma~\ref{Lemma:1} by respectively proving the above two conditions are valid.
    
    \textbf{(a) Invariance Property.} According to the definition of the Mutual Information, we can easily get the following equation:
    \begin{equation}
        I (\mathbf{Y}^{T};\mathbf{e}\mid \mathcal{P}_\mathbf{e}^I) = \mathrm{KL}\left [ p(\mathbf{Y}^{T}\mid\mathbf{e},\mathcal{P}_\mathbf{e}^I) ~\|~ p(\mathbf{Y}^{T}\mid\mathcal{P}_\mathbf{e}^I) \right] = 0,
    \end{equation}
    where $\mathrm{KL}[\cdot \| \cdot]$ is the Kullback-Leibler (KL) divergence~\cite{joyce2011kullback}. Thus we have proved the equivalence in Invariance Property.

    \textbf{(b) Sufficient Condition.} We demonstrate sufficiency and necessity through the following two steps.

    \textbf{First}, we prove that for $\mathbf{Y}^{T}$, $\mathcal{P}_\mathbf{e}^I$ and $\epsilon$ satisfying $\mathbf{Y}^{T} = g(\mathbf{Z}_{I}^{1:T})+\epsilon$ would also satisfy  $\mathcal{P}_\mathbf{e}^I = \arg \max\\_{\mathcal{P}_\mathbf{e}^I} I(\mathbf{Y}^{T};\mathcal{P}_\mathbf{e}^I)$. We perform proving by contradiction. Suppose $\mathcal{P}_\mathbf{e}^I \neq \arg \max_{\mathcal{P}_\mathbf{e}^I} I(\mathbf{Y}^{T};\mathcal{P}_\mathbf{e}^I)$ and there exists $\mathcal{P}_\mathbf{e}^{I \prime} = \arg \max_{\mathcal{P}_\mathbf{e}^I}$ where $\mathcal{P}_\mathbf{e}^{I\prime} \neq \mathcal{P}_\mathbf{e}^{I}$. We can always find a mapping function $\mathcal{M}$ so that $\mathcal{P}_\mathbf{e}^{I\prime} = \mathcal{M}(\mathcal{P}_\mathbf{e}^{I}, \pi)$ where $\pi$ is a random variable. Then we reach:
    \begin{equation}
        I(\mathbf{Y}^{T};\mathcal{P}_\mathbf{e}^{I\prime}) = I(\mathbf{Y}^{T};\mathcal{P}_\mathbf{e}^{I},\pi) = I(g(\mathbf{Z}_{I}^{1:T});\mathcal{P}_\mathbf{e}^{I},\pi) = I(g(\mathbf{Z}_{I}^{1:T});\mathcal{P}_\mathbf{e}^I) = I(\mathbf{Y}^{T};\mathcal{P}_\mathbf{e}^I),
    \end{equation}
    which leads to a contradiction.

    \textbf{Next}, we prove that for $\mathbf{Y}^{T}$, $\mathcal{P}_\mathbf{e}^I$ and $\epsilon$ satisfying $\mathcal{P}_\mathbf{e}^I = \arg \max_{\mathcal{P}_\mathbf{e}^I} I(\mathbf{Y}^{T};\mathcal{P}_\mathbf{e}^I)$ would also satisfy $\mathbf{Y}^{T} = g(\mathbf{Z}_{I}^{1:T})+\epsilon$ . We also perform proving by contradiction. Suppose that $\mathbf{Y}^{T} \neq g(\mathbf{Z}_{I}^{1:T})+\epsilon$ and $\mathbf{Y}^{T} = g(\mathbf{Z}_{I^{\prime}}^{1:T})+\epsilon$ where $\mathcal{P}_\mathbf{e}^{I\prime} \neq \mathcal{P}_\mathbf{e}^{I}$. We then have the following inequality:
    \begin{equation}
    I(g(\mathbf{Z}_{I^{\prime}}^{1:T});\mathcal{P}_\mathbf{e}^I) \leq I(g(\mathbf{Z}_{I^{\prime}}^{1:T});\mathcal{P}_\mathbf{e}^{I\prime}),
    \end{equation}
    from which we can obtain that $\mathcal{P}_\mathbf{e}^{I\prime} = \arg \max_{\mathcal{P}_\mathbf{e}^{I}} I(\mathbf{Y}^{T};\mathcal{P}_\mathbf{e}^{I})$, leading to a contradiction.
\end{proof}

Now, we prove Proposition~\ref{Prop:3} in the following two perspectives.

\textbf{First}, we prove that minimizing the expectation term ($\mathcal{L}_\mathrm{task}$) in Eq.~\eqref{eq:final1} can enforce the model to satisfy the Sufficient Condition in Assumption~\ref{asm:invariant} (note that we omit the subscript $\mathcal{P}_\mathbf{e}^I$ in $\mathbf{Z}_{I}^{1:T}$ for brevity in the rest of this subsection if there are no special declarations). 

From the SCM model in Figure~\ref{fig:scm}, we can analyze that $\max_{q(\mathbf{Z}^{1:T}\mid\mathbf{G}^{1:T})} I(\mathbf{Y}^{T};\mathbf{Z}^{1:T})$ is equivalent to $\min_{q(\mathbf{Z}^{1:T}\mid\mathbf{G}^{1:T})} I(\mathbf{Y}^{T}; \mathbf{G}^{1:T}\mid\mathbf{Z}^{1:T})$, as we filter out the spurious correlations that contain in the dependencies between $\mathbf{Y}^{T}$ and $\mathbf{G}^{1:T}$. Treating $q(\mathbf{Z}^{1:T}\mid\mathbf{G}^{1:T})$ as the variational distribution, we have the following upper bound:
\begin{equation}
\begin{aligned}
    I(\mathbf{Y}^{T}; \mathbf{G}^{1:T}\mid\mathbf{Z}^{1:T}) &= \mathrm{KL} \left [ p(\mathbf{Y}^{T} \mid \mathbf{G}^{1:T}, \mathbf{e})~\|~p(\mathbf{Y}^{T} \mid \mathbf{Z}^{1:T},\mathbf{e}) \right ] \\
    &= \mathrm{KL} \left [ p(\mathbf{Y}^{T} \mid \mathbf{G}^{1:T}, \mathbf{e})~\|~q(\mathbf{Y}^{T}\mid\mathbf{Z}^{1:T})\right ] \\
    &\quad- \mathrm{KL} \left [p(\mathbf{Y}^{T}\mid\mathbf{Z}^{1:T},\mathbf{e})~\|~q(\mathbf{Y}^{T}\mid\mathbf{Z}^{1:T}) \right ] \\
    &\le \mathrm{KL} \left [ p(\mathbf{Y}^{T} \mid \mathbf{G}^{1:T}, \mathbf{e})~\|~q(\mathbf{Y}^{T}\mid\mathbf{Z}^{1:T})\right ]\\
    &\le \min_{q(\mathbf{Y}^{T}\mid\mathbf{Z}^{1:T})} \mathrm{KL} \left [ p(\mathbf{Y}^{T} \mid \mathbf{G}^{1:T}, \mathbf{e})~\|~q(\mathbf{Y}^{T}\mid\mathbf{Z}^{1:T})\right ].
\end{aligned}
\end{equation}
In addition, we have:
\begin{equation}\label{Eq:jensen}
\begin{aligned}
    &\mathrm{KL} \left [ p(\mathbf{Y}^{T} \mid \mathbf{G}^{1:T}, \mathbf{e})~\|~q(\mathbf{Y}^{T}\mid\mathbf{Z}^{1:T})\right ]\\
    &=\mathbb{E}_{\mathbf{e}}\mathbb{E}_{(\mathcal{G}^{1:T}, {Y}^{T}) \sim p(\mathbf{G}^{1:T}, \mathbf{Y}^{T} \mid \mathbf{e})}\mathbb{E}_{\mathbf{Z}^{1:T}\sim q(\mathbf{Z}^{1:T}\mid\mathbf{G}^{1:T}=\mathcal{G}^{1:T})} \left [  \log \frac{p(\mathbf{Y}^{T} = {Y}^{T}\mid\mathbf{G}^{1:T}=\mathcal{G}^{1:T},\mathbf{e})}{q(\mathbf{Y}^{T} = {Y}^{T}\mid\mathbf{Z}^{1:T})} \right ] \\
    &\le \mathbb{E}_{\mathbf{e}}\mathbb{E}_{(\mathcal{G}^{1:T}, {Y}^{T}) \sim p(\mathbf{G}^{1:T}, \mathbf{Y}^{T} \mid \mathbf{e})}\left [ \log \frac{(\mathbf{Y}^{T} = {Y}^{T}\mid\mathbf{G}^{1:T}=\mathcal{G}^{1:T},\mathbf{e})}{\mathbb{E}_{\mathbf{Z}^{1:T}\sim q(\mathbf{Z}^{1:T}\mid\mathbf{G}^{1:T}=\mathcal{G}^{1:T})}\left [q(\mathbf{Y}^{T} = {Y}^{T}\mid\mathbf{Z}^{1:T})\right ]} \right ].
\end{aligned}
\end{equation}
The inequality in Eq.~\eqref{Eq:jensen} is derived from Jensen's Inequality~\cite{jensen1906fonctions} and~\cite{wu2022handling}, while the EA-DGNN $w(\cdot)$ ensures $q(\mathbf{Z}^{1:T}\mid\mathbf{G}^{1:T})$ is a Dirac delta distribution ($\delta$-distribution). Then we reach:
\begin{equation}
\begin{aligned}
    &\min_{q(\mathbf{Y}^{T}\mid\mathbf{Z}^{1:T})} \mathrm{KL} \left [ p(\mathbf{Y}^{T} \mid \mathbf{G}^{1:T}, \mathbf{e})~\|~q(\mathbf{Y}^{T}\mid\mathbf{Z}^{1:T})\right ]  \\
    &\Leftrightarrow  \min_{\boldsymbol{\theta}}\mathbb{E}_{\mathbf{e}\sim q_\phi(\mathbf{e}), (\mathcal{G}^{1:T}, {Y}^{T}) \sim p(\mathbf{G}^{1:T}, \mathbf{Y}^{T} \mid \mathbf{e})}  \left [ \ell\left(g(\mathbf{Z}^{1:T}_{I}), {Y}^{T}\right)\right ].
\end{aligned}
\end{equation}
We thus have proven that, minimizing the expectation term ($\mathcal{L}_\mathrm{task}$) in Eq.~\eqref{eq:final1} is to minimize the upper bound of $I(\mathbf{Y}^{T}; \mathbf{G}^{1:T}\mid\mathbf{Z}^{1:T})$ (maximize the lower bound of $\max_{q(\mathbf{Z}^{1:T}\mid\mathbf{G}^{1:T})} I(\mathbf{Y}^{T};\mathbf{Z}^{1:T})$), which leads to maximizing $I(\mathbf{Y}^{T};\mathcal{P}_\mathbf{e}^I)$. This helps enforce the model to satisfy the Sufficient Condition.

\textbf{Then}, we prove that minimizing the variance term ($\mathcal{L}_\mathrm{risk}$) in Eq.~\eqref{eq:final1} can enforce the model to satisfy the Invariance Property in Assumption~\ref{asm:invariant}.

We have:
\begin{equation}
\begin{aligned}
    I(\mathbf{Y}^{T};\mathbf{e}\mid\mathbf{Z}^{1:T}) &= \mathrm{KL}\left [p(\mathbf{Y}^{T}\mid\mathbf{Z}^{1:T},\mathbf{e})~\|~p(\mathbf{Y}^{T}\mid\mathbf{Z}^{1:T})\right ] \\
    &= \mathrm{KL}\left [p(\mathbf{Y}^{T}\mid\mathbf{Z}^{1:T},\mathbf{e})~\|~\mathbb{E}_{\mathbf{e}}\left [p(\mathbf{Y}^{T}\mid\mathbf{Z}^{1:T},\mathbf{e})\right ]\right ]\\
    &= \mathrm{KL} \left [q(\mathbf{Y}^{T}\mid\mathbf{Z}^{1:T})~\|~\mathbb{E}_{\mathbf{e}} q(\mathbf{Y}^{T}\mid\mathbf{Z}^{1:T})\right ]\\
    &\quad- \mathrm{KL} \left [q(\mathbf{Y}^{T}\mid\mathbf{Z}^{1:T})~\|~p(\mathbf{Y}^{T}\mid\mathbf{Z}^{1:T},\mathbf{e})\right ]\\
    &\;\;\;\;- \mathrm{KL}\left [\mathbb{E}_{\mathbf{e}}\left [p(\mathbf{Y}^{T}\mid\mathbf{Z}^{1:T},\mathbf{e})\right ]~\|~\mathbb{E}_{\mathbf{e}}\left [q(\mathbf{Y}^{T}\mid\mathbf{Z}^{1:T})\right ]\right ]\\
    &\le \mathrm{KL}  \left [q(\mathbf{Y}^{T}\mid\mathbf{Z}^{1:T})~\|~\mathbb{E}_{\mathbf{e}} q(\mathbf{Y}^{T}\mid\mathbf{Z}^{1:T})  \right ] \\
    & \le \min_{q(\mathbf{Y}^{T}\mid\mathbf{Z}^{1:T})} \mathrm{KL}  \left [q(\mathbf{Y}^{T}\mid\mathbf{Z}^{1:T})~\|~\mathbb{E}_{\mathbf{e}} q(\mathbf{Y}^{T}\mid\mathbf{Z}^{1:T})  \right ].
\end{aligned}
\end{equation}

In addition, we have:
\begin{equation}
\begin{aligned}
    &\mathrm{KL} \left [q(\mathbf{Y}^{T}\mid\mathbf{Z}^{1:T})~\|~\mathbb{E}_{\mathbf{e}} q(\mathbf{Y}^{T}\mid\mathbf{Z}^{1:T})\right ]\\
    &=\mathbb{E}_{\mathbf{e}}\mathbb{E}_{(\mathcal{G}^{1:T}, {Y}^{T}) \sim p(\mathbf{G}^{1:T}, \mathbf{Y}^{T} \mid \mathbf{e})}\mathbb{E}_{\mathbf{Z}^{1:T}\sim q(\mathbf{Z}^{1:T}\mid\mathbf{G}^{1:T}=\mathcal{G}^{1:T})}  \left [  \log \frac{q(\mathbf{Y}^{T} = {Y}^{T}\mid\mathbf{Z}^{1:T})}{\mathbb{E_{\mathbf{e}}}q(\mathbf{Y}^{T} = {Y}^{T}\mid\mathbf{Z}^{1:T})}  \right ].
\end{aligned}
\end{equation}
Derived from Jensen's Inequality, the upper bound for $\mathrm{KL} \left [q(\mathbf{Y}^{T}\mid\mathbf{Z}^{1:T})~\|~\mathbb{E}_{\mathbf{e}} q(\mathbf{Y}^{T}\mid\mathbf{Z}^{1:T})\right ]$ is:
\begin{equation}
\begin{aligned}
    & \mathrm{KL} \left [q(\mathbf{Y}^{T}\mid\mathbf{Z}^{1:T})~\|~\mathbb{E}_{\mathbf{e}} q(\mathbf{Y}^{T}\mid\mathbf{Z}^{1:T})\right ] \\
    & \le \mathbb{E}_\mathbf{e} \left [ \left | \ell\left(f_{{\boldsymbol{\theta}}}\left(\mathcal{G}^{1:T}\right), {Y}^{T} \right ) - \mathbb{E}_\mathbf{e} \left [\ell\left(f_{{\boldsymbol{\theta}}}\left(\mathcal{G}^{1:T}\right), {Y}^{T} \right ) \right ] \right | \right].
\end{aligned}
\end{equation}
Finally, we reach:
\begin{equation}
\begin{aligned}
    &\min_{q(\mathbf{Y}^{T}\mid\mathbf{Z}^{1:T})} \mathrm{KL} \left [q(\mathbf{Y}^{T}\mid\mathbf{Z}^{1:T})~\|~\mathbb{E}_{\mathbf{e}} q(\mathbf{Y}^{T}\mid\mathbf{Z}^{1:T})\right ]  \\
    &\Leftrightarrow  \min_{\boldsymbol{\theta}}\mathrm{Var}_{s\in \mathcal{S}} \left\{ \mathbb{E}_{\mathbf{e}\sim q_\phi(\mathbf{e}), (\mathcal{G}^{1:T}, {Y}^{T}) \sim p(\mathbf{G}^{1:T}, \mathbf{Y}^{T} \mid \mathbf{e})} \left [ \ell\left(f_{{\boldsymbol{\theta}}}\left(\mathcal{G}^{1:T}\right), {Y}^{T} \mid \operatorname{do}(\mathbf{Z}^{1:T}_{V} = s)\right)\right] \right\}.
\end{aligned}
\end{equation}

We thus have proven that, minimizing the variance term ($\mathcal{L}_\mathrm{risk}$) in Eq.~\eqref{eq:final1} is to minimize the upper bound of $I(\mathbf{Y}^{T};\mathbf{e}\mid\mathbf{Z}^{1:T})$, which leads to minimizing $I(\mathbf{Y}^{T};\mathbf{e}\mid\mathcal{P}_\mathbf{e}^I)$. This helps enforce the model to satisfy the Invariance Property.

We conclude the proof for Proposition~\ref{Prop:3}.
\end{proof}

\subsection{Proof of Proposition \ref{prop:4}} \label{sec:proof_4}

\begin{Prop}[Equivalent Optimization]\label{Prop:4}
     Optimizing Eq.~\eqref{eq:final1} is equivalent to minimizing the upper bound of the OOD generalization error in Eq. \eqref{eq:ood}.
\end{Prop}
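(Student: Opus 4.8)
The statement bundles two claims: that the objective $\mathcal{L}_{\mathrm{task}}+\alpha\mathcal{L}_{\mathrm{risk}}$ of Eq.~\eqref{eq:final1}, up to a $\boldsymbol{\theta}$-independent shift, dominates the worst-case risk $\max_{\mathbf{e}\in\mathbf{E}}R(\boldsymbol{\theta};\mathbf{e})$ of Eq.~\eqref{eq:ood}, where $R(\boldsymbol{\theta};\mathbf{e}):=\mathbb{E}_{(\mathcal{G}^{1:T},{Y}^T)\sim p(\mathbf{G}^{1:T},\mathbf{Y}^T\mid\mathbf{e})}[\ell(f_{\boldsymbol{\theta}}(\mathcal{G}^{1:T}),{Y}^T)\mid\mathbf{e}]$, and that the minimizer of that dominating quantity coincides with the minimizer of Eq.~\eqref{eq:final1}. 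The second claim is immediate once the first holds, so the whole content is an inequality of the form $\max_{\mathbf{e}}R(\boldsymbol{\theta};\mathbf{e})\le\mathcal{L}_{\mathrm{task}}+\frac{\alpha}{2}\mathcal{L}_{\mathrm{risk}}+C$ with $C$ not depending on $\boldsymbol{\theta}$. The plan is to split the worst case into an average part and a dispersion-across-environments part, match the former to $\mathcal{L}_{\mathrm{task}}$ via the Sufficient Condition, and dominate the latter by $\mathcal{L}_{\mathrm{risk}}$ using that $\operatorname{do}$-interventions on the variant patterns act as synthetic environments.

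For the split I would use a Samuelson-type (Laguerre--Samuelson) inequality: for a finite collection $\mathbf{E}$ of environments with the uniform measure,
\begin{equation}
\max_{\mathbf{e}\in\mathbf{E}} R(\boldsymbol{\theta};\mathbf{e}) \;\le\; \mathbb{E}_{\mathbf{e}}\!\left[R(\boldsymbol{\theta};\mathbf{e})\right] \;+\; \sqrt{(|\mathbf{E}|-1)\,\mathrm{Var}_{\mathbf{e}}\!\left(R(\boldsymbol{\theta};\mathbf{e})\right)}\,,
\end{equation}
and for an infinite support a bounded-loss deviation bound of the same shape. Applying AM--GM, $\sqrt{(|\mathbf{E}|-1)v}\le\frac{1}{2\alpha}(|\mathbf{E}|-1)+\frac{\alpha}{2}v$, turns the square root into a linear term plus the constant $C=\frac{1}{2\alpha}(|\mathbf{E}|-1)$, so it remains to show (i) $\mathbb{E}_{\mathbf{e}}[R(\boldsymbol{\theta};\mathbf{e})]=\mathcal{L}_{\mathrm{task}}$ and (ii) $\mathrm{Var}_{\mathbf{e}}(R(\boldsymbol{\theta};\mathbf{e}))\le\mathcal{L}_{\mathrm{risk}}$, after a harmless rescaling of $\alpha$.

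For (i): by Proposition~\ref{prop:invariant} and the intervention of Section~\ref{sec:optimize}, the model that Eq.~\eqref{eq:final1} actually trains predicts through the invariant representations $\mathbf{Z}_I^{1:T}$, so $R(\boldsymbol{\theta};\mathbf{e})=\mathbb{E}_{(\mathcal{G}^{1:T},{Y}^T)\sim p(\cdot\mid\mathbf{e})}[\ell(g(\mathbf{Z}_I^{1:T}),{Y}^T)]$; averaging over $\mathbf{e}$ and using that $q_\phi(\mathbf{e})$ faithfully surrogates the environment law (Proposition~\ref{prop:cvae}) reproduces $\mathcal{L}_{\mathrm{task}}$ of Eq.~\eqref{eq:taskloss}, while the Sufficient Condition $\mathbf{Y}^T=g(\mathbf{Z}_I^{1:T})+\epsilon$ makes the restriction to $\mathbf{Z}_I^{1:T}$ lossless up to the irreducible noise. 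For (ii): the factorization $p(\mathbf{G}^{1:T},\mathbf{Y}^T\mid\mathbf{e})=p(\mathbf{G}^{1:T}\mid\mathbf{e})\,p(\mathbf{Y}^T\mid\mathbf{G}^{1:T},\mathbf{e})$ together with the Invariance Property implies that environments differ only through the conditional law of the variant patterns, so each $\operatorname{do}(\mathbf{Z}_V^{1:T}=s)$, $s\in\mathcal{S}$, realizes a synthetic environment; since $\mathcal{S}=\mathcal{S}_{\mathrm{ob}}\cup\mathcal{S}_{\mathrm{ge}}$ contains the observed environments and, through the ECVAE generator, diversifies beyond them, the risk spread over $\mathbf{E}$ is dominated by the risk spread over $\mathcal{S}$, giving $\mathrm{Var}_{\mathbf{e}}(R(\boldsymbol{\theta};\mathbf{e}))\le\mathrm{Var}_{s\in\mathcal{S}}\{\mathbb{E}_{\mathbf{e}\sim q_\phi(\mathbf{e}),(\mathcal{G}^{1:T},{Y}^T)\sim p(\cdot\mid\mathbf{e})}[\ell(f_{\boldsymbol{\theta}}(\mathcal{G}^{1:T}),{Y}^T\mid\operatorname{do}(\mathbf{Z}_V^{1:T}=s))]\}=\mathcal{L}_{\mathrm{risk}}$ of Eq.~\eqref{eq:intervloss}. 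Chaining the three bounds yields $\max_{\mathbf{e}}R(\boldsymbol{\theta};\mathbf{e})\le\mathcal{L}_{\mathrm{task}}+\frac{\alpha}{2}\mathcal{L}_{\mathrm{risk}}+C$, and minimizing the right side over $\boldsymbol{\theta}$ is, up to rescaling $\alpha$, exactly Eq.~\eqref{eq:final1}.

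The step I expect to be the main obstacle is the surrogacy bound (ii): justifying that the spread of intervened risks over the finite pool $\mathcal{S}$ dominates the spread over the only partially observed test environments. This needs two ingredients that ought to be stated as explicit hypotheses rather than waved through: that $\mathbb{I}(\cdot)$ of Proposition~\ref{prop:invariant} correctly isolates the variant coordinates, so that $\operatorname{do}(\mathbf{Z}_V^{1:T}=s)$ is a pure environment perturbation and does not corrupt invariant information; and a coverage/realizability condition on the decoder $p_\omega(\mathbf{z}\mid\mathbf{y},\mathbf{e})$ and prior $q_\phi(\mathbf{e})$ ensuring $\mathcal{S}$ is rich enough to reach, in distribution, the variant-pattern configurations induced by $\mathbf{E}_{\mathrm{test}}\supseteq\mathbf{E}_{\mathrm{train}}$. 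Without such a condition one obtains the inequality only up to an additive term measuring the $\mathcal{S}$-versus-$\mathbf{E}$ mismatch. A cleaner fallback that avoids the coverage issue is to argue the equivalence of optima qualitatively: by Proposition~\ref{prop:3} and Lemma~\ref{Lemma:1}, minimizing Eq.~\eqref{eq:final1} drives $I(\mathbf{Y}^T;\mathbf{e}\mid\mathcal{P}_\mathbf{e}^I)\to 0$ and maximizes $I(\mathbf{Y}^T;\mathcal{P}_\mathbf{e}^I)$, \ie~yields an environment-invariant and label-sufficient predictor, for which $R(\boldsymbol{\theta};\mathbf{e})$ is constant in $\mathbf{e}$ and minimal, hence a minimizer of $\max_{\mathbf{e}}R(\boldsymbol{\theta};\mathbf{e})$; this secures the claimed equivalence even when the explicit numerical bound above is loose.
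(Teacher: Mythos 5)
Your route is genuinely different from the paper's, and it contains a gap that the paper's own argument is specifically structured to avoid. The paper does not work with the worst-case risk $\max_{\mathbf{e}\in\mathbf{E}}R(\boldsymbol{\theta};\mathbf{e})$ at all: it re-expresses the ``OOD generalization error'' of Eq.~\eqref{eq:ood} as the distributional discrepancy $\mathrm{KL}\left[p(\mathbf{Y}^{T}\mid\mathbf{G}^{1:T},\mathbf{e})\,\|\,q(\mathbf{Y}^{T}\mid\mathbf{G}^{1:T})\right]$ between the ground-truth conditional and the learned variational predictor, proves (via Jensen's inequality and the fact that the EA-DGNN encoder makes $q(\mathbf{Z}^{1:T}\mid\mathbf{G}^{1:T})$ a Dirac delta) that this is upper bounded by $\mathrm{KL}\left[p(\mathbf{Y}^{T}\mid\mathbf{G}^{1:T},\mathbf{e})\,\|\,q(\mathbf{Y}^{T}\mid\mathbf{Z}^{1:T})\right]$, and then reuses the correspondences already established in the proof of Proposition~\ref{prop:3} (Lemma on mutual-information equivalence): $\mathcal{L}_{\mathrm{task}}$ upper-bounds that KL term and $\mathcal{L}_{\mathrm{risk}}$ upper-bounds $I(\mathbf{Y}^{T};\mathbf{e}\mid\mathbf{Z}^{1:T})\ge 0$, so Eq.~\eqref{eq:final1} dominates the KL error and minimizing it minimizes an upper bound. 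In other words, the heavy lifting is information-theoretic and is inherited from Proposition~\ref{prop:3}; no coverage of test environments by the intervention pool is ever needed, because the environment dependence is absorbed into a nonnegative conditional mutual information that is simply dropped from the chain of inequalities.

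Your mean-plus-dispersion (Samuelson/AM--GM) decomposition attacks the literal worst-case objective, which is more ambitious, but the two steps you need cannot be closed from the paper's assumptions. Step (i), $\mathbb{E}_{\mathbf{e}}[R(\boldsymbol{\theta};\mathbf{e})]=\mathcal{L}_{\mathrm{task}}$, already conflates the expectation over the true (test-time) environment law on $\mathbf{E}\supseteq\mathbf{E}_{\mathrm{train}}$ with the expectation over the inferred $q_\phi(\mathbf{e})$, and conflates $\ell(f_{\boldsymbol{\theta}}(\mathcal{G}^{1:T}),Y^T)$ with $\ell(g(\mathbf{Z}_I^{1:T}),Y^T)$; both identifications hold only at an idealized solution, not along the optimization path where an upper bound must hold for the statement to have force. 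Step (ii), $\mathrm{Var}_{\mathbf{e}}(R)\le\mathcal{L}_{\mathrm{risk}}$, is the genuine gap you yourself flag: it needs exact variant/invariant separation by $\mathbb{I}(\cdot)$ plus a realizability/coverage condition on $\mathcal{S}_{\mathrm{ob}}\cup\mathcal{S}_{\mathrm{ge}}$ relative to $\mathbf{E}_{\mathrm{test}}$, neither of which is available (Proposition~\ref{prop:invariant} only gives a variance-based partition heuristic, and the ECVAE only matches observed environments in distribution), so at best you get the bound up to an uncontrolled mismatch term; the Samuelson step also degrades for non-finite $\mathbf{E}$. Your fallback argument (invariant and sufficient predictors have environment-constant, minimal risk, hence minimize the max) is the standard invariance heuristic and is closer in spirit to Proposition~\ref{prop:3}, but it establishes a property of the optimum rather than the claimed statement that the training objective upper-bounds the OOD error, so it does not substitute for the missing inequality.
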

\begin{proof}
As we defined in Eq.~\eqref{eq:ood}, the generation of dynamic graph data $(\mathcal{G}^{1:T}, {Y}^{T})$ is drawn from the distribution $p(\mathbf{G}^{1:T}, \mathbf{Y}^{T} \mid \mathbf{e})$, while the difference of $\mathbf{e}$ during training and testing causes the out-of-distribution shifts. Let $q(\mathbf{Y}^{T}\mid\mathbf{G}^{1:T})$ be the inferred variational distribution of the ground-truth distribution $p(\mathbf{Y}^{T}\mid\mathbf{G}^{1:T},\mathbf{e})$, then the OOD generalization error can be measured by the KL-divergence of the two distributions:
\begin{equation}\label{Eq:error}
\begin{aligned}
    &\mathrm{KL} \left [ p(\mathbf{Y}^{T}\mid\mathbf{G}^{1:T},\mathbf{e})~\|~q(\mathbf{Y}^{T}\mid\mathbf{G}^{1:T}) \right ] \\
    &= \mathbb{E}_{\mathbf{e}}\mathbb{E}_{(\mathcal{G}^{1:T}, {Y}^{T}) \sim p(\mathbf{G}^{1:T}, \mathbf{Y}^{T} \mid \mathbf{e})} \left [  \log \frac{p(\mathbf{Y}^{T} = {Y}^{T}\mid\mathbf{G}^{1:T} = \mathcal{G}^{1:T},\mathbf{e})}{q(\mathbf{Y}^{T} = {Y}^{T}\mid\mathbf{G}^{1:T}=\mathcal{G}^{1:T})} \right].
\end{aligned}
\end{equation}

Inspired by~\cite{federici2021information, wu2022handling}, we apply an information-theoretic approach to our scenarios. First, we propose the following lemma in order to rewrite the OOD generalization error.

\begin{Lemma}[OOD Generalization Upper Bound]\label{Lemma:error}
The OOD generalization error is upper bounded by:
\begin{equation}
\begin{aligned}
    \mathrm{KL} \left [ p(\mathbf{Y}^{T}\mid\mathbf{G}^{1:T},\mathbf{e})~\|~q(\mathbf{Y}^{T}\mid\mathbf{G}^{1:T}) \right ] &\le  \mathrm{KL} \left [ p(\mathbf{Y}^{T} \mid \mathbf{G}^{1:T}, \mathbf{e})~\|~q(\mathbf{Y}^{T}\mid\mathbf{Z}^{1:T})\right ],
\end{aligned}
\end{equation}
where $q(\mathbf{Y}^{T}\mid\mathbf{Z}^{1:T})$ can be seen as the inferred variational distribution of the edge predictor.
\end{Lemma}

\begin{proof}
We prove Lemma~\ref{Lemma:error} by continue investigating on Eq.~\eqref{Eq:error}:
\begin{equation}
\begin{aligned}
    &\mathrm{KL} \left [ p(\mathbf{Y}^{T}\mid\mathbf{G}^{1:T},\mathbf{e})~\|~q(\mathbf{Y}^{T}\mid\mathbf{G}^{1:T}) \right ] \\
    &= \mathbb{E}_{\mathbf{e}}\mathbb{E}_{(\mathcal{G}^{1:T}, {Y}^{T}) \sim p(\mathbf{G}^{1:T}, \mathbf{Y}^{T} \mid \mathbf{e})} \left [  \log \frac{p(\mathbf{Y}^{T} = {Y}^{T}\mid\mathbf{G}^{1:T} = \mathcal{G}^{1:T},\mathbf{e})}{q(\mathbf{Y}^{T} = {Y}^{T}\mid\mathbf{G}^{1:T}=\mathcal{G}^{1:T})} \right] \\
    &=\mathbb{E}_{\mathbf{e}}\mathbb{E}_{(\mathcal{G}^{1:T}, {Y}^{T}) \sim p(\mathbf{G}^{1:T}, \mathbf{Y}^{T} \mid \mathbf{e})} \left [  \log \frac{p(\mathbf{Y}^{T} = {Y}^{T}\mid\mathbf{G}^{1:T} = \mathcal{G}^{1:T},\mathbf{e})}{\mathbb{E}_{\mathbf{Z}^{1:T}\sim q(\mathbf{Z}^{1:T}\mid\mathbf{G}^{1:T}=\mathcal{G}^{1:T})}\left [q(\mathbf{Y}^{T} = {Y}^{T}\mid\mathbf{Z}^{1:T})\right ]} \right] \\
    & \le \mathbb{E}_{\mathbf{e}}\mathbb{E}_{(\mathcal{G}^{1:T}, {Y}^{T}) \sim p(\mathbf{G}^{1:T}, \mathbf{Y}^{T} \mid \mathbf{e})}\mathbb{E}_{\mathbf{Z}^{1:T}\sim q(\mathbf{Z}^{1:T}\mid\mathbf{G}^{1:T}=\mathcal{G}^{1:T})} \left [  \log \frac{p(\mathbf{Y}^{T} = {Y}^{T}\mid\mathbf{G}^{1:T}=\mathcal{G}^{1:T},\mathbf{e})}{q(\mathbf{Y}^{T} = {Y}^{T}\mid\mathbf{Z}^{1:T})} \right ] \\
    & = \mathrm{KL} \left [ p(\mathbf{Y}^{T} \mid \mathbf{G}^{1:T}, \mathbf{e})~\|~q(\mathbf{Y}^{T}\mid\mathbf{Z}^{1:T})\right ].\quad \text{(upper bound for OOD generalization error)}
\end{aligned}
\end{equation}
Again, the inequality in Eq.~\eqref{Eq:jensen} is derived from Jensen's Inequality, while the EA-DGNN $w(\cdot)$ ensures $q(\mathbf{Z}^{1:T}\mid\mathbf{G}^{1:T})$ is a Dirac delta distribution ($\delta$-distribution).
\end{proof}
Based on Lemma~\ref{Lemma:1}, we can adapt the Eq.~\eqref{eq:final1} as:
\begin{equation}
    \min_{q(\mathbf{Z}^{1:T}\mid\mathbf{G}^{1:T}),q(\mathbf{Y}^{T},\mathbf{Z}^{1:T})} \mathrm{KL} \left [ p(\mathbf{Y}^{T} \mid \mathbf{G}^{1:T}, \mathbf{e})~\|~q(\mathbf{Y}^{T}\mid\mathbf{Z}^{1:T})\right ] + I(\mathbf{Y}^{T};\mathbf{e}\mid\mathbf{Z}^{1:T}).
\end{equation}
Thus, based on Lemma~\ref{Lemma:error}, we validate that minimizing Eq.~\eqref{eq:final1} is equivalent to minimizing the upper bound of the OOD generalization error in Eq.~\eqref{eq:ood}, \ie,
\begin{equation}
\begin{aligned}
    \min_{\boldsymbol{\theta}} \mathcal{L}_\mathrm{task} + \alpha \mathcal{L}_\mathrm{risk} &\Leftrightarrow \min_{q(\mathbf{Z}^{1:T}\mid\mathbf{G}^{1:T}),q(\mathbf{Y}^{T},\mathbf{Z}^{1:T})} \mathrm{KL} \left [ p(\mathbf{Y}^{T} \mid \mathbf{G}^{1:T}, \mathbf{e})~\|~q(\mathbf{Y}^{T}\mid\mathbf{Z}^{1:T})\right ]\\
    &\quad\quad\quad\quad\quad\quad\quad\quad\quad\quad+ I(\mathbf{Y}^{T};\mathbf{e}\mid\mathbf{Z}^{1:T}) \\
    & \ge \min_{q(\mathbf{Z}^{1:T}\mid\mathbf{G}^{1:T}),q(\mathbf{Y}^{T},\mathbf{Z}^{1:T})} \mathrm{KL} \left [ p(\mathbf{Y}^{T} \mid \mathbf{G}^{1:T}, \mathbf{e})~\|~q(\mathbf{Y}^{T}\mid\mathbf{Z}^{1:T})\right ] \\
    & \ge \mathrm{KL} \left [ p(\mathbf{Y}^{T}\mid\mathbf{G}^{1:T},\mathbf{e})~\|~q(\mathbf{Y}^{T}\mid\mathbf{G}^{1:T}) \right ].~\text{(}I(\mathbf{Y}^{T};\mathbf{e}\mid\mathbf{Z}^{1:T})\text{ is non-negative)}
\end{aligned}
\end{equation}
We conclude the proof for Proposition~\ref{Prop:4}.
\end{proof}
\section{Experiment Details and Additional Results}\label{sec:exp}
\setcounter{table}{0}
\setcounter{footnote}{0}
\setcounter{figure}{0}
\setcounter{equation}{0}
\renewcommand{\thetable}{\ref*{sec:exp}.\arabic{table}}
\renewcommand{\thefigure}{\ref*{sec:exp}.\arabic{figure}}
\renewcommand{\theequation}{\ref*{sec:exp}.\arabic{equation}}

\subsection{Datasets Details}\label{sec:datasets}

We use three real-world datasets to evaluate \modelname~on the challenging future link prediction task. 
\begin{itemize}[leftmargin=1.5em]
    \item \textbf{COLLAB}\footnote{\url{https://www.aminer.cn/collaboration}}~\cite{tang2012cross} is an academic collaboration dataset with papers that were published during 1990-2006 (16 graph snapshots). Nodes and edges represent authors and co-authorship, respectively. Based on the co-authored publication, there are five attributes in edges, including ``Data Mining'', ``Database'', ``Medical Informatics'', ``Theory'' and ``Visualization''. We pick ``Data Mining'' as the shifted attribute. We apply word2vec~\cite{mikolov2013efficient} to extract 32-dimensional node features from paper abstracts. We use 10/1/5 chronological graph snapshots for training, validation, and testing, respectively. The dataset includes 23,035 nodes and 151,790 links in total.
    \item \textbf{Yelp}\footnote{\url{https://www.yelp.com/dataset}}~\cite{sankar2020dysat} contains customer reviews on business. Nodes and edges represent customer/business and review behaviors, respectively. Considering categories of business, there are five attributes in edges, including ``Pizza'', ``American (New) Food'', ``Coffee~\&~Tea'', ``Sushi Bars'' and ``Fast Food'' from January 2019 to December 2020 (24 graph snapshots). We pick ``Pizza'' as the shifted attribute. We apply word2vec~\cite{mikolov2013efficient} to extract 32-dimensional node features from reviews. We use 15/1/8 chronological graph snapshots for training, validation, and testing, respectively. The dataset includes 13,095 nodes and 65,375 links in total.
    \item \textbf{ACT}\footnote{\url{https://snap.stanford.edu/data/act-mooc.html}}~\cite{kumar2019predicting} describes student actions on a MOOC platform within a month (30 graph snapshots). Nodes represent students or targets of actions, edges represent actions. Considering the attributes of different actions, we apply K-Means~\cite{hartigan1979algorithm} to cluster the action features into five categories and randomly select a certain category (the 5th cluster) of edges as the shifted attribute. We assign the features of actions to each student or target and expand the original 4-dimensional features to 32 dimensions by a linear function. We use 20/2/8 chronological graph snapshots for training, validation, and testing, respectively. The dataset includes 20,408 nodes and 202,339 links in total.
\end{itemize}

Statistics of the three datasets are concluded in Table~\ref{tab:datasets}. These three datasets have different time spans and temporal granularity (16 years, 24 months, and 30 days), covering most real-world scenarios. The most challenging dataset for the future link prediction task is the COLLAB. In addition to having the longest time span and the coarsest temporal granularity, it also has the largest difference in the properties of its links.
\begin{table}[htbp]
\caption{Statistics of the real-world datasets.}
\label{tab:datasets}
\centering
\resizebox{\linewidth}{!}{
    \begin{tabular}{ccccccc}
    \toprule
    \textbf{Dataset} & \textbf{\# Nodes} & \textbf{\# Links}  & \textbf{\makecell{\# Graph\\Snapshots}} & \textbf{\makecell{Temporal\\Granularity}}  & \textbf{In-distribution Attributes}& \textbf{\makecell{Shifted\\Attribute}} \\
    \midrule
    \renewcommand{\arraystretch}{10}
    COLLAB & 23,035 & 151,790  & 16   & year & \makecell{Database, Medical Informatics,\\Theory, Visualization} & Data Mining\\
    \specialrule{0em}{1.2pt}{1.2pt}
    Yelp  & 13,095 & 65,375  & 24   & month  & \makecell{American (New) Food, Fast Food\\Sushi Bars, Coffee~\&~Tea} & Pizza\\
    \specialrule{0em}{1.2pt}{1.2pt}
    ACT   & 20,408 & 202,339  & 30   & day  & Attributes 1-4 & Attribute 5\\
    \bottomrule
    \end{tabular}%
}
\end{table}

We visualize the distribution shifts in the three real-world dataset with respect to the average neighbor degree (Figure~\ref{fig:degree}) and the number of interactions (Figure~\ref{fig:link}) in training and testing sets. We observe that, there exists a huge difference in terms of the values, trends, \etc, between the training set and the testing set, which demonstrates the distribution shifts are heavy. Interestingly, COLLAB has less testing data than its training data, which is common in real-world scenarios, such as not all the co-authorship was established from the beginning. In addition, we notice a drastic drop in Yelp after January 2019 when the COVID-19 outbreak. The sudden change in predictive patterns increases the difficulty of the task. A similar abnormal steep upward trend can also be witnessed in ACT after Day 20, which may be caused by an unknown out-of-distribution event.

\begin{figure}[htbp]
\centering
\subfigure[COLLAB]{
\includegraphics[width=0.33\linewidth]{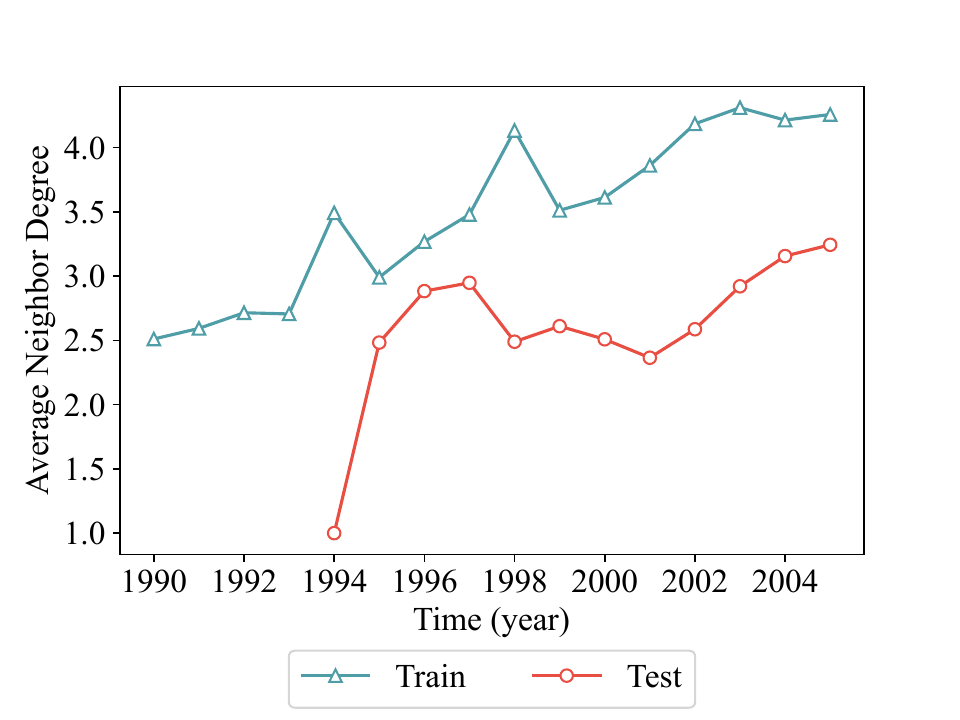}
\label{fig:de_collab}
}\hspace{-3mm} 
\subfigure[Yelp]{
\includegraphics[width=0.33\linewidth]{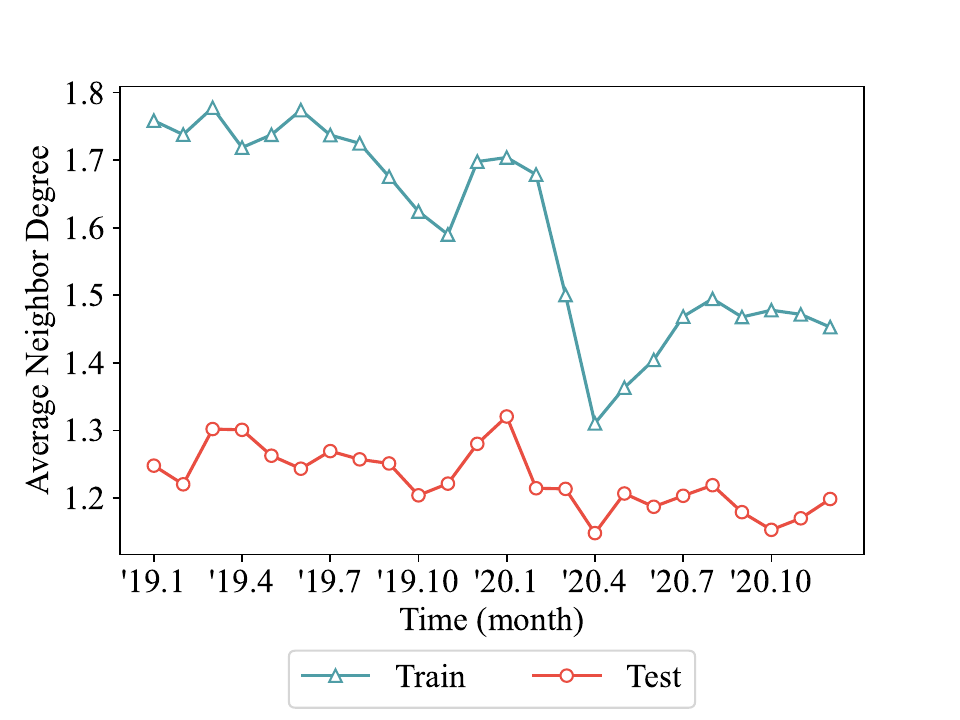}
\label{fig:de_yelp}
}\hspace{-3mm} 
\subfigure[ACT]{
\includegraphics[width=0.33\linewidth]{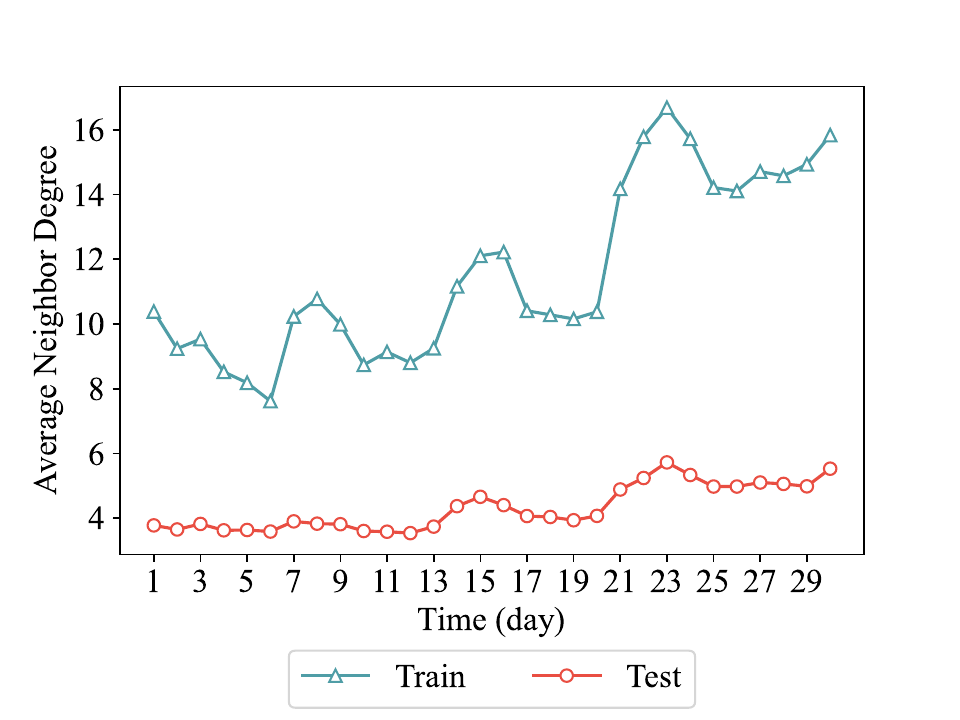}
\label{fig:de_act}
}
\centering
\caption{Visualizations of the average neighbor degree in each graph snapshot.}
\label{fig:degree}
\end{figure}

\begin{figure}[htbp]
\centering
\subfigure[COLLAB]{
\includegraphics[width=0.33\linewidth]{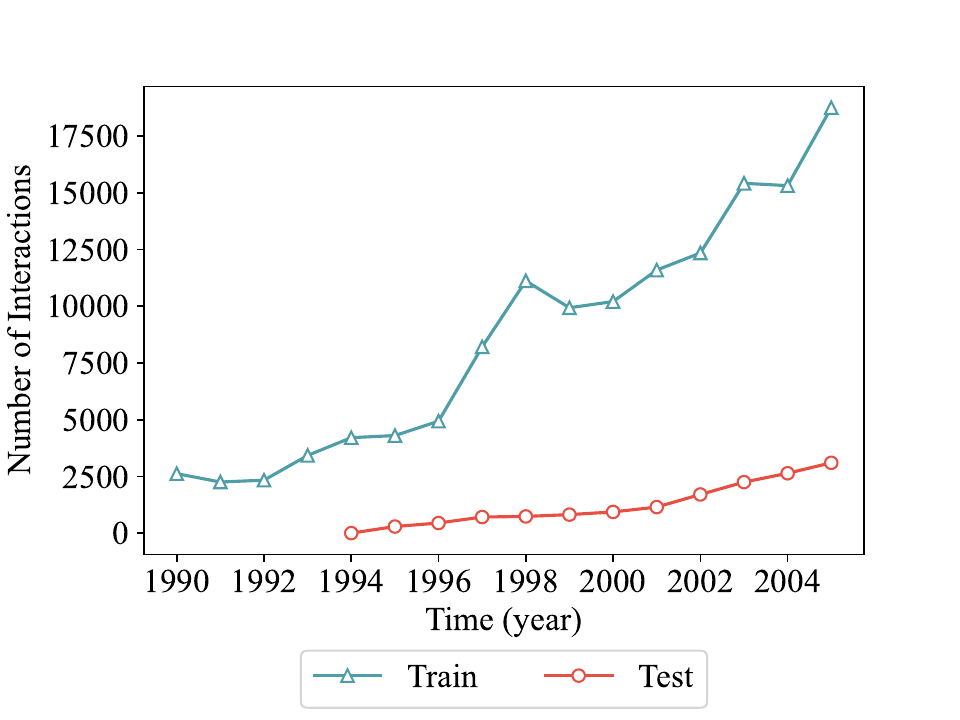}
\label{fig:link_collab}
}\hspace{-2.9mm} 
\subfigure[Yelp]{
\includegraphics[width=0.33\linewidth]{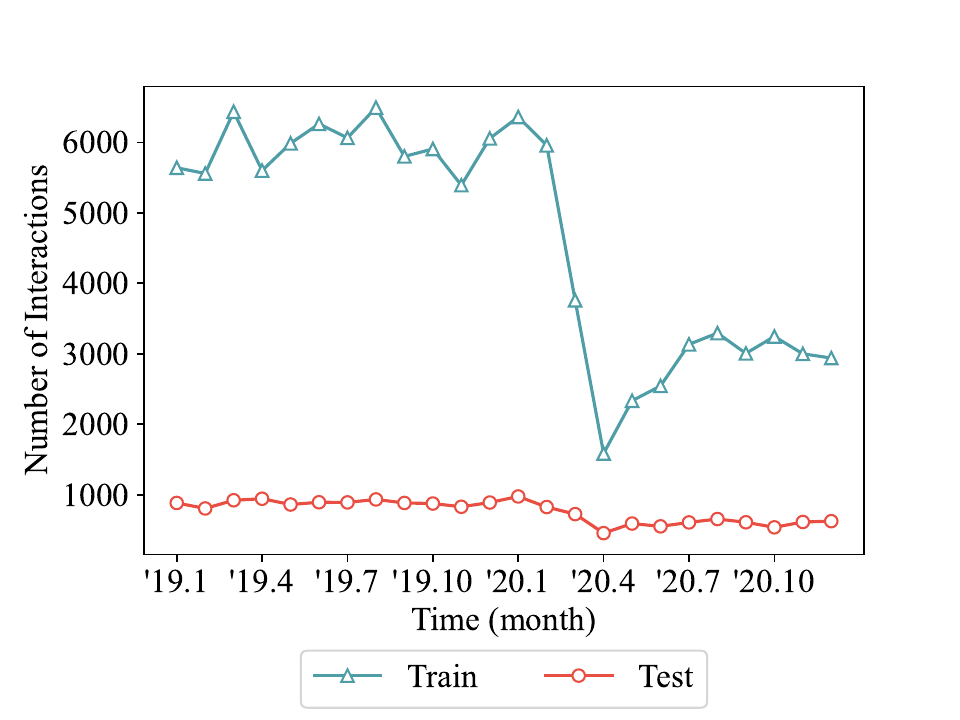}
\label{fig:link_yelp}
}\hspace{-2.6mm} 
\subfigure[ACT]{
\includegraphics[width=0.33\linewidth]{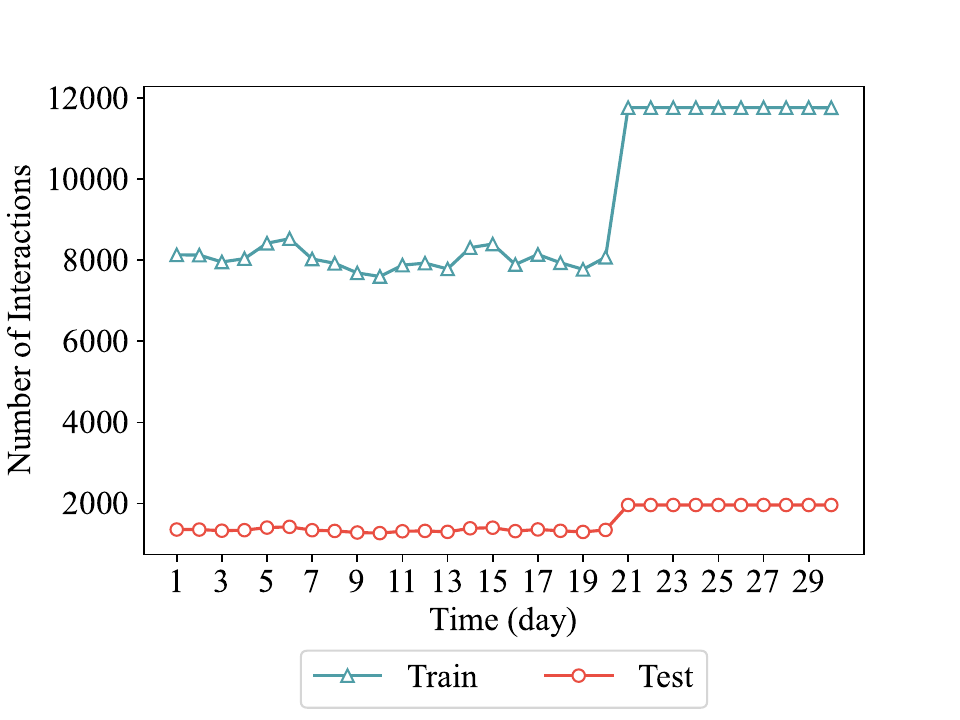}
\label{fig:link_act}
}
\centering
\caption{Visualizations of the number of interactions in each graph snapshot.}
\label{fig:link}
\end{figure}

\subsection{Baseline Details}

We compare \modelname~with representative GNNs and OOD generalization methods.
\begin{itemize}[leftmargin=1.5em]
    \item \textbf{Static GNNs}: 
    \textbf{GAE}~\cite{kipf2016variational} is a representative static GNN as the GCN~\cite{kipf2016semi} based graph autoencoder; \textbf{VGAE}~\cite{kipf2016variational} further introduces variational variables into GAE, possessing better generative ability.
    \item \textbf{Dynamic GNNs}: \textbf{GCRN}~\cite{seo2018structured} is a representative dynamic GNN following ``spatial first, temporal second'' convolution mechanism, which firstly adopts GCNs to obtain node embeddings and then a GRU~\cite{cho2014learning} to capture temporal relations; \textbf{EvolveGCN}~\cite{pareja2020evolvegcn} applies an LSTM~\cite{hochreiter1997long} or GRU to flexibly evolve the parameters of GCNs instead of modeling the dynamics after deriving node embeddings; \textbf{DySAT}~\cite{sankar2020dysat} models dynamic graph through self-attentions in both structural neighborhoods and temporal dynamics.
    \item \textbf{OOD generalization methods}: \textbf{IRM}~\cite{arjovsky2019invariant} minimizes the empirical risk to learn an optimal invariant predictor under potential environments; \textbf{V-REx}~\cite{krueger2021out} extends the IRM by reweighting the empirical risk to emphasize more on training samples with larger errors; \textbf{GroupDRO}~\cite{sagawa2019distributionally} reduces the empirical risk gap across training distributions to enhance the robustness when encountering heavy OOD shifts; \textbf{DIDA}~\cite{zhang2022dynamic} tackles OOD generalization problem on dynamic graphs for the first time by discovering and utilizing invariant patterns. It is worth noting that, DIDA~\cite{zhang2022dynamic} is the most relative work as our main baseline for comparison.
\end{itemize}

\subsection{Experiment Setting Details}\label{sec:settings}

\textbf{Detailed Settings for Section~\ref{sec:exp1}. }
Each of the three real-world datasets can be split into several partial dynamic graphs based on their link properties, which demonstrates the multi-attribute relations under the impact of their surrounding environments. We filter out one certain attribute links as the variables under the future shifted environment as the OOD data, and the left links are further divided into training, validation, and testing sets chronologically. The shifted attribute links will only be accessible during the OOD testing stage, which is more practical and challenging in real-world scenarios as the model cannot capture any information about the filtered links during training and validation. Note that, all attribute-related features have been removed after the above operations before feeding to \modelname. Take the COLLAB dataset for example. There are five attribute links in COLLAB as summarized in Table~\ref{tab:datasets}. We filter out all the links with the attribute ``Data Mining'', and split the rest of the links into training, validation, and testing sets by positive and negative edge sampling. Then we add the ``Data Mining'' links into testing sets to make the distribution shifts. Finally, we remove all link attribute information to avoid data leakage.

\textbf{Detailed Settings for Section~\ref{sec:exp2}. }Denote original node features and structures as $\mathbf{X}^t \in \mathbb{R}^{N\times d}$ and $\mathbf{A}^t \in \{0,1\}^{N \times N}$. For each time $t$, we uniformly sample $p(t) | \mathcal{E}^{t+1} | $ positive links and $(1-p(t))| \mathcal{E}^{t+1}|$ negative links, which are then factorized into shifted features $\mathbf{X}^{t\prime} \in \mathbb{R}^{N \times d}$ while preserving structural property. Original node features and synthesized node features are concatenated as $[\mathbf{X}^t \| \mathbf{X}^{t\prime}]$ as the input. In details, $\mathbf{X}^{t\prime}$ is obtained by training the embeddings with reconstruction loss $\ell(\mathbf{X}^{t\prime} \mathbf{X}^{t\prime \top},\mathbf{A}^{t+1})$, where $\ell(\cdot)$ refers to the cross-entropy loss function~\cite{de2005tutorial}. In this way, we find that the link predictor can achieve satisfying results by using $\mathbf{X}^{t\prime} $ to predict the links in $\mathbf{A}^{t+1}$, which demonstrates that the generated node features have strong correlations with the future underlying environments. The sampling probability $p(t)=\bar{p}+\sigma \cos (t)$, where $\mathbf{X}^{t\prime} $ with higher $p(t)$ will have stronger spurious correlations with future underlying environments. Note that, we apply the $\mathrm{clip(\cdot)}$ function to limit the probability to between 0 and 1. We set $\bar{p}$ to be 0.4, 0.6, and 0.8 for training and 0.1 for testing; set $\sigma=$~0.05 in training and $\sigma=$~0 in testing.

\textbf{Detailed Settings for Section~\ref{sec:ipr}. }We set the number of nodes $N=$~2,000 with 10 graph snapshots, where 6/2/2 chronological snapshots are used for training, validation, and testing, respectively. We set $K=$~5 and let $\sigma_{\mathbf{e}}$ represent the proportion of the environments in which the invariant patterns are learned, where higher $\sigma_{\mathbf{e}}$ means more reliable invariant patterns. Node features with respect to different environments are drawn from five multivariate normal distributions $\mathcal{N}(\boldsymbol{\mu}_k;\boldsymbol{\sigma}_k)$. 
Features with respect to the invariant patterns will be perturbed slightly, while features with respect to the variant patterns will be perturbed significantly. Here we perturb features by adding Gaussian noise with different degrees. We then construct graph structures based on node feature similarity. Links generated by the node-pair representations under the $\mathbf{e}_{5}$ are filtered out during the training and validation stages, which is similar to the setting of Section~\ref{sec:exp1}, and they only appear in the testing stage following the proportion constraint $\bar{q}$. Higher $\bar{q}$ means more heavier distribution shifts. $\mathbb{I}_{\mathrm{ACC}}$ denotes the prediction accuracy of the invariant patterns by $\mathbb{I}(\cdot)$. As the environments $\mathbf{e}=\{\mathbf{e}_k\}_{k=1}^{5}$ do not satisfy the permutation invariance property, thus the predicted invariant patterns with respect to $\mathbf{e}$ is hard to evaluate. May wish to set the $\mathbb{I}_{\mathrm{ACC}}$ reports the highest results as we shift the orders of environments in $\mathbf{e}$ to satisfy the predicted invariant patterns better.

\subsection{Hyperparameter Sensitivity Analysis}\label{sec:sensitivity}
We analyze the sensitivity of the hyperparameters $\alpha$ and $\beta$, which act as the trade-off for loss in Eq.~\eqref{eq:final2}. The hyperparameter $\alpha$ is chosen from $\{ \text{10}^{-{\text3}},\text{10}^{-\text{2}},\text{10}^{-\text{1}},\text{10}^{\text{0}},\text{10}^{\text{1}} \}$, and $\beta$ is chosen from $\{ \text{10}^{-\text{6}}, \text{10}^{-\text{5}}, \text{10}^{-\text{4}}, \text{10}^{-\text{3}}, \text{10}^{-\text{2}} \}$. We conduct analysis on three real-world datasets and report results in Figure~\ref{fig:alpha} and Figure~\ref{fig:beta}. Results demonstrate that the task performance experiences a significant decline in most datasets when the values of $\alpha$ and $\beta$ are too large or too small. We can draw a conclusion that $\alpha$ acts as a balance factor between exploiting the spatio-temporal invariant patterns for out-of-distribution prediction and generalizing to diverse latent environments with respect to variant patterns. $\beta$ plays a role in balancing the trade-off between modeling the environment and inferring the environment distribution as a bi-level optimization. In conclusion, different combinations of hyperparameters lead to varying task performance, and we follow the tradition of reporting the best task performance with standard deviations.

\begin{figure}[htbp]
\centering
\subfigure[COLLAB]{
\includegraphics[width=0.32\linewidth]{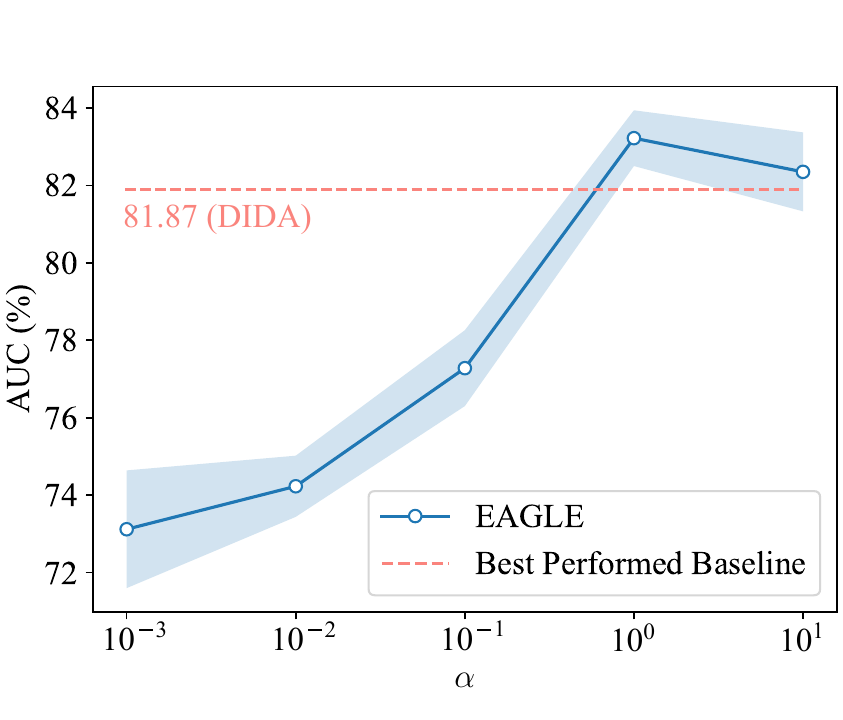}
\label{fig:a_collab}
}\hspace{-2.5mm} 
\subfigure[Yelp]{
\includegraphics[width=0.32\linewidth]{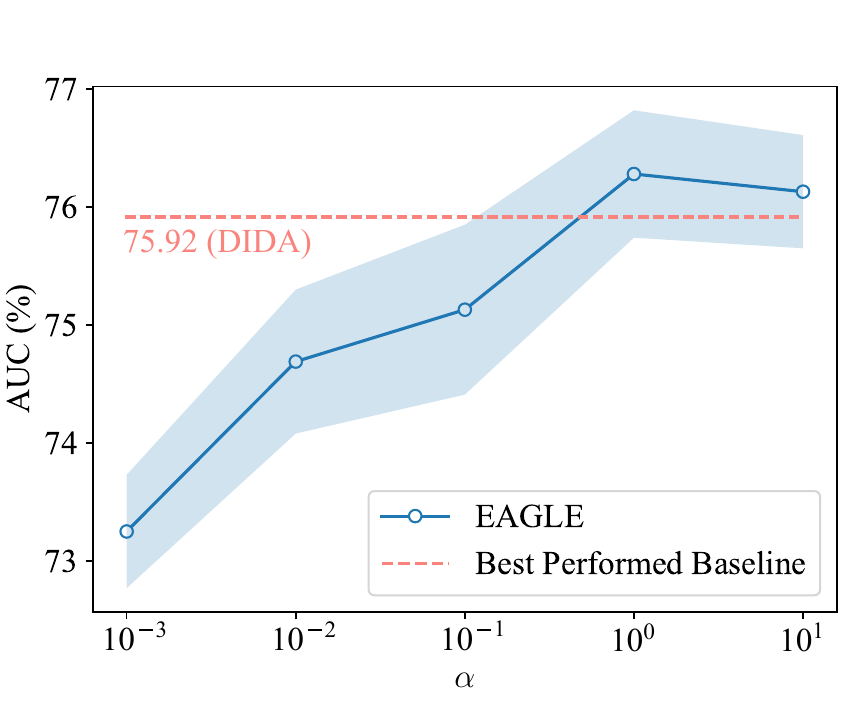}
\label{fig:a_yelp}
}\hspace{-2.5mm} 
\subfigure[ACT]{
\includegraphics[width=0.32\linewidth]{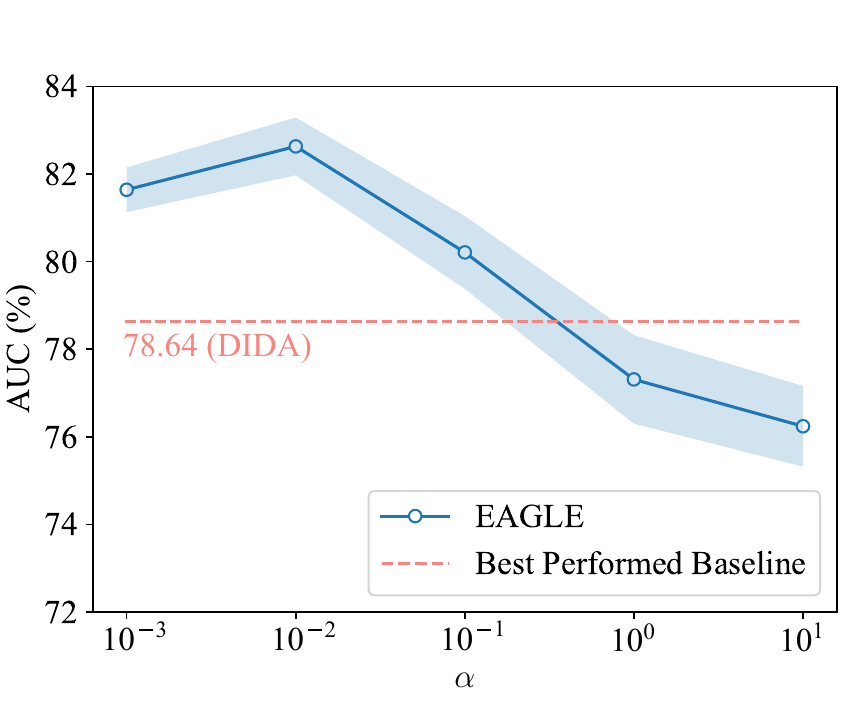}
\label{fig:a_act}
}
\centering
\caption{Sensitivity analysis of the hyperparameter $\alpha$ on three real-world datasets. The solid line shows the average AUC (\%) in the testing stage and the light blue area represents standard deviations. The dashed line represents the average AUC (\%) of the best-performed baseline.}
\label{fig:alpha}
\end{figure}

\begin{figure}[htbp]
\centering
\subfigure[COLLAB]{
\includegraphics[width=0.32\linewidth]{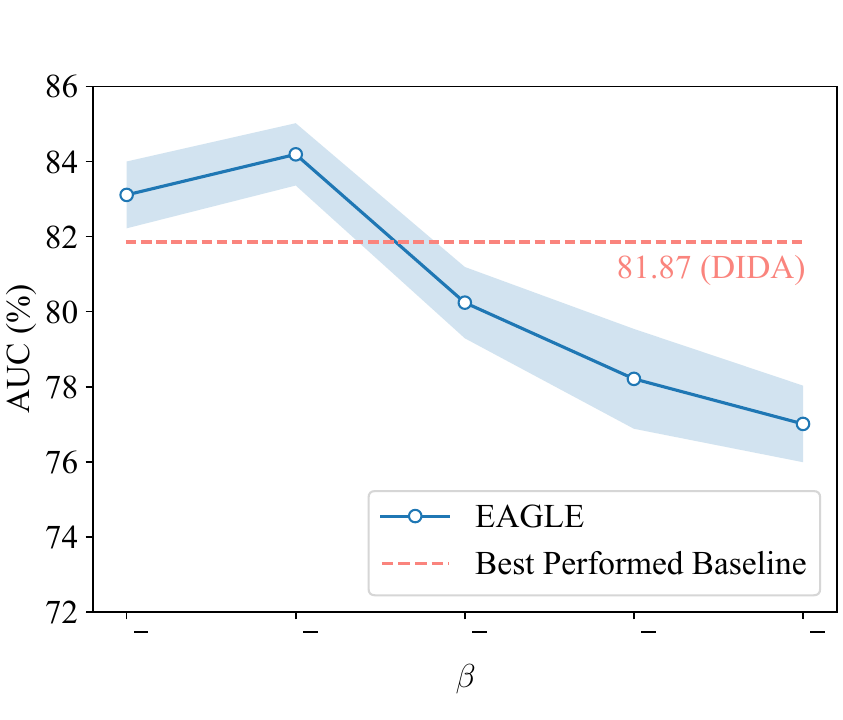}
\label{fig:b_collab}
}\hspace{-2.5mm} 
\subfigure[Yelp]{
\includegraphics[width=0.32\linewidth]{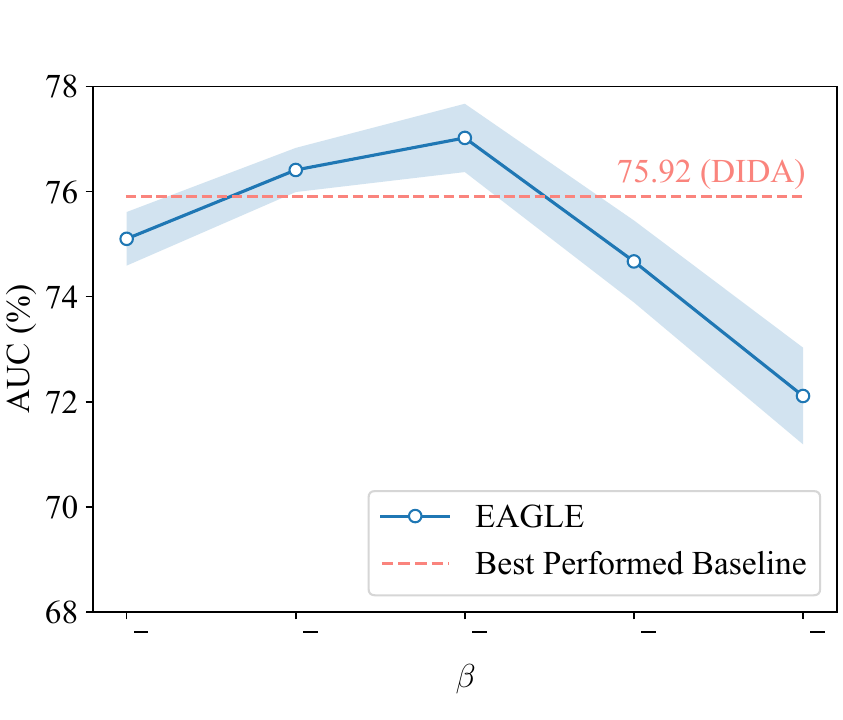}
\label{fig:b_yelp}
}\hspace{-2.5mm} 
\subfigure[ACT]{
\includegraphics[width=0.32\linewidth]{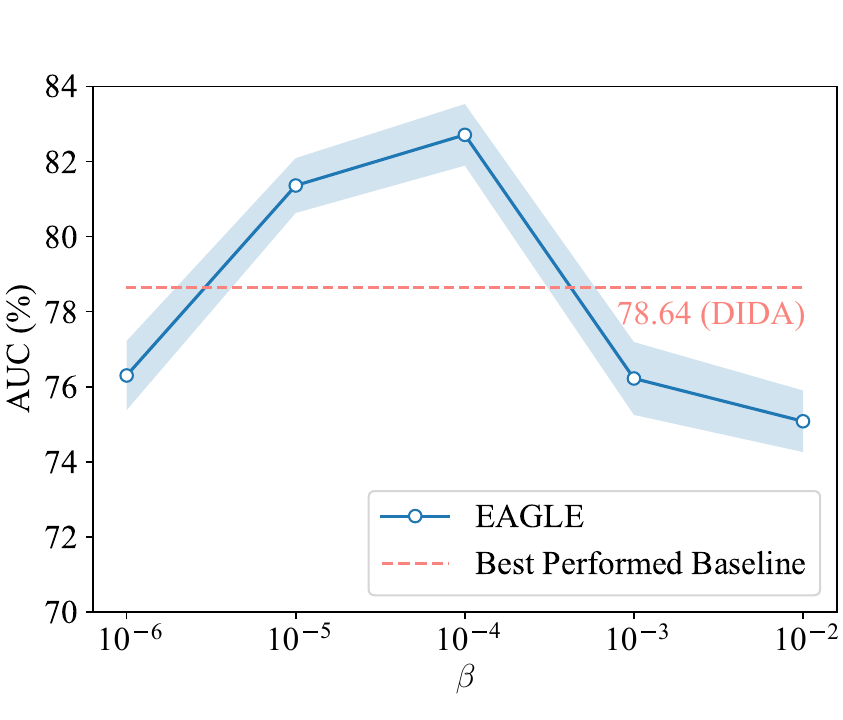}
\label{fig:b_act}
}
\centering
\caption{Sensitivity analysis of the hyperparameter $\beta$ on three real-world datasets. The solid line shows the average AUC (\%) in the testing stage and the light blue area represents standard deviations. The dashed line represents the average AUC (\%) of the best-performed baseline.}
\label{fig:beta}
\end{figure}

\subsection{Intervention Efficiency Analysis}\label{sec:efficiency}
From the results of complexity analysis in Appendix~\ref{sec:algorithm}, we believe the computational complexity bottleneck of \modelname~lies in the spatio-temporal causal intervention mechanism. In this case, we analyze the intervention efficiency in the following two aspects.

\textbf{Intervention Ratio. }We perform node-wisely causal interventions as in Eq.~\eqref{eq:intervention}. However, executing interventions for all nodes in each epoch is time-consuming. Thus, we propose randomly selecting nodes and performing interventions according to a certain ratio. Let the intervention ratio represent the ratio of the number of intervened nodes to the total number of nodes $|\mathcal{V}|$. Figure~\ref{fig:sl} shows the changes in task performance (AUC \%) and the training time as the intervention ratio increases. We observe the AUC increases, proving that the spatio-temporal causal intervention mechanism is more effective in solving the OOD generalization problem with more intervened nodes. In addition, we notice a jump in the growth rate of AUC on three datasets at the ratio of 0.6, which indicates the most suitable intervention ratio while maintaining an acceptable training time cost.

\textbf{Mixing Ratio. }The intervention set $\mathbf{s}_v$ is sampled from $\mathcal{S}_{\mathrm{ob}}\cup \mathcal{S}_{\mathrm{ge}}$. While $\mathcal{S}_{\mathrm{ob}}$ has been already prepared after we model the environments in Section~\ref{sec:EIDyGNN}, the $\mathcal{S}_{\mathrm{ge}}$ requires instantly generating, which may be a burden on the intervention efficiency. Let the maxing ratio represent the ratio of the number of observed environment samples to the number of generated environment samples. Figure~\ref{fig:hh} shows the changes in task performance (AUC \%) and the training time as the mixing ratio increases. Different from the trend in Figure~\ref{fig:sl}, AUC reached the maximum value at different ratios on the three datasets, and when the ratio is too large or small, the model performs poorly, indicating that different datasets have varying preferences for mixing ratio settings. In addition, we observe the variation in training time is not significant, verifying that although $\mathcal{S}_{\mathrm{ob}}$ needs to be generated instantly, its time cost is acceptable still, and we should pay more attention on the optimal mixing ratio.

\begin{figure}[htbp]
\centering
\subfigure[COLLAB]{
\includegraphics[width=0.33\linewidth]{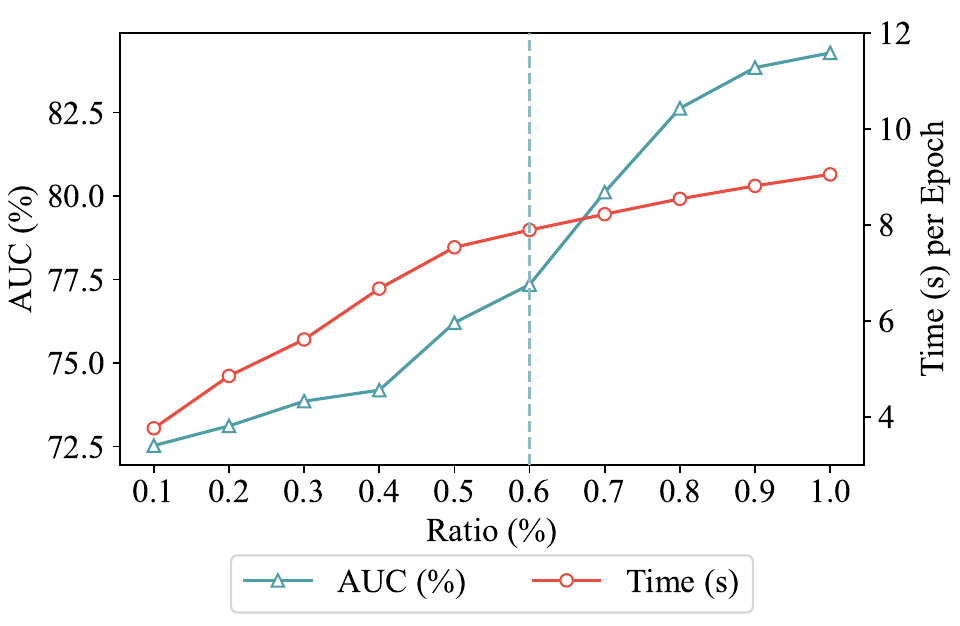}
\label{fig:sl_collab}
}\hspace{-3mm} 
\subfigure[Yelp]{
\includegraphics[width=0.33\linewidth]{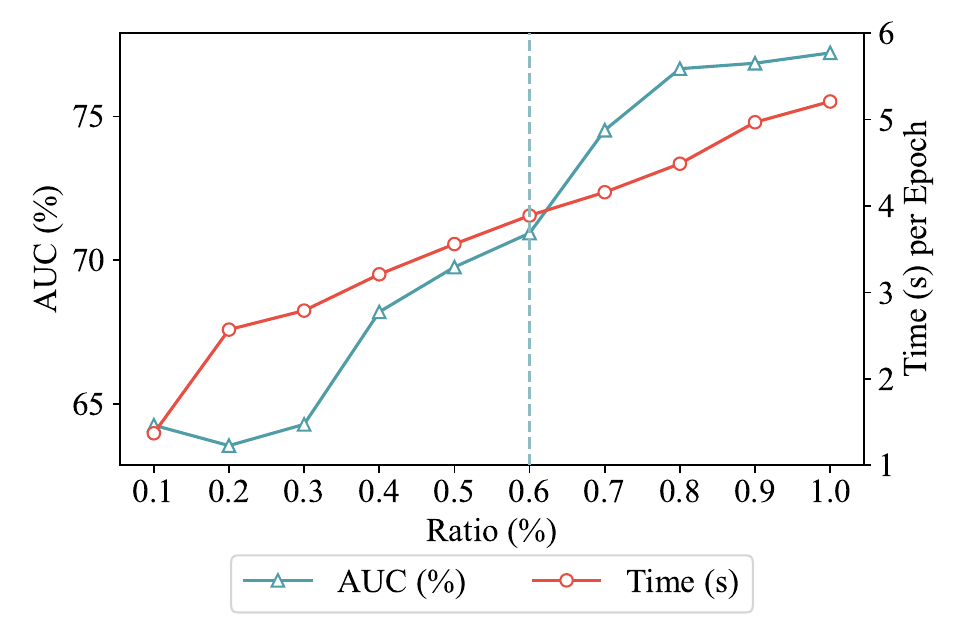}
\label{fig:sl_yelp}
}\hspace{-3mm} 
\subfigure[ACT]{
\includegraphics[width=0.33\linewidth]{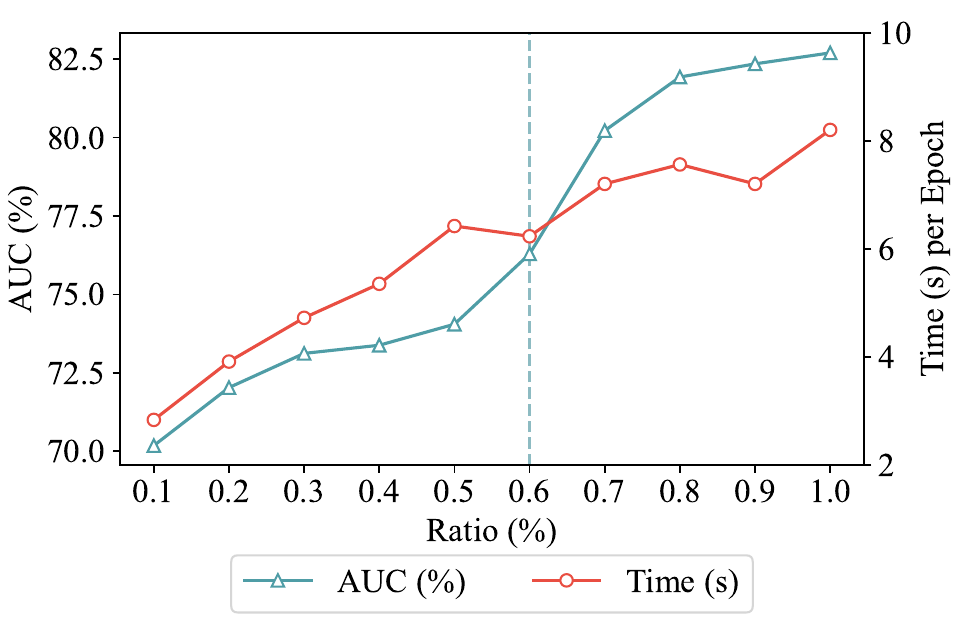}
\label{fig:sl_act}
}
\centering
\caption{Intervention efficiency analysis on the intervention ratio. The vertical dashed line indicates the most suitable intervention ratio while maintaining an acceptable training time cost.}
\label{fig:sl}
\end{figure}

\begin{figure}[htbp]
\centering
\subfigure[COLLAB]{
\includegraphics[width=0.33\linewidth]{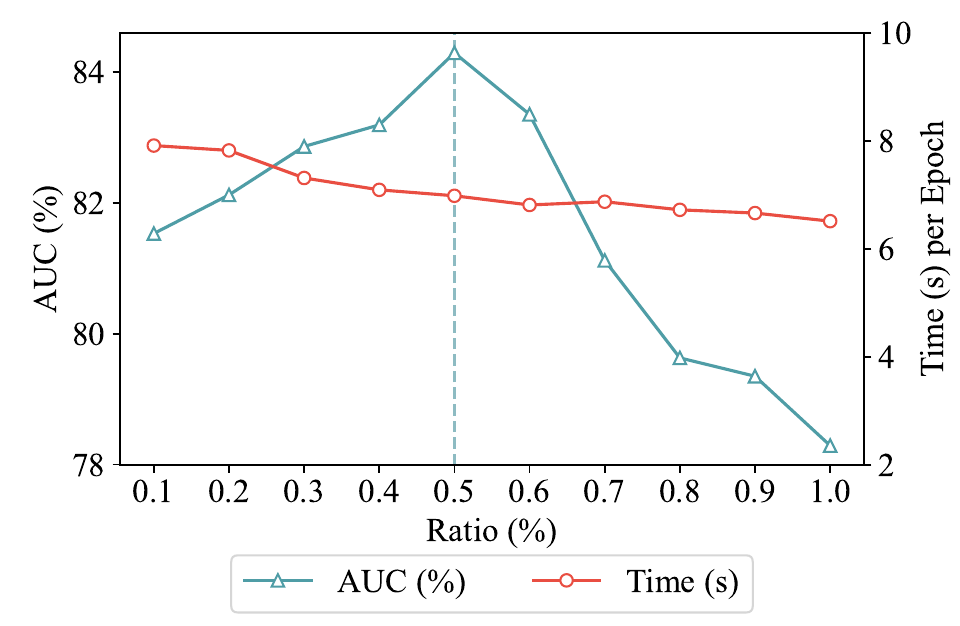}
\label{fig:hh_collab}
}\hspace{-3mm} 
\subfigure[Yelp]{
\includegraphics[width=0.33\linewidth]{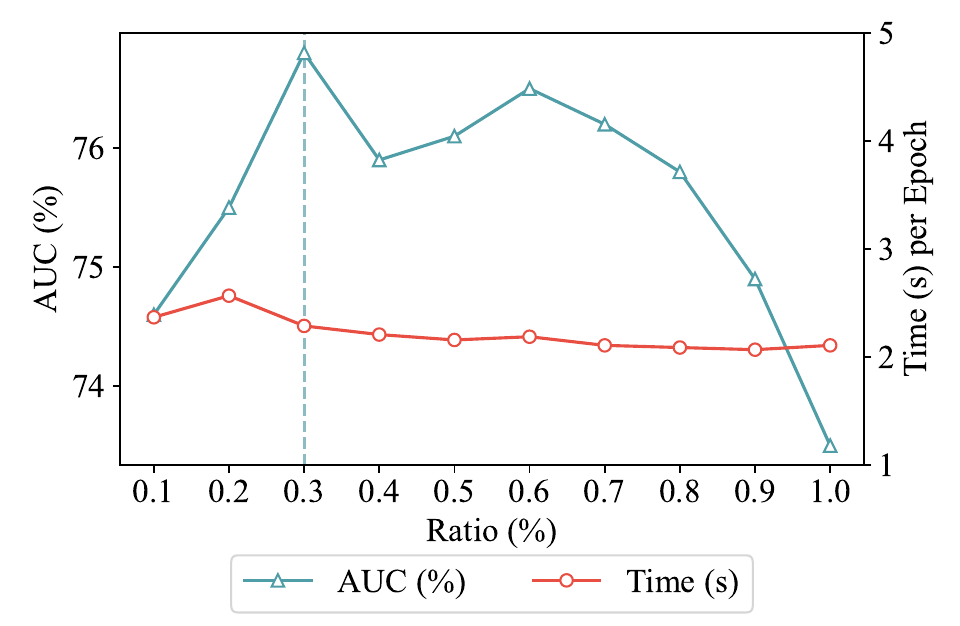}
\label{fig:hh_yelp}
}\hspace{-3mm} 
\subfigure[ACT]{
\includegraphics[width=0.33\linewidth]{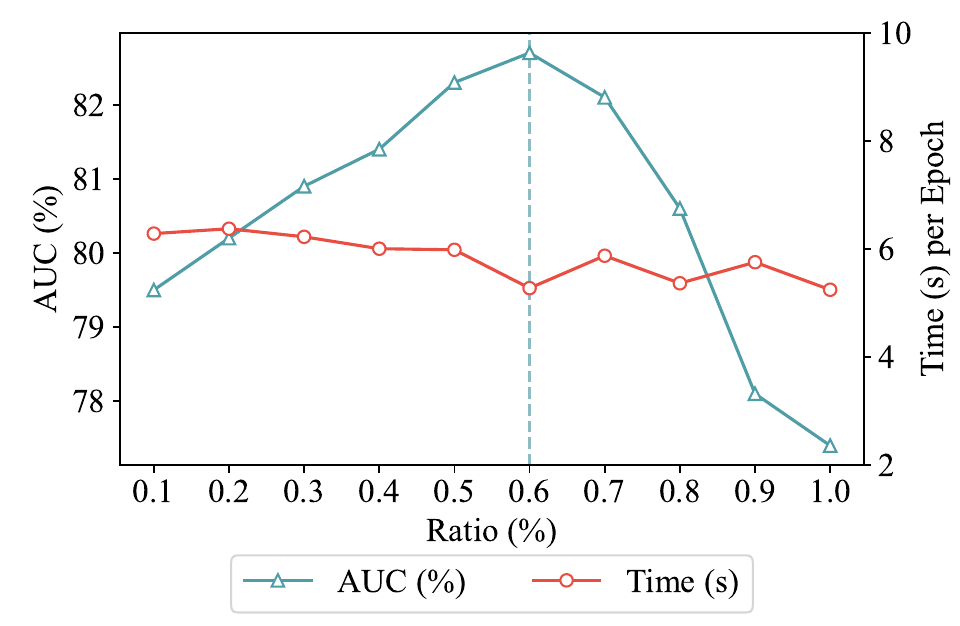}
\label{fig:hh_act}
}
\centering
\caption{Intervention efficiency analysis on the mixing ratio. The vertical dashed line indicates the ratio when AUC reaches the maximum value.}
\label{fig:hh}
\end{figure}

\subsection{Additional Analysis of Section~\ref{sec:rq1}}
We visualize Table 1 and Table 2 in Section 4.1 to provide additional analysis. We have concluded in Section~\ref{sec:rq1} that the baselines own a strong fitting ability but weak generalization ability between the distribution shifts settings. In addition to visualizing the task performance (AUC \%), Figure~\ref{fig:delta_add} annotates the decrease of AUC under each baseline method, where the horizontal dashed line represents the AUC decrease of our \modelname. The smaller the decrease, the stronger the control ability under the impact of out-of-distribution shifts. We can observe that on the vast majority of datasets, our method can improve task performance in both \textit{w/o OOD} and \textit{w/ OOD} scenarios while minimizing AUC decrease. Our control over AUC decrease exceeds the baseline except for GAE and GCRN in the vast majority of cases. For the above two baseline methods, although they have better control ability over AUC decrease than our method, the premise is that their task performance is inherently poor. In addition, our method achieves the most excellent task performance on the ACT dataset, which can explain the unsatisfying but acceptable AUC decrease control. In summary, in addition to evaluating the advantages of our \modelname~in terms of task performance and generalization ability, which is the most topic-relative and common, our \modelname~also maintains the ability to reduce the impact of OOD on task performance.

\begin{figure}[htbp]
\centering
\subfigure[COLLAB]{
\includegraphics[width=0.33\linewidth]{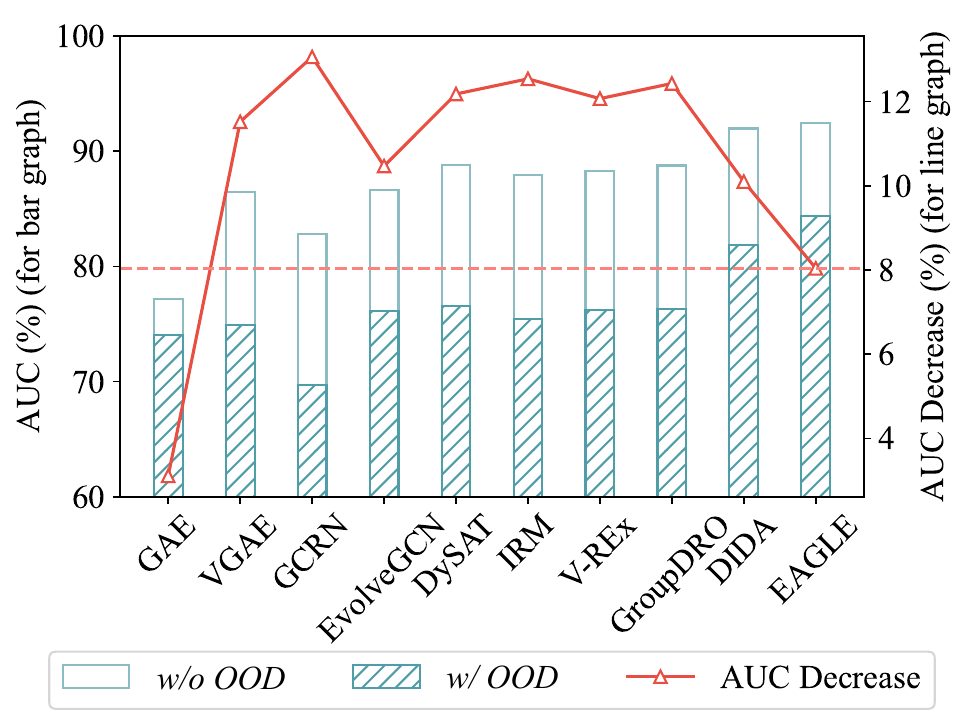}
\label{fig:delta_collab_link}
}\hspace{-3mm} 
\subfigure[Yelp]{
\includegraphics[width=0.33\linewidth]{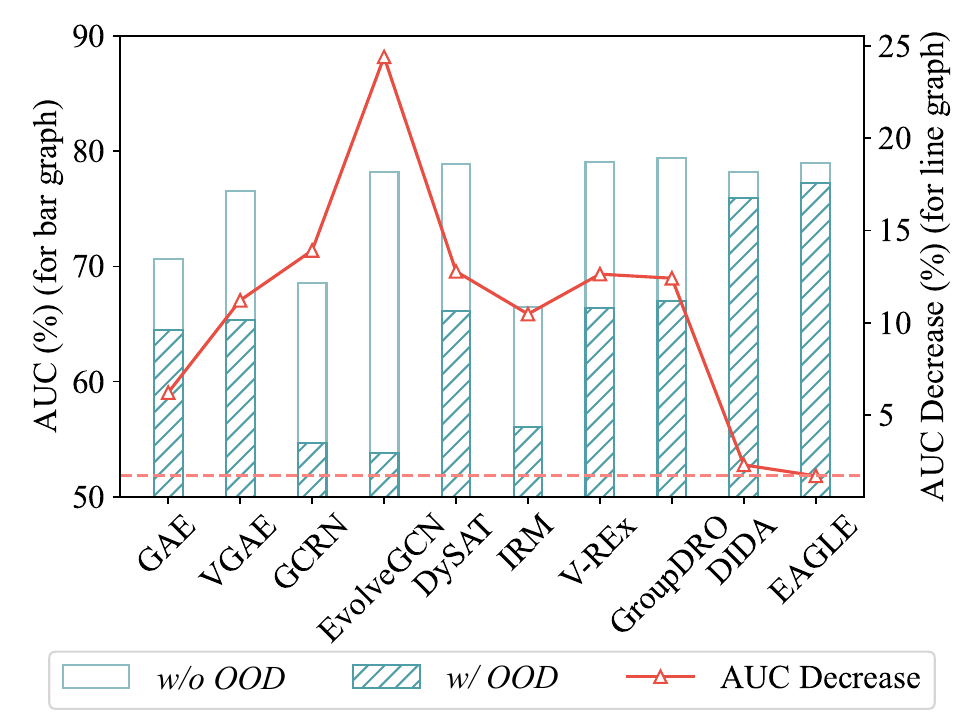}
\label{fig:delta_yelp_link}
}\hspace{-3mm} 
\subfigure[ACT]{
\includegraphics[width=0.33\linewidth]{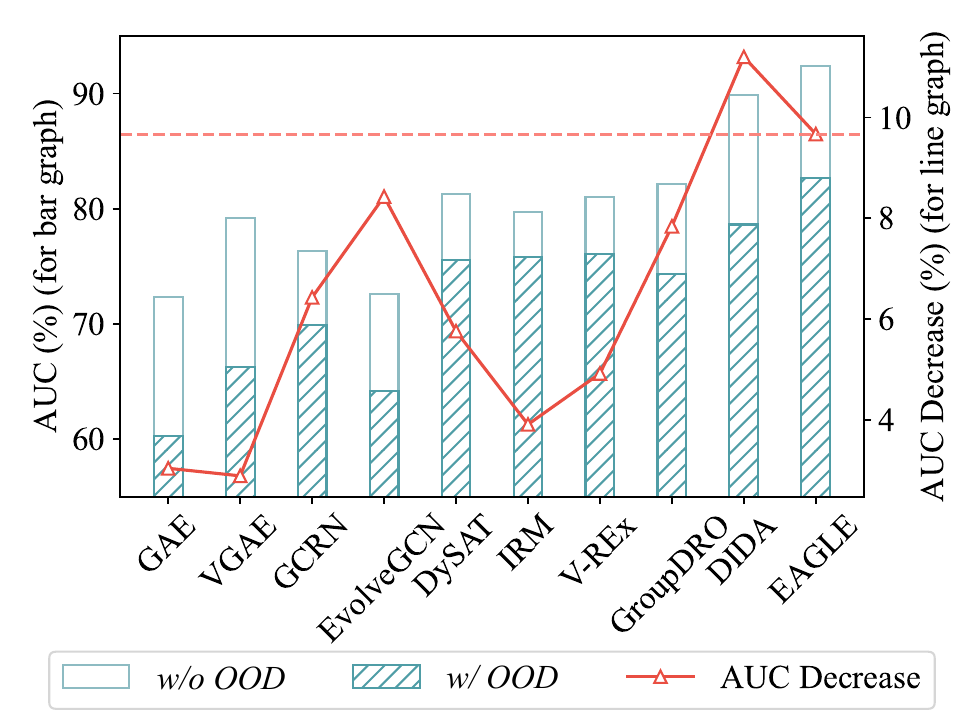}
\label{fig:delta_act_link}
}\\
\subfigure[COLLAB ($\bar{p}=$~0.4)]{
\includegraphics[width=0.33\linewidth]{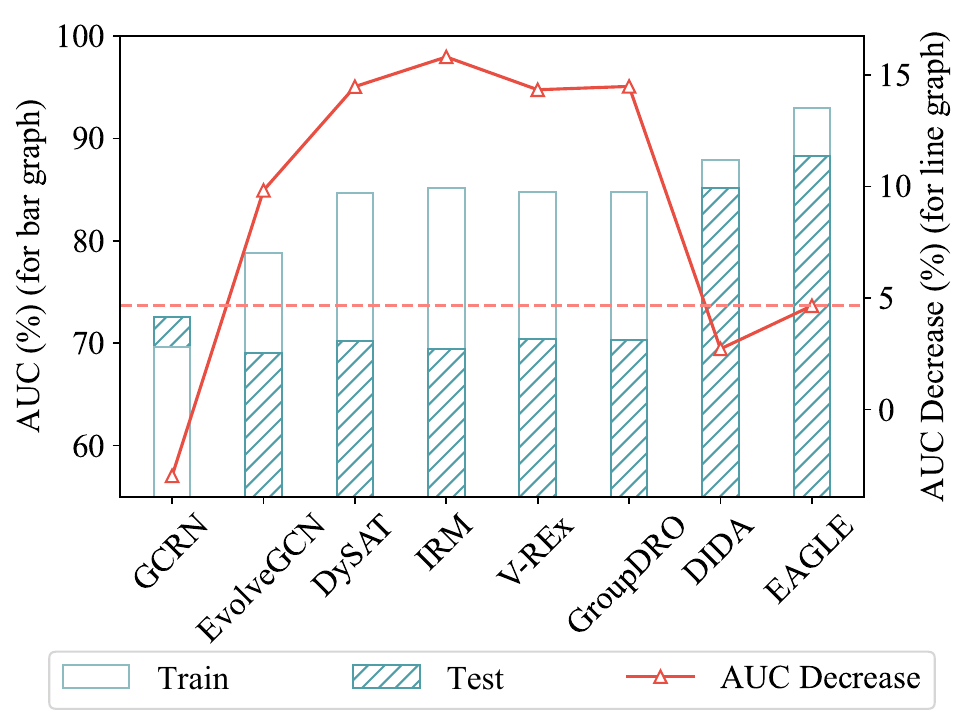}
\label{fig:delta_collab_n1}
}\hspace{-3mm} 
\subfigure[COLLAB ($\bar{p}=$~0.6)]{
\includegraphics[width=0.33\linewidth]{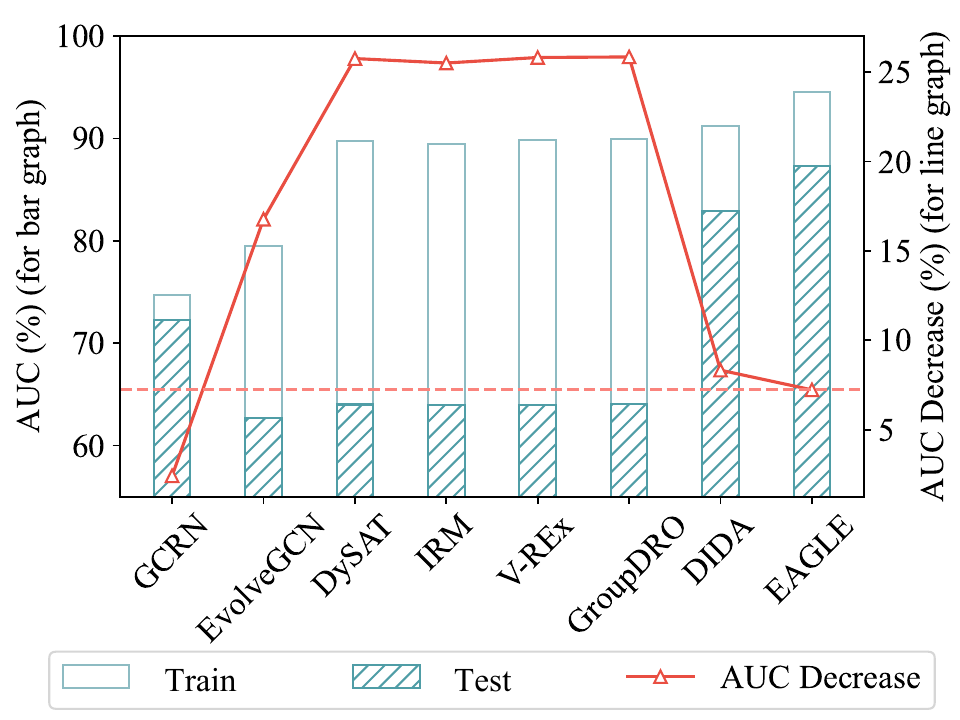}
\label{fig:delta_collab_n2}
}\hspace{-3mm} 
\subfigure[COLLAB ($\bar{p}=$~0.8)]{
\includegraphics[width=0.33\linewidth]{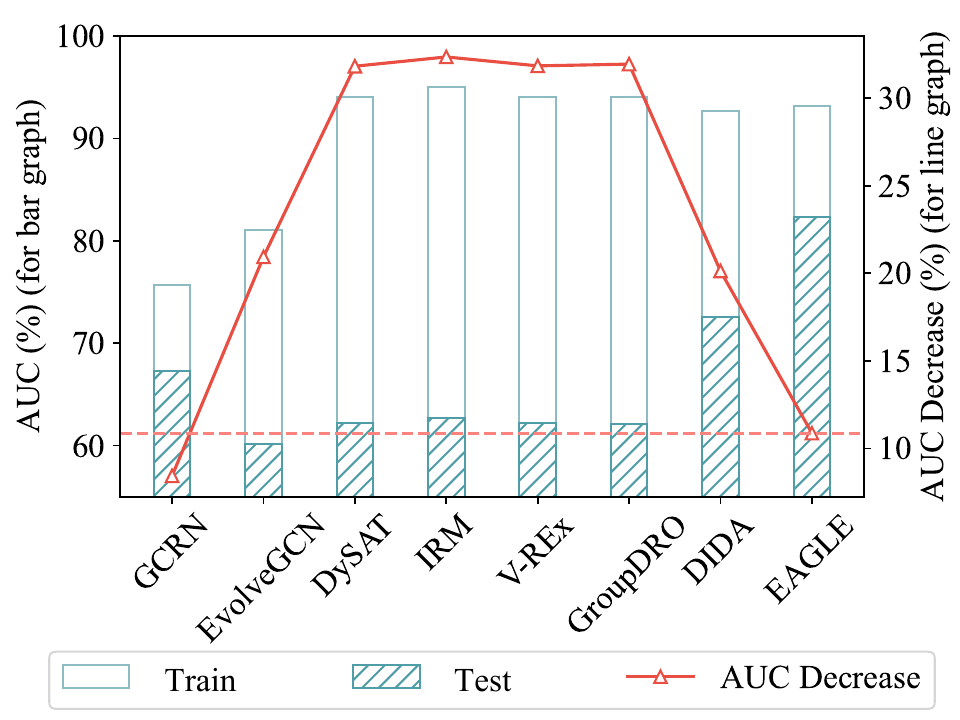}
\label{fig:delta_collab_n3}
}
\centering
\caption{Additional analysis of the performance on future link prediction.}
\label{fig:delta_add}
\end{figure}

\subsection{Additional Results of Section~\ref{sec:ipr}}
\label{sec:ipr_add}

Section~\ref{sec:ipr} reports the results when $\bar{q}=$~0.8. Here we report the additional results when $\bar{q}=$~0.4 and 0.6 in Figure~\ref{fig:syn_add}. All detailed results are summarized in Table~\ref{haha}. A similar trend can be observed as we report in Section~\ref{sec:ipr} that as $\sigma_{\mathbf{e}}$ increases, the performance of \modelname~shows a significant increase while narrowing the gap between \textit{w/o OOD} and \textit{w/ OOD} scenarios. Although DIDA~\cite{zhang2022dynamic} also shows an upward trend, its growth rate is much more gradual, which indicates that DIDA~\cite{zhang2022dynamic} is difficult to perceive changes in the underlying environments caused by different $\sigma_{\mathbf{e}}$ as it is incapable of modeling the environments, thus cannot achieve satisfying generalization performance. In addition, we also notice a positive correlation between $\mathbb{I}_{\mathrm{ACC}}$ and the AUC, which verifies the improvements are attributed to the proper recognition of the invariant patterns by $\mathbb{I}(\cdot)$. In conclusion, our \modelname~can exploit more reliable invariant patterns, thus performing high-quality invariant learning and efficient causal interventions, and achieving better generalization ability.

\begin{figure}[htbp]
\centering
\subfigure[$\bar{q}=$~0.4]{
\includegraphics[width=0.48\linewidth]{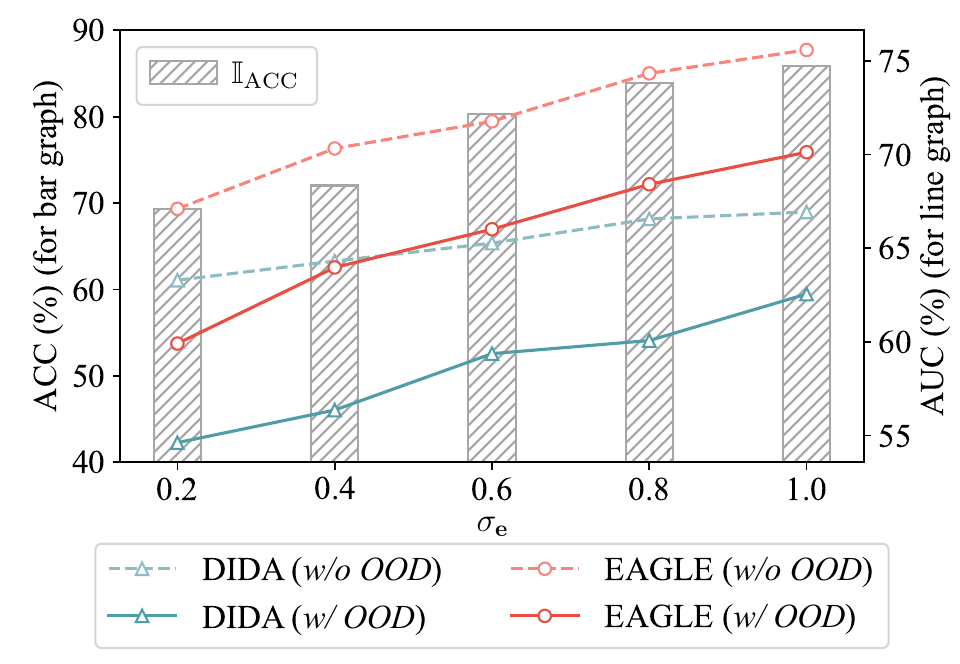}
\label{fig:syn_0.4}
}
\subfigure[$\bar{q}=$~0.6]{
\includegraphics[width=0.48\linewidth]{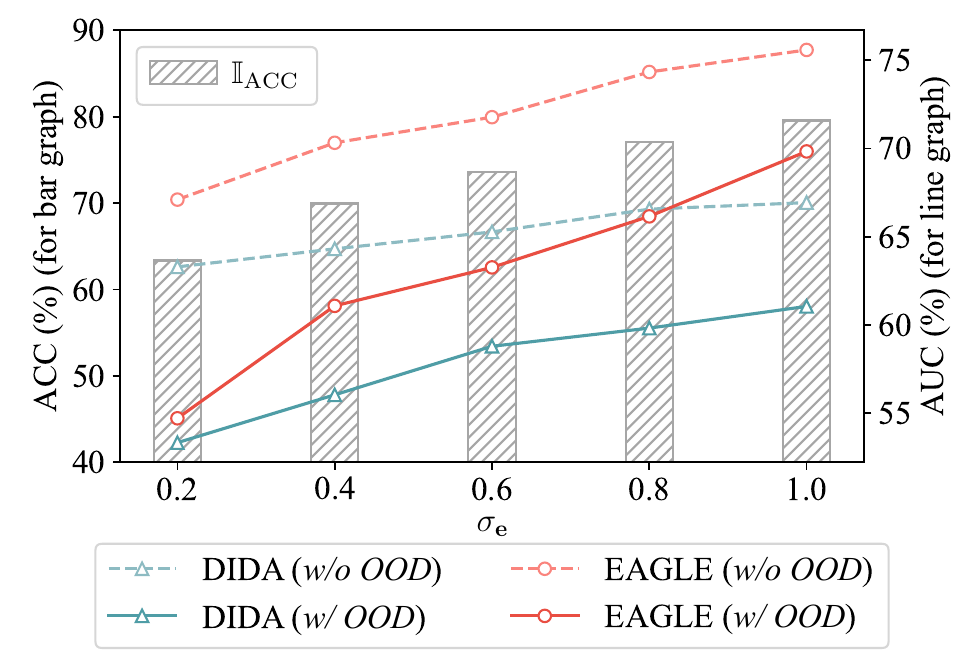}
\label{fig:syn_0.6}
}
\centering
\caption{Additional results on the effects of invariant pattern recognition.}
\label{fig:syn_add}
\end{figure}

\begin{table}[htbp]
\caption{AUC  score (\% ± standard deviation) of future link prediction task on synthetic datasets. \textit{w/o OOD} and \textit{w/ OOD} denote testing without and with distribution shifts. $\mathbb{I}_{\mathrm{ACC}}$ is reported in the accuracy score). The best results are shown in $\mathbf{bold}$ and the runner-ups are \underline{$\rm{underlined}$}.}
\label{haha}
\resizebox{\linewidth}{!}{
  \centering
  
    \begin{tabular}{c|c|c|cc|cc|cc}
    \toprule
    \multicolumn{2}{c|}{\textbf{Shift Degree}} & ---     & \multicolumn{2}{c|}{\textbf{$\bar{\boldsymbol{q}}=$~0.4}} & \multicolumn{2}{c|}{\textbf{$\bar{\boldsymbol{q}}=$~0.6}} & \multicolumn{2}{c}{\textbf{$\bar{\boldsymbol{q}}=$~0.8}} \\
    \midrule
    $\sigma_{\mathbf{e}}$     & \textbf{Model} & \textit{w/o OOD} & \textit{w/ OOD} & $\mathbb{I}_{\mathrm{ACC}}$  & \textit{w/ OOD} & $\mathbb{I}_{\mathrm{ACC}}$  & \textit{w/ OOD} & $\mathbb{I}_{\mathrm{ACC}}$  \\
    \midrule
    \multirow{2}[2]{*}{\textbf{0.2}} & DIDA~\cite{zhang2022dynamic}  & 63.29±0.35 & 54.62±0.92 & ---     & 53.33±1.01 & ---     & 52.87±1.28 & --- \\
          & \cellcolor[rgb]{ .91,  .91,  .91}\textbf{\modelname} & \cellcolor[rgb]{ .91,  .91,  .91}67.10±0.23 & \cellcolor[rgb]{ .91,  .91,  .91}59.91±1.18 & \cellcolor[rgb]{ .91,  .91,  .91}69.28±1.59 & \cellcolor[rgb]{ .91,  .91,  .91}54.71±1.29 & \cellcolor[rgb]{ .91,  .91,  .91}63.36±1.74 & \cellcolor[rgb]{ .91,  .91,  .91}54.26±1.31 & \cellcolor[rgb]{ .91,  .91,  .91}59.79±1.37 \\
    \midrule
    \multirow{2}[2]{*}{\textbf{0.4}} & DIDA~\cite{zhang2022dynamic}  & 64.31±0.34 & 56.36±0.98 & ---     & 56.04±1.13 & ---     & 55.89±1.24 & --- \\
          & \cellcolor[rgb]{ .91,  .91,  .91}\textbf{\modelname} & \cellcolor[rgb]{ .91,  .91,  .91}70.32±0.27 & \cellcolor[rgb]{ .91,  .91,  .91}63.97±0.82 & \cellcolor[rgb]{ .91,  .91,  .91}72.04±1.34 & \cellcolor[rgb]{ .91,  .91,  .91}61.08±1.02 & \cellcolor[rgb]{ .91,  .91,  .91}69.95±1.30 & \cellcolor[rgb]{ .91,  .91,  .91}58.40±1.12 & \cellcolor[rgb]{ .91,  .91,  .91}63.88±1.36 \\
    \midrule
    \multirow{2}[2]{*}{\textbf{0.6}} & DIDA~\cite{zhang2022dynamic}  & 65.27±0.41 & 59.37±0.87 & ---     & 58.79±0.97 & ---     & 57.35±1.19 & --- \\
          & \cellcolor[rgb]{ .91,  .91,  .91}\textbf{\modelname} & \cellcolor[rgb]{ .91,  .91,  .91}71.77±0.38 & \cellcolor[rgb]{ .91,  .91,  .91}66.01±0.74 & \cellcolor[rgb]{ .91,  .91,  .91}80.31±1.52 & \cellcolor[rgb]{ .91,  .91,  .91}63.26±0.64 & \cellcolor[rgb]{ .91,  .91,  .91}73.62±1.41 & \cellcolor[rgb]{ .91,  .91,  .91}63.11±0.78 & \cellcolor[rgb]{ .91,  .91,  .91}69.30±1.50 \\
    \midrule
    \multirow{2}[2]{*}{\textbf{0.8}} & DIDA~\cite{zhang2022dynamic}  & 66.56±0.39 & 60.07±0.89 & ---     & 59.82±1.05 & ---     & 59.20±1.03 & --- \\
          & \cellcolor[rgb]{ .91,  .91,  .91}\textbf{\modelname} & \cellcolor[rgb]{ .91,  .91,  .91}\underline{74.33±0.29} & \cellcolor[rgb]{ .91,  .91,  .91}\underline{68.41±0.72} & \cellcolor[rgb]{ .91,  .91,  .91}\underline{83.95±1.66} & \cellcolor[rgb]{ .91,  .91,  .91}\underline{66.15±0.69} & \cellcolor[rgb]{ .91,  .91,  .91}\underline{77.08±1.73} & \cellcolor[rgb]{ .91,  .91,  .91}\underline{65.82±0.81} & \cellcolor[rgb]{ .91,  .91,  .91}\underline{72.59±1.46} \\
    \midrule
    \multirow{2}[2]{*}{\textbf{1.0}} & DIDA~\cite{zhang2022dynamic}  & 66.93±0.18 & 62.55±0.85 & ---     & 61.05±0.93 & ---     & 60.33±1.17 & --- \\
          & \cellcolor[rgb]{ .91,  .91,  .91}\textbf{\modelname} & \cellcolor[rgb]{ .91,  .91,  .91}\textbf{75.58±0.40} & \cellcolor[rgb]{ .91,  .91,  .91}\textbf{70.12±0.91} & \cellcolor[rgb]{ .91,  .91,  .91}\textbf{85.83±1.54} & \cellcolor[rgb]{ .91,  .91,  .91}\textbf{69.83±0.93} & \cellcolor[rgb]{ .91,  .91,  .91}\textbf{79.56±1.65} & \cellcolor[rgb]{ .91,  .91,  .91}\textbf{68.09±0.97} & \cellcolor[rgb]{ .91,  .91,  .91}\textbf{74.16±1.21} \\
    \bottomrule
    \end{tabular}
}
\end{table}

\section{Implementation Details}\label{sec:imp}
\setcounter{table}{0}
\setcounter{footnote}{0}
\setcounter{figure}{0}
\setcounter{equation}{0}
\renewcommand{\thetable}{\ref*{sec:imp}.\arabic{table}}
\renewcommand{\thefigure}{\ref*{sec:imp}.\arabic{figure}}
\renewcommand{\theequation}{\ref*{sec:imp}.\arabic{equation}}

\subsection{Training and Evaluation}
\textbf{Training Settings. }The number of training epochs for optimizing our proposed method and all baselines is set to 1000. We adopt the early stopping strategy, \ie, stop training if the performance on the validation set does not improve for 50 epochs. For our \modelname, the hyperparameter $\alpha$ is chosen from $\{ \text{10}^{-{\text3}},\text{10}^{-\text{2}},\text{10}^{-\text{1}},\text{10}^{\text{0}},\text{10}^{\text{1}} \}$, and $\beta$ is chosen from $\{ \text{10}^{-\text{6}}, \text{10}^{-\text{5}}, \text{10}^{-\text{4}}, \text{10}^{-\text{3}}, \text{10}^{-\text{2}} \}$. The intervention ratio and the mixing ratio are carefully tuned for each dataset. For other parameters, we adopt the Adam optimizer~\cite{kingma2014adam} with an appropriate learning rate and weight decay for each dataset and adopt the grid search for the best performance using the validation split. All parameters are randomly initiated, which is especially important for $\mathbf{W}_k$ in Eq.~\eqref{eq:proj} that ensures the difference in each environment embedding space. The $K$ channels will still remain orthogonal during training as we conduct discrete environment disentangling iteratively. This helps the recognition of invariant/variant patterns mainly because we guarantee there is no overlap between environments.

\textbf{Evaluation. }According to respective experiment settings, we randomly split the dynamic datasets into training, validation, and testing chronological sets. We sample negative links from nodes that do not have links, and the negative links for validation and testing sets are kept the same for all baseline methods and ours. We set the number of positive links to the same as the negative links. We use the Area under the ROC Curve (AUC)~\cite{bradley1997use} as the evaluation metric. As we focus on the future link prediction task, we use the inner product of a pair of learned node representations to predict the occurrence of links, \ie, we implement the link predictor $g(\cdot)$ as the inner product of hidden embeddings, which is commonly applied in classic future link prediction tasks. The biased training technique is adopted following~\cite{cadene2019rubi}. We use the cross-entropy loss as the loss function $\ell(\cdot)$. The activation function is LeakyReLU~\cite{agarap2018deep}. We randomly run all the experiments five times, and report the average results with standard deviations.

\subsection{Baseline Implementation Details} 

We provide the baseline methods implementations with respective licenses as follows.
\begin{itemize}[leftmargin=1.5em]
    \item {GAE}~\cite{kipf2016variational}: \url{https://github.com/DaehanKim/vgae_pytorch} with MIT License.
    \item {VGAE}~\cite{kipf2016variational}: \url{https://github.com/DaehanKim/vgae_pytorch} with MIT License.
    \item {GCRN}~\cite{seo2018structured}: \url{https://github.com/youngjoo-epfl/gconvRNN} with MIT License.
    \item {EvolveGCN}~\cite{pareja2020evolvegcn}: \url{https://github.com/IBM/EvolveGCN} with Apache-2.0 License.
    \item {DySAT}~\cite{sankar2020dysat}: \url{https://github.com/FeiGSSS/DySAT_pytorch} with license unspecified.
    \item {IRM}~\cite{arjovsky2019invariant}: \url{https://github.com/facebookresearch/InvariantRiskMinimization} with CC BY-NC 4.0 License.
    \item {V-REx}~\cite{krueger2021out}: \url{https://github.com/capybaralet/REx_code_release} with license unspecified.
    \item {GroupDRO}~\cite{sagawa2019distributionally}: \url{https://github.com/kohpangwei/group_DRO} with MIT License.
    \item {DIDA}~\cite{zhang2022dynamic}: \url{https://github.com/wondergo2017/DIDA} with license unspecified.
\end{itemize}
The parameters of baseline methods are set as the suggested value in their papers or carefully tuned for fairness.

\subsection{Configurations}
\label{sec:configs}
We conduct the experiments with:
\begin{itemize}[leftmargin=1.5em]
    \item Operating System: Ubuntu 20.04 LTS.
    \item CPU: Intel(R) Xeon(R) Platinum 8358 CPU@2.60GHz with 1TB DDR4 of Memory.
    \item GPU: NVIDIA Tesla A100 SMX4 with 40GB of Memory.
    \item Software: CUDA 10.1, Python 3.8.12, PyTorch~\cite{paszke2019pytorch} 1.9.1, PyTorch Geometric~\cite{fey2019fast} 2.0.1.
\end{itemize}
\section{Further Discussions}\label{sec:further}
\setcounter{table}{0}
\setcounter{footnote}{0}
\setcounter{figure}{0}
\setcounter{equation}{0}
\renewcommand{\thetable}{\ref*{sec:further}.\arabic{table}}
\renewcommand{\thefigure}{\ref*{sec:further}.\arabic{figure}}
\renewcommand{\theequation}{\ref*{sec:further}.\arabic{equation}}

\subsection{Further Analysis on SCM Model}\label{sec:scm}

We provide further analysis of the intrinsic cause of the out-of-distribution shifts. From the causal-based theories~\cite{pearl2009causal, pearl2010causal,pearl2018book}, we formulate the generation process of static graphs and dynamic graphs with the Structural Causal Model (SCM)~\cite{pearl2009causal} in Figure~\ref{fig:scm_appendix}, where the arrow between variables denotes causal dependencies. It is widely accepted in the OOD generalization works~\cite{gagnon2022woods, arjovsky2019invariant, rosenfeld2020risks, wu2022discovering, chang2020invariant, ahuja2020invariant, mitrovic2020representation} that the correlations between labels and certain parts of the latent features are invariant across data distributions in training and testing, while the other parts of the features are variant. The invariant part is also called the causal part ($C$) and the variant part is also called the spurious part ($S$). 

\textbf{Qualitative Analysis. }In the SCM model on static graphs, $C$ \tikz[baseline=-0.7ex]{\draw[->] (0,0) -- (0.3,0);} $\mathbf{G}$  \tikz[baseline=-0.7ex]{\draw[<-] (0,0) -- (0.3,0);} $S$ demonstrates that the invariant part and variant part jointly decide the generation of the graphs, while $C$ \tikz[baseline=-0.7ex]{\draw[->] (0,0) -- (0.3,0);} $\mathbf{Y}$ denotes the label is solely determined by the causal part. However, there exists the spurious correlation $C$\tikz[baseline=-0.7ex]{\draw[dashed, <->] (0,0) -- (0.6,0);}$S$ in certain distributions that would lead to a backdoor causal path $S$ \tikz[baseline=-0.7ex]{\draw[dashed, ->] (0,0) -- (0.4,0);} $C$ \tikz[baseline=-0.7ex]{\draw[->] (0,0) -- (0.3,0);} $\mathbf{Y}$ so that the variant part and the label are correlated statistically. As the variant part changes in the testing distributions caused by different environments $\mathbf{e}$, the predictive patterns built on the spurious correlations expired. For the same reason, similar spurious correlation $C^t$\tikz[baseline=-0.7ex]{\draw[dashed, <->] (0,0) -- (0.6,0);}$S^t$ exists on dynamic graphs within a single graph snapshot, which opens the backdoor causal path $S^{t}$ \tikz[baseline=-0.7ex]{\draw[dashed, ->] (0,0) -- (0.4,0);} $C^{t}$ \tikz[baseline=-0.7ex]{\draw[->] (0,0) -- (0.3,0);} $\mathbf{Y}^{t}$. Especially, as we have captured the temporal dynamics between each graph snapshot, the variant part in the previous time slice may also establish spurious correlations with the invariant part at present time, \ie, $C^{t-1}$\tikz[baseline=-0.7ex]{\draw[dashed, <->] (0,0) -- (0.6,0);}$S^t$, leading to $S^{t-1}$ \tikz[baseline=-0.7ex]{\draw[dashed, ->] (0,0) -- (0.4,0);} $C^t$ \tikz[baseline=-0.7ex]{\draw[->] (0,0) -- (0.3,0);} $\mathbf{Y}^t$, which is a unique phenomenon in the dynamic scenarios. Hence, we propose to get rid of the spurious correlations within and between graph snapshots by investigating the latent environment variable $\mathbf{e}$, encouraging the model to rely on the spatio-temporal invariant patterns to make predictions, and thus handle the distribution shifts.

\textbf{Further Analysis of Assumption 1. }The causal inference theories~\cite{pearl2009causal, pearl2010causal, pearl2018book} propose to get rid of the spurious correlations by blocking the backdoor path with $do$-calculus, which would remove all causal dependencies on the intervened variables. Particularly, we intervene in the variant parts on all graph snapshots, \ie, $\mathrm{do}(S^{t})$, and thus the spurious correlations within and between graph snapshots can be filtered out. This encourages the two conditions in Assumption 1 to be satisfied: the Invariance Property will be satisfied if the spurious correlations $S^{t-1}$\tikz[baseline=-0.7ex]{\draw[dashed, <->] (0,0) -- (0.6,0);\draw[red] (0.2,0.1) -- (0.4,-0.1);\draw[red] (0.4,0.1) -- (0.2,-0.1);}$C^t$\tikz[baseline=-0.7ex]{\draw[dashed, <->] (0,0) -- (0.6,0);\draw[red] (0.2,0.1) -- (0.4,-0.1);\draw[red] (0.4,0.1) -- (0.2,-0.1);}$S^t$ are removed and the label will be solely decided by the invariant part $C^t$ \tikz[baseline=-0.7ex]{\draw[->] (0,0) -- (0.3,0);} $\mathbf{Y}^t$, which also satisfies the Sufficient Condition. In this case, we can minimize the variance of the empirical risks under diverse potential environments, while encouraging the model to make predictions of the spatio-temporal invariant patterns.

\textbf{Further Explanations of the Toy Example. }From the above analysis, we can further explain the toy example in Figure 1(a). The prediction model has captured the spurious correlations between ``coffee'' and the ``cold drink'', which caused the false prediction of buying an Iced Americano in the winter. By applying our environment-ware \modelname, the prediction model can perceive the environments of seasons through the neighbors around the central node, \ie, perceiving the winter season by learning the observed interactions between the user and the thick clothing. Thus encouraging the model to rely on the exploited spatio-temporal invariant patterns, \ie, ``the user buys coffee'', to make the correct prediction on the Hot Latte by considering underlying environments.

\begin{figure}[h]
    \centering
    \includegraphics[width=0.7\linewidth]{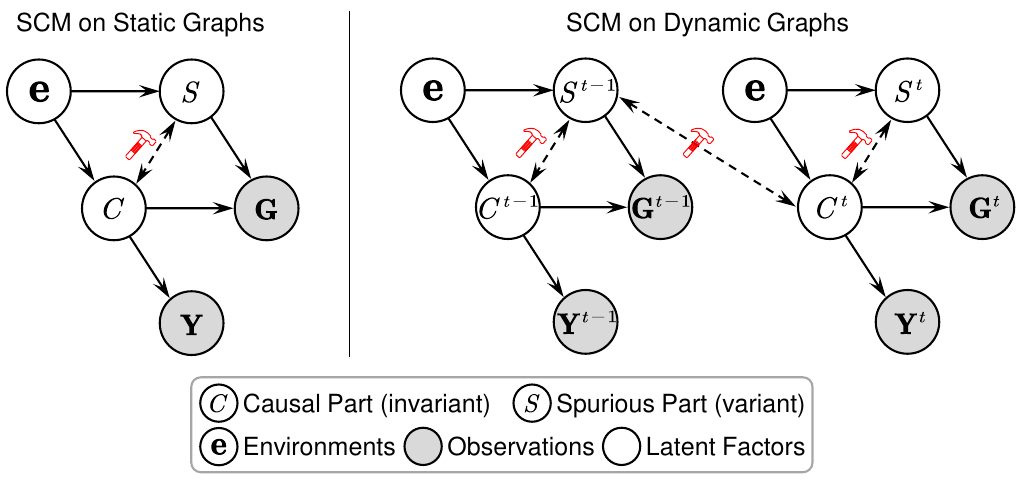}
    \caption{The SCM model on static graphs and dynamic graphs. }
    \label{fig:scm_appendix}
\end{figure}

\subsection{Further Understanding of Environments}
Environments on dynamic graphs are latent factors, where there are no accessible ground-truth environment labels in the real world, leading to the lack of explainability. In order to improve the explainability of the environment, and patterns the EA-DGNN learns, we have provided some real-world examples to make explanations (Section~\ref{sec:EIDyGNN}). The key insight lies that, the formation of real-world dynamic graphs typically follows a complex process under the impact of latent environments, causing the relationships to be multi-attribute. To model diverse spatio-temporal environments, the ego-graphs of each node need to be disentangled and processed in different embedding spaces.

\textbf{An Easy-to-Understand Example: the Social Networks.} The relationships between the central node and its neighbor nodes, which may be classmates, colleagues, \etc, are formed under the influence of different surrounding environments. For example, the relationship between classmates is formed in the ``school'' environment, which is a compound of ``classmates'', ``teachers'', ``staff'', \etc, and the relationship between colleagues is formed in the ``working'' environment, \etc~Multiple relationships are compounded into a single edge and change over time. In order to model such multiple environments, we propose multi-channel environments disentangling to represent different semantic relationships in $K$ embedding spaces. For example, the 1-st embedding represents the ``classmate'' relationship, and the 2-nd embedding space represents the ``colleague'' relationship, \etc~Thus, EA-DGNN realizes the perception of multiple surrounding environments and encodes spatio-temporal environment information into node representations.

\subsection{Further Understanding of the Multi-Label}
\textbf{Understanding.} We do not require ground-truth environment labels in our \modelname. The environments on dynamic graphs are latent factors, where there are no accessible ground-truth environment labels that have practical meanings in the real world. This is also why conventional OOD generalization baselines on images, text, \etc~have poor performance (Section~\ref{sec:rq1}) in graphs as they must rely on the ground-truth environment labels to generalize. In our work, the multi-label $\mathbf{y}$ in Section~\ref{sec:modeling} is mixed up with time index $t$ and environment index $k$, indicating which environment index under which time index $\mathbf{z}$ belongs. It can be seen as our inferred label of environments.

\textbf{Example.} If there exist $K$ environments and $T$ graph snapshots, we first initialize a zero matrix of the shape $K \times T$. Then we mark the value in position $(k,t)$ to be 1 and reshape the matrix into the 1-dimension vector $\mathbf{y}$, indicating the multi-label of $\mathbf{z}$ for the $k$-th environment at time $t$.

\textbf{Role.} The multi-labels are used with $\mathbf{z}$ to infer environments by ECVAE, where the multi-label is concatenated with its corresponding $\mathbf{z}$ to realize conditional variational inference. We can then instantiate environments by generating samples from the inferred distribution with a given one-hot multi-label. This can be regarded as the data augmentation of the environment samples under the guidance of the inferred prior distribution $q_\phi(\mathbf{e}\mid\mathbf{z},\mathbf{y})$, which helps improve the generalization ability.

\subsection{Further Understanding of Proposition~\ref{Prop:invariant}}
\textbf{Targets and Principles.} Proposition~\ref{Prop:invariant} provides a solution to obtain the optimal $\mathbb{I}^\star(\cdot)$ in Assumption~\ref{asm:invariant} with theoretical proof in Appendix~\ref{sec:proof_invariant}. In fact, Proposition~\ref{Prop:invariant} solves a dynamic programming problem by optimizing the state transition equation $\mathbb{I}(i,j)$. Given $\mathrm{Var}(\mathbf{z}_v^{\mathbf{e}\prime}) \in \mathbb{R}^{K}$ as the representation variance of node $v$ in each environment across times, the target is to find a partition dividing all environment patterns into invariant and variant types, so as to maximize the difference between the variance means. 

\textbf{A Conceptual Example.} If $\mathrm{Var}(\mathbf{z}_v^{\mathbf{e}\prime})$ of node $v$ is [0.1, 0.2, 0.3, 0.4, 0.9], then the optimal partition is [0.1, 0.2, 0.3, 0,4] and [0.9] (with a larger mean difference) rather than [0.1, 0.2, 0.3] and [0.4, 0.9]. An optimal partition can always be found for each node, which greatly helps the patterns discrimination and fine-grained causal interventions, improving the generalization ability, and is one of our main advantages compared with DIDA~\cite{zhang2022dynamic}. 

Proposition~\ref{Prop:invariant} intuitively illustrates a feasible implementation for the optimal $\mathbb{I}^\star(\cdot)$, providing an ideal optimizing start point. Note that, Proposition~\ref{Prop:invariant} itself cannot fully guarantee the global accuracy of identifying the $\mathbb{I}^\star(\cdot)$, but should optimize along with the $\mathcal{L}_{\mathrm{risk}}$ loss (Eq.\eqref{eq:intervloss}), which provides theoretical ensure. 

\subsection{Further Discussions Compared with DIDA}
The difference between our \modelname and DIDA~\cite{zhang2022dynamic}, and \modelname's main advantages are:
\begin{itemize}[leftmargin=1.5em]
    \item \textbf{Modeling Environments.} \modelname~is the first to explicitly model latent environments on dynamic graphs by variational inference. DIDA~\cite{zhang2022dynamic} neglects to model complex environments, which weakens its ability to identify invariant patterns.
    \item \textbf{Representation Learning.} \modelname~learns node embeddings by $K$-channel environments disentangling and spatio-temporal convolutions, which helps better understand multi-attribute relations. DIDA~\cite{zhang2022dynamic} learns with single channel convolutions with an attention mechanism. 
    \item \textbf{Invariant Learning.} \modelname~discriminates spatio-temporal invariant patterns by the theoretically supported $\mathbb{I}^\star(\cdot)$ for each node individually, leading to better removal of spurious correlations. DIDA~\cite{zhang2022dynamic} divides invariant/variant parts heuristicly with a minus operation for all nodes.
    \item \textbf{Causal Intervention.} \modelname~performs fine-grained causal interventions with both observed and generated environment samples, better minimizing the variance of extrapolation risks, and generalizing to unseen distributions better. DIDA~\cite{zhang2022dynamic} intervenes coarse-grainedly with only observed samples.
\end{itemize}

\subsection{Further Related Work}\label{sec:related_work}

\textbf{Dynamic Graph Learning.} Extensive research~\cite {roddick1999bibliography, atluri2018spatio} address the challenges of learning on dynamic graphs, which consist of multiple graph snapshots at different times. Dynamic graph neural networks (DGNNs) are widely adopted to learn dynamic graphs by intrinsically modeling both spatial and temporal patterns, which can be divided into two main categories: spatial-first methods and temporal-first methods. The spatial-first methods~\cite{yang2021discrete, hajiramezanali2019variational, seo2018structured} first adopt vanilla GNNs to model spatial patterns for each graph snapshot, followed by sequential-based models like RNNs~\cite{medsker2001recurrent} or LSTMs~\cite{hochreiter1997long}, to capture temporal relations. In comparison, temporal-first DGNNs~\cite{wang2021inductive, rossi2020temporal} model dynamics in advance with temporal encoding mechanisms~\cite{hu2020heterogeneous}, and then conduct convolutions of message-passing and aggregating on each single graph with GNNs. Dynamic graph learning has been widely utilized for prediction tasks like disease transmission prediction~\cite{kapoor2020examining}, dynamic recommender system~\cite{you2019hierarchical}, social relation prediction~\cite{wang2021tedic}, \etc~However, most existing works fail to generalize under distribution shifts.
DIDA~\cite{zhang2022dynamic} is the sole prior work that addresses distribution shifts on dynamic graphs with an intervention mechanism. But DIDA~\cite{zhang2022dynamic} neglects to model the complex environments on dynamic graphs, which is crucial in tackling distribution shifts. We also experimentally validate the advantage of our method compared with DIDA~\cite{zhang2022dynamic}. 

\textbf{Out-of-Distribution Generalization.} Most machine learning methods are built on the I.I.D. hypothesis, \ie, training and testing data follow the independent and identical distribution, which can hardly be satisfied in real-world scenarios~\cite{shen2021towards}, as the generation and collection process of data are affected by many latent factors~\cite{rojas2018invariant, arjovsky2019invariant}. The non-I.I.D. distribution results in a significant decline of model performance, highlighting the urgency to investigate generalized learning methods for out-of-distribution (OOD) shifts, especially for high-stake downstream applications, like autonomous driving~\cite{dai2018dark}, financial system~\cite{pareja2020evolvegcn}, \etc~OOD generalization has been extensively studied in both academia and industry covering various areas~\cite{shen2021towards, yuan2022towards, hendrycks2021many} and we mainly focus on OOD generalization on graphs~\cite{li2022out}. Most graph-targeted works concentrate on node-level or graph-level tasks on static graphs~\cite{zhu2021shift, fan2021generalizing, li2022ood, wu2022handling, li2022learning, chen2022learning, wu2022discovering}, targeting at solving the problems of graph OOD generalization for drugs, molecules, \etc, which is identified as one key challenge in AI for science (AI4Science). Another main category of works elaborates systematic benchmarks for graph OOD generalization evaluation~\cite{gui2022good, ding2021closer, ji2022drugood}. However, there lack of further research on dynamic graphs with more complicated shift patterns caused by spatio-temporal varying latent environments, which is our main concern.

\textbf{Invariant Learning.} Deep learning models tend to capture predictive correlations behind observed samples, while the learned patterns are not always consistent with in-the-wild extrapolation. Invariant learning aims to exploit the less variant patterns that lead to informative and discriminative representations for stable prediction~\cite{creager2021environment, li2021learning, zhao2019learning}. Supporting by disentangled learning theories and causal learning theories, invariant learning tackles the OOD generalization problem from a more theoretical perspective, revealing a promising power. Disentangle-based methods~\cite{bengio2013representation, locatello2019challenging} learn representations by separating semantic factors of variations in data, making it easier to distinguish invariant factors and establish reliable correlations. Causal-based methods~\cite{gagnon2022woods, arjovsky2019invariant, rosenfeld2020risks, wu2022discovering, chang2020invariant, ahuja2020invariant, mitrovic2020representation} utilize Structural Causal Model (SCM)~\cite{pearl2009causal} to filter out spurious correlations by intervention or counterfactual with $do$-calculus~\cite{pearl2010causal, pearl2018book} and strengthen the invariant causal patterns. However, the invariant learning method of node-level tasks on dynamic graphs is underexplored, mainly due to its complexity in analyzing both spatial and temporal invariant patterns.

\textbf{Disentangled Representation Learning.} Disentangled Representation Learning (DRL) is a paradigm that inspires a model with the capability to discriminate and disentangle latent factors intrinsic to the observable data. The fundamental objective of DRL is to separate latent factors of variation into distinct variables endowed with semantic significance. This helps to improve the generalization capacity, explainability, and robustness, \etc~in a wide range of scenarios. Following~\cite{wang2022disentangled}, we categorize existing DRL works into traditional statistical approaches, VAE-based approaches, GAN-based approaches, hierarchical approaches, and other methods. Specifically, applying DRL to graphs results in benefits and advantages in graph tasks. DisenGCN~\cite{ma2019disentangled}, FactorGCN~\cite{yang2020factorizable}, DGCL~\cite{li2021disentangled}, \etc~decompose the input graph into segments for disentangled representations, which inspire our work on environment disentangling.

\end{document}